\DeclareMathOperator{\arcosh}{arcosh}
\DeclareMathOperator{\artanh}{artanh}
\DeclareRobustCommand\onedot{\futurelet\@let@token\@onedot}
\def\@onedot{\ifx\@let@token.\else.\null\fi\xspace}
\def\ie{\emph{i.e}\onedot}
\def\etal{\emph{et al}\onedot}
\newtheorem{definition}{Definition}[section]
\newtheorem{proposition}{Proposition}[section]
\newtheorem{theorem}{Theorem}[section]
\newtheorem{corollary}{Corollary}[theorem]
\newtheorem{problem}{Problem}
\begin{document}

\begin{frontmatter}

%%  "Title of the Paper"
\title{Hyperbolic Optimal Transport}
% \title{Hyperbolic Optimal Transport\protect\thanksref{T1}}
% \thankstext{T1}{???}

\begin{aug}
%%  \author{\fnms{John} \snm{Smith}\thanksref{t2}\ead[label=e1]{smith@foo.com}\ead[label=e2,url]{www.foo.com}}
%%  \thankstext{t2}{The author is supported by ...}
%%  \address{line 1\\ line 2\\ \printead{e1}\\\printead{e2}}

\author{\fnms{Yan Bin} \snm{Ng}\ead[label=e1]{yanbng@cs.stonybrook.edu}}
\address{Stony Brook University\\\printead{e1}}
\author{\fnms{Xianfeng} \snm{Gu}\ead[label=e2]{gu@cs.stonybrook.edu}}
\address{Stony Brook University\\\printead{e2}}
%\and
%\author{\fnms{???} \snm{???}%
%        \ead[label=e3]{???}%
%        \ead[label=u1,url]{???}}
%\address{???\\\printead{e3}\\\printead{u1}}
%\thankstext{t2}{The author is supported by ...}
\end{aug}
%%  History:
%\received{\sday{3} \smonth{5} \syear{2024}}

\begin{abstract}
The optimal transport (OT) problem aims to find the most efficient mapping between two probability distributions under a given cost function, and has diverse applications in many fields such as machine learning, computer vision and computer graphics. 
However, existing methods for computing optimal transport maps are primarily developed for Euclidean spaces and the sphere. 
In this paper, we explore the problem of computing the optimal transport map in hyperbolic space, which naturally arises in contexts involving hierarchical data, networks, and multi-genus Riemann surfaces.
We propose a novel and efficient algorithm for computing the optimal transport map in hyperbolic space using a geometric variational technique by extending methods for Euclidean and spherical geometry to the hyperbolic setting. 
We also perform experiments on synthetic data and multi-genus surface models to validate the efficacy of the proposed method.
\end{abstract}

%\begin{keyword}[class=AMS]
%\kwd[Primary ]{}
%\kwd{}
%\kwd[; secondary ]{}
%\end{keyword}

%%  Upper case for every keyword
%\begin{keyword}
%\kwd{}
%\kwd{}
%\end{keyword}

%\tableofcontents

\end{frontmatter}

%%  The body
\section{Introduction}
\label{introduction} 

Optimal transport (OT) has emerged as a powerful mathematical framework with applications across various fields, including economics \cite{galichon2018optimal}, machine learning \cite{peyre2019computational, montesuma2024recent}, computer vision \cite{bonneel2023survey}, and engineering \cite{chen2021optimal}. 
It seeks to find the most efficient way of transporting one distribution of mass to another, minimizing the transportation cost according to a given cost function. 
While much of the theory and applications of optimal transport have been developed in Euclidean space or the sphere, there is a notable gap in the literature when it comes to computing the optimal transport map in hyperbolic space. This is particularly useful in contexts involving hierarchical data, networks, and multi-genus Riemann surfaces, where hyperbolic geometry naturally arises.
 
Hyperbolic geometry, a non-Euclidean geometry characterized by constant negative curvature, has found numerous applications in areas like network theory, computer vision, and machine learning, where data often resides on curved manifolds. 
Notably, hyperbolic geometry allows us to embed any finite tree into a finite hyperbolic space, such that distances are preserved approximately \cite{gromov1987hyperbolic}, making it an ideal tool to model hierarchical structures efficiently.
Additionally, according to the Uniformization theorem (Theorem 4.4.1 \cite{jost2006compact}), the universal cover of a compact Riemann surface is conformally equivalent to the sphere, the complex plane or the Poincaré disk. 
This implies that if we can compute the optimal transport map in hyperbolic space, we can also compute optimal transport maps on multi-genus surfaces.
However, the unique properties of hyperbolic space, such as exponential growth of distances and a rich structure of geodesics, pose significant challenges in extending classical OT theory to this setting. 

This paper investigates the problem of computing the optimal transport map in hyperbolic space.  
Formally, we define the semi-discrete hyperbolic optimal transportation problem on the $m$-dimensional hyperbolic space $\mathbb{H}^m$ as follows:

\begin{problem}[Semi-discrete Hyperbolic Optimal Transport Problem]
Let $ M = \mathbb{H}^m/\Gamma $ be an $ m $-dimensional compact hyperbolic manifold where $\Gamma$ is a group of isometries on $\mathbb{H}^m$, $ \mu $ be a probability measure on $ M $ with a continuous density function $ d\mu = f d\sigma_M $ with respect to the Riemannian measure $ \sigma_M $, and $ \nu = \sum_{i=1}^k \nu_i \delta_{p_i} $ be a discrete measure defined on the point set $ \{p_1, p_2, \dots, p_k\} \subset M $, satisfying $ \min_{1 \leq i \leq k} \nu_i > 0 $ and $ \mu(M) = \sum_{i=1}^k \nu_i $. 
Given a hyperbolic cost function $c: M \times M \to [0,\infty)$ defined as $c(x,y) = \ln \cosh d_M(x,y)$, where $d_M$ is the geodesic distance function (Riemannian metric function) of $M$, solve
\[
\inf \left\{ \int_{M} \ln \cosh d_M(x, T(x)) \, d\mu(x) : T_\# \mu = \nu  \right\}
\]
where $T_\# \mu$ is defined by $T_\# \mu(p_i) := \mu(T^{-1}(p_i)), \, \forall p_i \in M$, \ie $\mu(T^{-1}(p_i)) = \nu_i$.
\end{problem}

Our approach is based on \cite{zeng2022geometric} and extends results from Euclidean \cite{gu2013variational} and spherical \cite{cui2019spherical} optimal transport methods, adapting them to the hyperbolic setting by generalizing the classical Minkowski problem to the hyperbolic case using a geometric variational method based on Alexandrov's convex geometry theory \cite{alexandrov2005convex}. 

The geometric variational theory of optimal transport problems studies the intrinsic connection between the optimal transport problem in the semi-discrete format and the discrete Minkowski problem in convex differential geometry. 
In fact, there are two measures on a convex surface: the surface measure and the Gaussian curvature measure. 
Based on Alexandrov's theory, it can be shown that the geometry of the convex surface is determined by the optimal transport map between these two measures. Therefore, solving the optimal transport map can be reduced to the problem of constructing the convex surface.

The goal of this paper is to extend the geometric variational principles of optimal transport problems in Euclidean space and on the sphere \cite{gu2013variational, cui2019spherical}, to hyperbolic manifolds, using variational methods, convex differential geometry, and hyperbolic geometry theory.

We summarize the contributions of this work as follows:
\begin{itemize}
\item We formulate the hyperbolic optimal transport problem by generalizing the Euclidean and spherical semi-discrete optimal transport problem to the hyperbolic setting.
\item We propose a novel and efficient method to compute the optimal transport map in hyperbolic space via a geometric variational approach based on the Minkowski problem.
\item We evaluate the efficacy and efficiency of our proposed method by experiments on toy data and multi-genus surface models.
\end{itemize}

The rest of the paper is organized as follows: section 2 discusses a brief background on optimal transport theory in Euclidean space and key geometrical aspects of hyperbolic space relevant to optimal transport. Section 3 provides a review of current research on computing optimal transport maps and various applications using hyperbolic geometry. In section 4, we present our algorithms for computing the hyperbolic optimal transport map, followed by numerical experiments in section 5. Finally, we conclude with a discussion of future directions and open problems in this area.
\section{Background}
\label{background} 

\subsection{Optimal Transport}
In this section, we present the background of optimal transport theory that provides the theoretical framework for our work. More details can be found in \cite{santambrogio2015optimal}.

\subsubsection{Monge's Formulation}
Consider probability measures $\mu$ and $\nu$ defined on metric spaces $X$ and $Y$ respectively, and a cost function $c: X \times Y \to [0, \infty)$ defined on the product space $X \times Y$. The objective of Monge's formulation is to solve for a map $T$ that attains the infimum:

\begin{equation} 
\inf \left\{ \left. \int_X c(x,T(x)) \, \mathrm{d}\mu(x) \right| T_\#\mu = \nu \right\} ,
\label{eq:monge}
\end{equation} 

where $T_\#\mu$ denotes the pushforward of $\mu$ and is defined as $T_\#\mu(E) = \mu(T^{-1}(E))$ for every $E \subset Y$. The transportation map that achieves the infimum is called the optimal transport map. Monge's formulation is difficult to solve because of the measure preserving constraint $T_\#\mu = \nu$. In particular, it does not allow the splitting of mass from $X$ to $Y$ by the transportation map. For example, when $\mu$ is a Dirac measure and $\nu$ is not, then the problem does not have a solution.

\subsubsection{Kantorovich's Formulation}

Kantorovich \cite{kantorovich1948on} attempted to solve the problem by relaxing the constraint to allow splitting of mass. Kantorovich's formulation aims to find a transportation plan $\gamma$, defined as a probability measure on the product space $X \times Y$, to realize the following:
\begin{equation} 
\text{(KP)} \qquad \inf \left\{ \left. \int_{X \times Y} c(x,y) \, \mathrm{d}\gamma(x,y) \right| \gamma \in \Pi(\mu,\nu) \right\},
\label{eq:kantorovich}
\end{equation} 

where $\Pi(\mu,\nu)$ denotes the set of transportation plans satisfying the constraints $(\pi_x)_\# \gamma = \mu$ and $(\pi_y)_\# \gamma = \nu$ where $\pi_x$ and $\pi_y$ are projections from $X \times Y$ to $X$ and $Y$ respectively. In this formulation, $\gamma(A \times B)$ gives the amount of mass moving from $A$ to $B$ and allows for the splitting of mass. This is a relaxation of Monge's formulation.

\subsubsection{Dual Form}

Kantorovich's formulation is usually solved in its dual form. Let $\varphi: X \to \mathbb{R}$ and $\psi: Y \to \mathbb{R}$ be bounded and continuous functions. We consider the following optimization problem:
\begin{equation} 
\text{(DP)} \qquad \sup \left\{ \left. \int_X \varphi(x) \, \mathrm{d}\mu(x) + \int_Y \psi(y) \, \mathrm{d}\nu(y) \right| \varphi(x) + \psi(y) \leq c(x,y) \right\},
\label{eq:dual}
\end{equation}

where the supremum is taken over functions $\varphi \in L^1 (\mu)$ and $\psi \in L^1 (\nu)$. 

In order to see what the solution space of (DP) looks like, we introduce the concept of $c$-transforms.

\begin{definition}
Given a function $\chi: X \to \overline{\mathbb{R}}$, we define its $c$-transform $\chi^c: Y \to \overline{\mathbb{R}}$ as follows:
\begin{equation}
\chi^c(y) =  \inf_{x \in X} \{ c(x,y) - \chi(x) \}.
\end{equation}

Similarly, we define the $\overline{c}$-transform of $\zeta: Y \to \overline{\mathbb{R}}$ as:
\begin{equation}
\zeta^{\overline{c}}(x) =  \inf_{y \in Y} \{ c(x,y) - \zeta(y) \}.
\end{equation}
\end{definition}

We say that a function $\varphi$ defined on $X$ is $c$-concave if there exists $\zeta : Y \to \overline{\mathbb{R}}$ such that $\varphi = \zeta^{\overline{c}}$, denoted as $\varphi \in c - \text{conc}(X)\). Similarly, a function $\psi$ defined on $Y$ is $\overline{c}$-concave if there exists $\chi : X \to \overline{\mathbb{R}}$ such that $\psi = \chi^c$, denoted as $\psi \in \overline{c} - \text{conc}(Y)$.

We then have the existence result:
\begin{proposition}[Proposition 1.11 \cite{santambrogio2015optimal}]
Suppose that X and Y are compact and c is continuous. Then there exists a solution $(\varphi,\psi)$ of (DP) that has the form $\varphi \in c - \text{conc}(X), \psi \in \overline{c} - \text{conc}(Y)$ and $\psi = \varphi^c$. In particular,
\begin{equation}
\label{eq:dp-exist}
\begin{split}
\max \text{(DP)} &= \max_{\varphi \in c - \text{conc}(X)} \int_X \varphi d\mu + \int_Y \varphi^c d\nu \\
&= \max_{\psi \in \overline{c} - \text{conc}(Y)} \int_X \psi^{\overline{c}} d\mu + \int_Y \psi d\nu.
\end{split}
\end{equation}
\end{proposition}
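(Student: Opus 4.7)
My plan is to prove existence by the standard double $c$-transform argument combined with an Arzelà--Ascoli compactness argument. The key observation is that for any admissible pair $(\varphi,\psi)$ in (DP), passing to the double $c$-transform $(\varphi^{c\overline{c}},\varphi^{c})$ only improves the objective while producing a pair in $c\text{-conc}(X)\times \overline{c}\text{-conc}(Y)$. One can then restrict the sup to such structured pairs, and extract a limit along a maximizing sequence.

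First I would record the monotonicity of $c$-transforms under the constraint. If $(\varphi,\psi)$ is admissible, then $\psi(y)\leq c(x,y)-\varphi(x)$ for every $x$, hence $\psi\leq \varphi^{c}$ pointwise; replacing $\psi$ by $\varphi^{c}$ preserves admissibility and does not decrease $\int \psi\,d\nu$. Repeating the argument with $\varphi$ and $(\varphi^{c})^{\overline{c}}$ gives a pair in $c\text{-conc}(X)\times\overline{c}\text{-conc}(Y)$ with $\psi=\varphi^{c}$ (which is automatic since the triple transform equals the single transform), and with objective no smaller than the starting value. Thus
\[
\sup (\text{DP}) \;=\; \sup_{\varphi\in c\text{-conc}(X)} \int_X \varphi\, d\mu + \int_Y \varphi^c\, d\nu.
\]

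Second I would establish equi-continuity and a uniform bound for a maximizing sequence. Since $X\times Y$ is compact and $c$ is continuous, $c$ is uniformly continuous, with modulus $\omega$. Any $c$-concave $\varphi=\zeta^{\overline{c}}$ inherits the same modulus: from $\varphi(x)=\inf_y\{c(x,y)-\zeta(y)\}$ and the analogous expression at $x'$, a standard $\varepsilon$-optimization argument gives $|\varphi(x)-\varphi(x')|\leq \omega(d_X(x,x'))$, and likewise for $\overline{c}$-concave functions. Taking a maximizing sequence $(\varphi_n,\varphi_n^c)$ of structured pairs, I would normalize by replacing $\varphi_n$ with $\varphi_n-a_n$ and $\varphi_n^c$ with $\varphi_n^c+a_n$ for $a_n:=\min_X \varphi_n$ (this leaves the objective unchanged since $\mu(X)=\nu(Y)$); this bounds $\varphi_n$ uniformly above by $\max c$ and below by $0$, and similarly pins $\varphi_n^c$ in a fixed band. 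Equi-continuity plus uniform boundedness then allows Arzelà--Ascoli to produce a uniformly convergent subsequence $\varphi_n\to\varphi_\infty$ on $X$, and up to a further extraction $\varphi_n^c\to\psi_\infty$ on $Y$.

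Third, I would pass to the limit. Uniform convergence together with continuity of $c$ implies $\varphi_\infty+\psi_\infty\leq c$, so the limit pair is admissible; the functional is continuous with respect to uniform convergence on compact sets, so the supremum is attained. The class $c\text{-conc}(X)$ is closed under uniform convergence (the infimum defining the $\overline{c}$-transform commutes with uniform limits on compact sets), so $\varphi_\infty\in c\text{-conc}(X)$. Finally, one last application of the monotonicity observation shows that replacing $\psi_\infty$ by $\varphi_\infty^c$ can only increase the functional, so at the maximizer we may take $\psi=\varphi^c$, yielding both displayed equalities in \eqref{eq:dp-exist}.

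The main obstacle I expect is the equi-continuity/normalization step: one has to be careful that the $c$-concavity modulus argument genuinely produces a modulus independent of $\zeta$, and that the additive-constant normalization truly fits $\varphi_n$ into a compact set of the space of continuous functions without losing admissibility; once that is in hand, the rest is a routine Arzelà--Ascoli passage to the limit.
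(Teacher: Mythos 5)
The paper does not prove this proposition; it is quoted verbatim as background (Proposition 1.11 of the cited Santambrogio reference) and used as a black box. Your argument is correct and is essentially the standard proof of that cited result: the double $c$-transform improvement to restrict to pairs $(\varphi,\varphi^c)$, equicontinuity of $c$-concave functions inherited from the uniform continuity of $c$ on the compact product, normalization by an additive constant (harmless since $\mu$ and $\nu$ have equal total mass), and Arzel\`a--Ascoli to extract a maximizer. The only cosmetic slip is the claim that the normalized $\varphi_n$ is bounded above by $\max c$; with $\min_X\varphi_n=0$ the correct immediate bound is $\varphi_n\leq\omega(\operatorname{diam}X)$ from the shared modulus of continuity (or $\max c-\min_Y\varphi_n^c$ via the double transform), which serves the same purpose and does not affect the argument.
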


The next theorem shows that (KP) and (DP) are equivalent.
\begin{theorem}[Theorem 1.39 \cite{santambrogio2015optimal}]
\label{thm:duality}
Suppose that $X$ and $Y$ are Polish spaces and $c: X \times Y \to \mathbb{R}$ is uniformly continuous and bounded. Then (DP) admits a solution $(\varphi,\varphi^c)$ and max(DP) = min(KP).
\end{theorem}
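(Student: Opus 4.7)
The plan is to establish the two inequalities $\max(\mathrm{DP}) \leq \min(\mathrm{KP})$ and $\max(\mathrm{DP}) \geq \min(\mathrm{KP})$, then produce the canonical optimizer $(\varphi,\varphi^c)$ by a $c$-transform compactness argument. Weak duality is the easy half: for any admissible $\gamma \in \Pi(\mu,\nu)$ and any admissible pair $(\varphi,\psi)$ with $\varphi(x)+\psi(y) \leq c(x,y)$, integrating the pointwise bound against $\gamma$ and using the marginal constraints $(\pi_x)_\#\gamma = \mu$, $(\pi_y)_\#\gamma = \nu$ gives $\int_X \varphi\,d\mu + \int_Y \psi\,d\nu \leq \int_{X\times Y} c\,d\gamma$; taking sup over $(\varphi,\psi)$ on the left and inf over $\gamma$ on the right yields the inequality.

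The reverse inequality is the heart of the matter, and I would obtain it from the Fenchel-Rockafellar duality theorem applied on the Banach space $E = C_b(X \times Y)$. Define two proper convex functionals: $\Theta$ equals $0$ when $u(x,y) \geq -c(x,y)$ everywhere and $+\infty$ otherwise, while $\Xi$ equals $-\int \varphi\,d\mu - \int \psi\,d\nu$ when $u$ decomposes as $u(x,y) = \varphi(x) + \psi(y)$ for some $\varphi \in C_b(X)$, $\psi \in C_b(Y)$, and $+\infty$ otherwise. The constant function $u \equiv 1 + \|c\|_\infty$ lies in the interior of $\{u : u \geq -c\}$ with $\Theta$ continuous there and also sits in $\mathrm{dom}\,\Xi$, so the qualification condition holds. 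Identifying $E^*$ with a space of finite Borel measures via the Riesz representation theorem, one computes that $\Theta^*(-\pi)$ forces $\pi$ to be nonnegative and contributes $\int c\,d\pi$, while $\Xi^*(\pi)$ forces the marginals to equal $\mu$ and $\nu$; after the sign accounting, the left- and right-hand sides of the Fenchel-Rockafellar identity become $-\sup(\mathrm{DP})$ and $-\inf(\mathrm{KP})$ respectively, establishing equality.

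For the existence of a dual optimizer of the stated form, I would take any maximizing sequence $(\varphi_n,\psi_n)$ and replace it by the double-conjugate pair $(\varphi_n^{cc}, \varphi_n^c)$. By the definition of $c$-transform each replacement only increases the dual objective and automatically yields the canonical form $\varphi = \varphi^{cc}$, $\psi = \varphi^c$. Uniform continuity of $c$ transfers directly to equicontinuity of any family of $c$-concave functions, and subtracting a single well-chosen additive constant (which leaves the objective invariant because $\mu(X) = \nu(Y)$) renders the family uniformly bounded. Arzelà-Ascoli on a compact exhaustion, combined with the tightness of $\mu$ and $\nu$ on the Polish spaces $X$ and $Y$ (via Ulam's theorem), extracts a uniformly convergent subsequence whose limit is the desired maximizer.

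The principal obstacle is the strong duality step, specifically the explicit computation of both conjugates $\Theta^*$ and $\Xi^*$ in the dual of $C_b(X\times Y)$ and the careful sign bookkeeping required so that nonnegativity of the transport plan and the marginal constraints both emerge naturally from the Legendre transforms rather than having to be imposed by hand. A secondary technical hurdle is the passage to the limit in the existence argument when $X,Y$ are only Polish rather than compact, which is precisely why boundedness and uniform continuity of $c$ appear in the hypotheses.
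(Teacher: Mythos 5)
The paper does not actually prove this statement; it is quoted verbatim as Theorem 1.39 of Santambrogio's book, so the only comparison available is with the standard literature proofs. Your outline of weak duality and of the existence step (replacing a maximizing sequence by double $c$-transforms, using the common modulus of continuity inherited from the uniform continuity of $c$, normalizing by an additive constant, and applying Arzel\`a--Ascoli together with tightness of $\mu$ and $\nu$) is sound and is essentially how Santambrogio obtains the maximizer $(\varphi,\varphi^c)$ in the Polish setting.

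The genuine gap is in your strong-duality step. You propose to apply Fenchel--Rockafellar on $E=C_b(X\times Y)$ and to identify $E^*$ ``with a space of finite Borel measures via the Riesz representation theorem.'' This identification fails when $X\times Y$ is not compact: the dual of $C_b$ of a non-compact Polish space is the space of bounded, finitely additive regular set functions, which strictly contains the countably additive Borel measures. The element $\pi\in E^*$ produced by Fenchel--Rockafellar is therefore only finitely additive a priori, and it cannot be fed into $\min(\mathrm{KP})$ without an additional argument showing it is tight, hence countably additive. This is precisely why the textbook proofs (Villani's Theorem 1.3, and Santambrogio's own route through the compact case in Proposition 1.11 / Theorem 1.37 before Theorem 1.39) first establish duality for compact $X,Y$ --- where $C(X\times Y)^*$ genuinely is the Radon measures --- and then extend to Polish spaces by restricting $\mu,\nu$ to large compact sets and controlling the error using the boundedness and uniform continuity of $c$. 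You correctly sense that the non-compactness is a hurdle, but you locate it in the existence argument; the more serious obstruction sits inside the duality argument itself. To repair the proof, either run Fenchel--Rockafellar only in the compact case and then do the tightness/approximation step to pass to general Polish spaces, or supply the (nontrivial) argument that the optimal finitely additive $\pi$ with prescribed countably additive marginals is itself countably additive.
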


\subsubsection{Brenier's Theorem}
In the case of the quadratic cost function $c(x,y) = \frac{1}{2}|x - y|^2$, Brenier's Theorem \cite{brenier1991polar} claims the existence of an optimal transport map $T$ that can be written as the gradient of a convex function $u$, \ie  $T^*(x) = x - \nabla \varphi^*(x) = \nabla (\frac{x^2}{2} - \varphi^*(x)) = \nabla u(x)$. It can also be shown that if $\mu = f(x) \mathrm{d}x$ and $\nu = g(y) \mathrm{d}y$, then solving for the convex function $u$ is equivalent to solving the Monge-Ampère equation:

\begin{equation}
\det \left(D^2 u(x) \right) = \frac{f(x)}{g(\nabla u(x))}.
\label{eq:brenier}
\end{equation}

\subsubsection{Wasserstein Distance}
Wasserstein distance is used as a way to measure the distance between two probability distributions. 
Let $\mu$ and $\nu$ be probability measures defined on the metric space $X$ and $d$ be the metric on $X$. 
Then the Wasserstein $p$-distance between $\mu$ and $\nu$ for some $p \geq 1$ can be defined as:
\begin{equation}
W_p(\mu, \nu) = \inf_{\gamma \in \Pi(\mu,\nu)} \left\{ \int_{X \times X} d(x,y)^p \, \mathrm{d}\gamma \right\}^{1/p}.
\label{eq:wasserstein}
\end{equation}

For $p=1$, this is just the total cost of the optimal transport plan of Kantorovich's formulation with cost function $c(x,y) = d(x,y)$.

\subsection{Hyperbolic Geometry}
Hyperbolic geometry is a non-Euclidean geometry characterized by the rejection of the parallel postulate of Euclidean geometry, which states that through any point not on a given line, there exists exactly one parallel line to the given line. In contrast, hyperbolic geometry asserts that through any point not on a given line, there are infinitely many lines that do not intersect the given line. This leads to a fundamentally different structure and a rich mathematical framework, which has become important in various fields including mathematics, physics, and computer science.

The study of hyperbolic geometry can be approached through several models, which provide various perspectives and tools for visualizing and analyzing the geometry. Two commonly used models are the hyperboloid model, and the Poincaré disk model. Each of these offers different conceptualizations of the hyperbolic plane, yet they all share the same underlying geometry.

\subsubsection{Hyperboloid Model}
The hyperboloid model of hyperbolic geometry is closely tied to the geometry of Minkowski spacetime, which can be understood in terms of the Lorentzian inner product. To define the hyperboloid model, we first define the Lorentzian inner product, which serves as the foundation for the geometry of Minkowski space.

The Lorentzian inner product is defined as follows:
\begin{equation}
\langle \mathbf{x}, \mathbf{y} \rangle_H = x_1 y_1 + x_2 y_2 + \cdots + x_m y_m - x_{m+1} y_{m+1} \quad  \forall \mathbf{x}, \mathbf{y} \in \mathbb{R}^{m+1}
\label{eq:lorentzian}
\end{equation}

This inner product can take positive, negative, or zero values depending on the relationship between the vectors. A vector with a negative inner product with itself is referred to as time-like, a vector with a positive inner product is space-like, and a vector with zero inner product is light-like.

Minkowski spacetime in $m+1$ dimensions is the vector space $\mathbb{R}^{m+1}$ equipped with the Lorentzian inner product and is denoted as $\mathbb{R}^m_1$.
In ${m+1}$-dimensional Minkowski space, the future cone is a region that consists of all possible future-directed, time-like vectors emanating from the origin.
Formally, the future cone as be defined as:
\begin{equation}
C_f = \{ \mathbf{x} \in \mathbb{R}^m_1 : \langle \mathbf{x}, \mathbf{x} \rangle _H < 0, x_{m+1} > 0 \}
\label{eq:future-cone}
\end{equation}

The hyperboloid model can be described as a pseudo-sphere in Minkowski space-time and consists of the set of points on the upper sheet of a one-sheeted hyperboloid that lies within the future cone, as shown in Figure \ref{fig:hyperboloid}.

The $m$-dimensional hyperbolic space $\mathbb{H}^m$ is represented by the set of points on the hyperboloid given by:
\begin{equation}
\mathbb{H}^m = \{ \mathbf{x} \in \mathbb{R}^m_1 : \langle \mathbf{x}, \mathbf{x} \rangle _H = -1, x_{m+1} > 0 \}
\label{eq:hyperboloid}
\end{equation}

\begin{figure}[h]
\begin{center}
\fbox{\rule{0pt}{2in}
\includegraphics[width=0.8\linewidth]{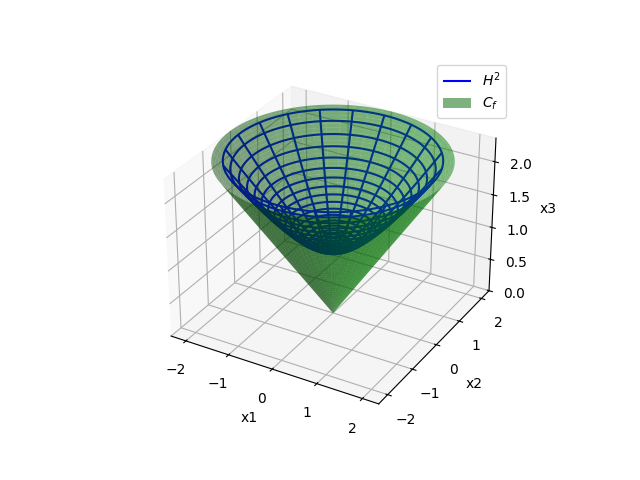}}
\end{center}
   \caption{Hyperboloid and Future cone}
\label{fig:hyperboloid}
\end{figure}

The tangent space is then given by:
\begin{equation}
T_\mathbf{x} \mathbb{H}^m = \{ \mathbf{v} \in \mathbb{R}^m_1 : \langle \mathbf{x}, \mathbf{v} \rangle _H = -1 \}
\label{eq:hyperboloid-tangent}
\end{equation}

The Riemannian metric on $\mathbb{H}^m$ can be defined by restricting Lorentzian inner product to the tangent space as follows:
\begin{equation} 
ds^2 = dx_1^2 + dx_2^2 + \cdots + dx_m^2 - dx_{m+1}^2   
\label{eq:hyperboloid-metric}
\end{equation}

To find the hyperbolic distance between two points on the hyperboloid, we use the Lorentzian inner product, which provides a measure of distance in the hyperboloid model.

\begin{equation} 
d_H(\mathbf{x}, \mathbf{y}) = \arcosh(-\langle \mathbf{x}, \mathbf{y} \rangle_H) \quad \forall \mathbf{x}, \mathbf{y} \in \mathbb{H}^m
\label{eq:hyperboloid-distance}
\end{equation}

The geodesics in the hyperboloid model, which are the equivalent of straight lines in hyperbolic space, correspond to intersections of the hyperboloid with planes passing through the origin of Minkowski space. These geodesics can be interpreted as segments of hyperbolas or straight lines in Minkowski space.

\subsubsection{Poincaré Disk Model}
The Poincaré Disk Model provides another representation of hyperbolic space within the unit disk in Euclidean space. This model is particularly effective for visualizing hyperbolic space in low-dimensional settings.

In the Poincaré disk model, the hyperbolic space is represented as the open unit disk $\mathbb{D}^m \subset \mathbb{R}^m$:
\begin{equation}  
\mathbb{D}^m = \{ \mathbf{x} \in \mathbb{R}^m : \| \mathbf{x} \| \leq 1 \} 
\label{eq:poincare}
\end{equation}

where $\|\mathbf{x}\|$ is the Euclidean norm.
The Poincaré disk model uses a Riemannian metric that differs from the Euclidean metric to reflect the intrinsic curvature of hyperbolic space. The metric is given by:
\begin{equation} 
ds^2 = \frac{4 \|d\mathbf{x}\|^2}{(1 - \|\mathbf{x}\|^2)^2}
\label{eq:poincare-metric}
\end{equation}

where $\|d\mathbf{x}\|^2$ is the usual Euclidean metric. 
The metric effectively scales distances near the boundary of the disk, reflecting the fact that, in hyperbolic geometry, points close to the boundary are infinitely far apart in terms of hyperbolic distance.

The geodesics on the Poincaré disk are represented by segments of Euclidean circles orthogonal to the boundary of the disk, or straight lines passing through the origin.

Given two distinct points $p$ and $q$ on the disk, we can find a unique geodesic connecting them that intersects the disk boundary at ideal points $a$ and $b$. 
The hyperbolic distance between $p$ and $q$ is then given by:
\begin{equation} 
d_H(p,q) = \ln \frac{|a-q||p-b|}{|a-p||q-b|}
\label{eq:poincare-distance}
\end{equation}

where $|\cdot|$ represents the Euclidean distance on the disk.

In the special case where one of the points is the origin and the Euclidean distance between the points is $r$, the hyperbolic distance can be written as:
\begin{equation} 
\ln \left( \frac{1 + r}{1 - r} \right) = 2 \artanh r
\label{eq:poincare-distance2}
\end{equation}

The Poincaré disk model is related to the hyperboloid model through a projection map onto the hyperplane $x_{m+1}=0$ by intersecting the hyperboloid with a line drawn through $(0,\cdots,0,-1)$, as shown in Figure \ref{fig:projection}.
Given Cartesian coordinates $(x_1, \cdots, x_m, x_{m+1})$ on the hyperboloid and $(y_1, \cdots, y_m)$ on the Poincaré disk, the relationship between the points are:
\begin{equation}
y_i = \frac{x_i}{1 + x_{m+1}} \qquad i = 1..m
\label{eq:poincare-hyperboloid}
\end{equation}

\begin{equation}
(x_i, x_{m+1}) = \frac{(2y_i, 1+\sum y_i^2)}{1-\sum y_i^2} \qquad i = 1..m
\label{eq:hyperboloid-poincare}
\end{equation}

The two models are equivalent in terms of their underlying hyperbolic geometry, and the transformation between them preserves the hyperbolic structure, but not the Euclidean distances.

\begin{figure}[h]
\begin{center}
\fbox{\rule{0pt}{2in}
\includegraphics[width=0.8\linewidth]{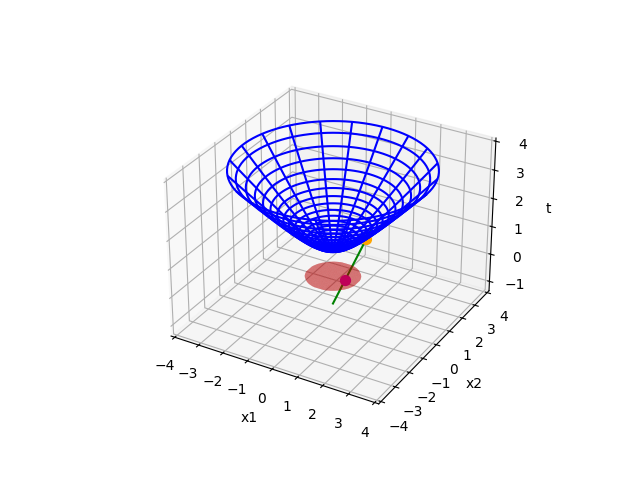}}
\end{center}
   \caption{Relating the Hyperboloid model and Poincaré disk}
\label{fig:projection}
\end{figure}

\section{Related Work}
\label{related-work}

\subsection{Optimal Transportation}
Traditional algorithms for computing optimal transport maps, such as those based on Kantorovich's theory \cite{kantorovich1948on, kantorovich2006problem}, Brenier's theorem \cite{brenier1991polar}, or the Benamou and Brenier numerical method \cite{benamou2000computational} have demonstrated excellent theoretical properties but are often computationally expensive, especially for high-dimensional or large-scale problems. 
Recent research has focused on developing novel algorithms and computational methods to make OT more feasible in practical applications. 
Note that there are several works that computes the OT cost or Wasserstein distance such as \cite{arjovsky2017wasserstein} and \cite{xie2020fast} but those do not compute the OT map explicitly and are not discussed in this section.

% entropic regularization
A key advancement in making OT computation more efficient is the introduction of entropic regularization to the classical optimal transport problem. 
This regularization allows the problem to be transformed into a form that can be solved using Sinkhorn-Knopp’s matrix scaling algorithm, significantly speeding up computation \cite{cuturi2013sinkhorn}.
Another similar approach is the work \cite{pooladian2023minimax} that leverages a computationally efficient estimator based on entropic optimal transport to estimate the OT map between two distributions. 
The inclusion of entropy regularization simplifies the computation and provides a robust solution in high-dimensional settings. 
However, this approach comes at the cost of precision and does not guarantee the existence of an OT map. 

% sliced OT
Another line of research to overcome the problem of computational complexity is sliced optimal transport, where high-dimensional optimal transport problems are reduced to solving 1D optimal transport problems over projections on random one dimensional lines \cite{rabin2011wasserstein, bonneel2015sliced, liutkus2019sliced}.
Unlike entropic regularization, this approach does not converge to the true OT problem, and the result does not converge to the true solution even as the number of projection increases. 
Thus, while this approach offers substantial improvements in computational efficiency, it does so at the expense of a reduction in accuracy.
%\cite{bai2023sliced}

% neural network
The use of neural networks to compute OT maps has recently gained attention as a way to circumvent the computational bottlenecks of classical methods. Makkuva \etal \cite{makkuva2020optimal} proposed a method to estimate the optimal transport map by solving the dual optimization problem using input convex neural networks (ICNNs) \cite{amos2017input}. 
Building on this work, Korotin \etal \cite{korotin2021wasserstein} extended this approach to non-minimax optimization, achieving better scaling and faster convergence rates. 
Rout \etal \cite{rout2021generative} proposed a neural-based method to compute the optimal transport map for generative modeling directly in the ambient space for the Wasserstein-2 cost.
Similarly, the work \cite{gushchin2023entropic} focuses on neural algorithms for computing the entropic OT plan between continuous probability distributions accessed by samples.
These neural network-based methods are becoming increasingly popular due to their flexibility and ability to generalize across various domains, but are limited by their poor expressiveness and scalability \cite{korotin2021neural}.

% convex geometry
An alternative approach to neural network-based method is the use of convex geometry and Alexandrov theory. AE-OT \cite{an2019ae} integrates an autoencoder to map an input image into a latent representation and then applies a geometric computational method based on \cite{gu2013variational} to compute the OT map from the Gaussian distribution to the latent distribution. 
This framework allows for the efficient computation of transport maps in settings where the underlying distributions are highly structured, such as in generative models. 
AE-OT-GAN \cite{an2020ae} extends this approach by incorporating Generative Adversarial Networks (GANs), resulting in improved quality of generated samples, and providing a powerful tool for image generation and transformation tasks.

% Non-Euclidean OT
While much of the research on OT focuses on Euclidean spaces, optimal transport on non-Euclidean spaces, specifically the sphere, has also been studied. 
The spherical OT problem has been addressed by several authors in \cite{cui2019spherical} who use a geometric variational approach to solve OT on the sphere, employing convex energy minimization techniques. 
This approach ensures that the solution respects the geometry of the sphere, leading to accurate transport maps in spherical coordinates. 
In a related study, a PDE-based approach for solving OT on the sphere was developed, focusing on squared geodesic and logarithmic costs \cite{hamfeldt2022convergence}. 
%This method uses generalized Monge-Ampère equations posed on the tangent plane to solve for the optimal transport map, providing a well-posed and stable numerical scheme.
Additionally, recent work by Quellmalz \etal \cite{quellmalz2023sliced} introduces novel transforms, such as the vertical slice transform and the semicircle transform, for optimal transport on the 2-sphere, leading to varied optimal transport solutions for spherical geometry.

While significant progress has been made in computing OT maps in Euclidean and spherical geometry, challenges remain in extending these methods to handle hyperbolic geometry while remaining efficient and scalable. 

\subsection{Hyperbolic Geometry}
Recently, hyperbolic geometry have emerged as a promising framework for handling hierarchical data, which is prevalent in many applications, such as natural language processing (NLP), computer vision, and graph-based data. 
Hyperbolic geometry offers advantages in representing tree-like structures over traditional Euclidean-based approaches, and several methods have been proposed for computing representations in this non-Euclidean space.

One of the foundational works in this area is \cite{nickel2017poincare} by Nickel \etal, which proposes embedding symbolic data into hyperbolic space, specifically the n-dimensional Poincaré ball, to capture both hierarchy and similarity. This embedding method is highly effective for learning hierarchical relationships, and it has been shown to outperform Euclidean embeddings in tasks requiring the preservation of hierarchical structure. 

In the domain of graph embeddings, hyperbolic geometry has also gained attention due to its ability to represent complex networks more naturally than Euclidean spaces, such as the Multi-Relational Poincaré model (MuRP) proposed by Balazevic \etal \cite{balazevic2019multi}. 
MuRP embeds multi-relational graph data into the Poincaré ball, allowing the model to capture multiple simultaneous hierarchies and outperform traditional Euclidean embeddings in tasks involving multi-relational data.

In addition to these neural embedding methods, Alvarez-Melis \etal \cite{alvarez2020unsupervised} introduced a novel approach for unsupervised hierarchy matching using OT over hyperbolic spaces. 
This method utilizes optimal transport in hyperbolic space to align hierarchical data, such as WordNet or ontologies, and outperforms traditional Euclidean-based alignment techniques. 

The integration of hyperbolic geometry with deep learning models has also been explored in the context of neural networks. Ganea \etal \cite{ganea2018hyperbolic} developed hyperbolic versions of deep learning tools, including logistic regression and neural networks, using the Möbius gyrovector space formalism and the Poincaré model. 
%These models enable the training of neural networks that operate directly in hyperbolic space, thus capturing the intrinsic hierarchical structures of the data. 
In the same vein, Khrulkov \etal \cite{khrulkov2020hyperbolic} demonstrated that hyperbolic image embeddings provide a better alternative to Euclidean embeddings for computer vision tasks such as image classification, retrieval, and few-shot learning. 
%The ability to model hierarchical structures in images has led to improved performance in tasks that require fine-grained visual similarity.

Further advancements in \cite{ermolov2022hyperbolic} by Ermolov \etal introduced a hyperbolic-based model for metric learning. 
By using a vision transformer with output embeddings mapped to hyperbolic space, this method enables better learning of spatial hierarchies in visual data. 

In the domain of multi-modal learning, Desai \etal \cite{desai2023hyperbolic} introduced MERU, a contrastive model that generates hyperbolic image-text representations to better capture the hierarchical relationships between visual and linguistic concepts. 
This method leverages the natural geometry of hyperbolic space to align visual and textual data in a shared space, outperforming Euclidean-based methods in tasks like image classification and image-text retrieval. 

Lastly, Chami \etal \cite{chami2020low} proposed a class of hyperbolic knowledge graph embedding models that combine hyperbolic reflections and rotations with attention mechanisms to simultaneously capture both hierarchical and logical patterns in knowledge graphs.

\section{Hyperbolic Optimal Transport}
\label{method}

\subsection{Hyperbolic Legendre Duality}
In this section, we first introduce the concepts and basic properties of Fuchsian convex bodies and Gauss curvature measures, then establish the theory of hyperbolic Legendre duality, and finally analyze the combinatorial structure of $\Gamma$-convex polyhedra and their duals.

\subsubsection{Fuchsian convex bodies and Gauss curvature measure}
Let $I(\mathbb{R}^m_1)$ be the isometry group of the $(m+1)$-dimensional Minkowski spacetime $\mathbb{R}^m_1$, which is the group of linear mappings that preserve the Lorentzian inner product. 
Let $I^+(\mathbb{R}^m_1)$ be a subgroup of $I(\mathbb{R}^m_1)$, where each element of the subgroup preserves the future light cone $C_f$. 
From Proposition A.2.4 in \cite{benedetti1992lectures}, we know that the isometry group $I(\mathbb{H}^m)$ of the $m$-dimensional hyperbolic space $\mathbb{H}^m$ is the restriction of $I^+(\mathbb{R}^m_1)$ to $\mathbb{H}^m$, so $I(\mathbb{H}^m) \cong I^+(\mathbb{R}^m_1)$. 
Let $\mathcal{F}$ be the set of all discrete, compact, free subgroups of $I^+(\mathbb{R}^m_1)$ acting on $\mathbb{H}^m$. Then, for any $\Gamma \in \mathcal{F}$, the quotient manifold $\mathbb{H}^m / \Gamma$ is a compact $m$-dimensional hyperbolic manifold.

\begin{definition}
\label{def:fuchsian}
Let $\mathcal{C}$ be a closed convex proper subset of the future cone $C_f$, and suppose there exists a subgroup $\Gamma \in \mathcal{F}$ such that $\Gamma \cdot \mathcal{C} = \mathcal{C}$. Then, $\mathcal{C}$ is called a Fuchsian convex body in the Minkowski spacetime $\mathbb{R}^m_1$. 

The boundary of a Fuchsian convex body is called the Fuchsian convex surface, denoted as $\partial \mathcal{C}$. 

A Fuchsian convex body that is invariant under the action of the group $\Gamma$ is called a $\Gamma$-convex body.
\end{definition}

A $\Gamma$-convex body $\mathcal{P}$ is called a $\Gamma$-convex polyhedron if there exist $k$ points $x_1, x_2, \dots, x_k \in \mathbb{H}^m$ (which are not pairwise collinear) and $k$ positive real numbers $\rho_1, \rho_2, \dots, \rho_k$ such that:
\begin{equation}
\label{eq:polyhedron}
\mathcal{P} = \left\{ z \in C_f \mid \langle z - \rho_i^{-1} \gamma x_i, \gamma x_i \rangle_H \leq 0, \ \forall 1 \leq i \leq k, \ \forall \gamma \in \Gamma \right\}.
\end{equation}

Fuchsian convex bodies are the generalization of convex bodies in Euclidean space to Minkowski space-time. Each pseudo-sphere $\mathbb{H}^m_r = \{ x \in \mathbb{R}^m_1 \mid \langle x, x \rangle_H = -r, x_{m+1} > 0 \}$ (where $r > 1$) is a Fuchsian convex body under the action of the group $I^+(\mathbb{R}^m_1)$. Given a finite set of points in the future cone $C_f$ and a subgroup $\Gamma \in \mathcal{F}$, the convex hull of the orbits of these points under the action of $\Gamma$ is a $\Gamma$-convex polyhedron.

\begin{figure}[h]
\begin{center}
\fbox{\rule{0pt}{2in}
\includegraphics[width=0.8\linewidth]{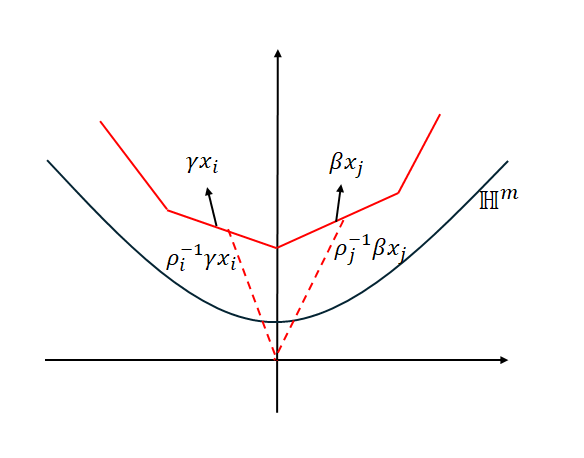}}
\end{center}
   \caption{$\Gamma$-convex polyhedron}
\label{fig:convex-polyhedron}
\end{figure}

Note that the supporting hyperplane of a set $A$ at a point $x \in A$ is a hyperplane $H$ that satisfies $x \in A \cap H$ and $A$ is entirely contained on one side of $H$. The following proposition shows that a Fuchsian convex body can be obtained as the upper envelope of all of its supporting hyperplanes.

\begin{proposition}[\cite{fillastre2013fuchsian} \cite{bertrand2014prescription}]
\label{prop:fuchsian}
Let $\mathcal{C}$ be a Fuchsian convex body. Then:
\begin{enumerate}
    \item $\mathcal{C}$ is not contained in any hyperplane with positive codimension.
    \item Every point on the boundary $\partial \mathcal{C}$ has a supporting hyperplane.
    \item All supporting hyperplanes of $\mathcal{C}$ are space-like.
    \item $\mathcal{C}$ is contained in the future side of any supporting hyperplane.
    \item For any $x \in \mathcal{C}$ and $\lambda \geq 1$, $\lambda x \in \mathcal{C}$.
\end{enumerate}
\end{proposition}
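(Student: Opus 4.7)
My plan is to exploit two complementary ingredients: the $\Gamma$-invariance of $\mathcal{C}$ together with cocompactness of the $\Gamma$-action on $\mathbb{H}^m$, and the ambient geometry of the open future cone $C_f$. Item (2) is the easiest: as $\mathcal{C}$ is a closed convex subset of $\mathbb{R}^{m+1}$, the classical supporting hyperplane theorem of Euclidean convex analysis applies verbatim at every boundary point.

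For item (1), I would argue by contradiction. If $\mathcal{C} \subset L$ for some affine subspace $L$ of codimension at least one, then $\mathcal{C} \subset L' := \bigcap_{\gamma \in \Gamma} \gamma L$, which is a $\Gamma$-invariant affine subspace. A cocompact $\Gamma \subset I^+(\mathbb{R}^m_1)$ is Zariski-dense in $SO^+(m,1)$, which acts irreducibly on $\mathbb{R}^{m+1}$; hence the only proper $\Gamma$-invariant affine subspace is the fixed origin $\{0\}$. But for any $x \in \mathcal{C}$ the orbit $\Gamma x \subset \mathcal{C}$ is infinite and discrete (the action is free), so $\mathcal{C}$ cannot collapse to a point, a contradiction.

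For items (3) and (4), I fix a boundary point $p \in \partial\mathcal{C}$ with supporting hyperplane $H$ and unit normal $n$. If $n$ were space-like or light-like, the tangent space to the Lorentzian shell $\mathbb{H}^m_r$ at $p$ (with $r = -\langle p, p\rangle_H$) would contain vectors with both signs of $\langle \cdot, n\rangle_H$, so that the shell meets both open sides of $H$ in open balls. The cocompact net $\Gamma p$ on this shell would then have points on both sides, but $\Gamma p \subset \mathcal{C}$ is forced onto a single side of $H$, a contradiction that establishes (3). Orienting $n$ to be future-directed gives $\langle p, n\rangle_H < 0$, and since $\mathcal{C}$ is unbounded into the future cone (again by $\Gamma$-invariance and freeness), the half-space containing $\mathcal{C}$ must be the future side $\{z : \langle z, n\rangle_H \leq \langle p, n\rangle_H\}$, yielding (4).

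Item (5) is then immediate from (4). Given $x \in \mathcal{C}$ and $\lambda \geq 1$, for any supporting hyperplane with future-directed time-like normal $n$ and contact point $p$, set $c = \langle p, n\rangle_H < 0$; then $\langle x, n\rangle_H \leq c$, and multiplying a negative quantity by $\lambda \geq 1$ preserves the inequality: $\langle \lambda x, n\rangle_H = \lambda \langle x, n\rangle_H \leq \lambda c \leq c$. Hence $\lambda x$ lies in every closed half-space containing $\mathcal{C}$, and so in $\mathcal{C}$ itself. The main obstacle I anticipate lies in the cocompactness arguments underpinning (1) and (3): one must precisely justify that a cocompact $\Gamma$-net on $\mathbb{H}^m_r$ populates both open regions cut out by any non-space-like hyperplane, which is equivalent to the structural fact that a cocompact lattice in $\mathrm{Isom}(\mathbb{H}^m)$ admits no proper invariant totally geodesic subspace — the key hyperbolic-geometric input that ties the various parts of the proof together.
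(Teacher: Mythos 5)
The paper itself offers no proof of this proposition --- it is imported from \cite{fillastre2013fuchsian} and \cite{bertrand2014prescription} --- so there is no in-paper argument to compare against; your outline in fact follows the same basic route as those references, namely exploiting the orbit $\Gamma p$ of a boundary point $p$, which lies on the pseudo-sphere $\mathbb{H}^m_r$ through $p$ and is an $R$-dense, infinite, discrete net there. Items (2), (4) and (5) are correct as written: (2) is the Euclidean supporting-hyperplane theorem; (4) works because, once $n$ is known to be future time-like, the past side of $H$ meets $C_f$ in a \emph{bounded} set (after a boost taking $n$ to $e_{m+1}$ it is $\{\,|z'| < z_{m+1} \leq -c\,\}$), whereas $\mathcal{C}\supset\Gamma p$ is unbounded --- this is the precise content behind your phrase ``unbounded into the future cone''; and (5) correctly combines (1)--(4) with the fact that a closed convex set with nonempty interior is the intersection of its supporting half-spaces (you only verify membership of $\lambda x$ in the supporting half-spaces, not in arbitrary half-spaces containing $\mathcal{C}$, but that is all that is needed).

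The one genuine gap is the step you flag in (3), and the repair you propose there is the wrong statement. Knowing that the shell meets both open sides of $H$ in nonempty open sets does not force the discrete orbit $\Gamma p$ onto both sides: an $R$-dense orbit can avoid a thin open sliver. Nor is the needed fact ``equivalent to the absence of a proper $\Gamma$-invariant totally geodesic subspace'' --- that is an invariance statement, while what you need is a coarse-density statement about where the orbit must go. The correct justification is: when $n$ is space-like or light-like, the trace of $H$ on the shell is a totally geodesic hyperplane, an equidistant hypersurface, or a horosphere, and \emph{each} of the two complementary regions contains hyperbolic balls of arbitrarily large radius (equivalently, each has nonempty open ideal boundary, while the limit set of the cocompact group $\Gamma$ is all of $\partial_\infty \mathbb{H}^m$); an $R$-dense orbit must therefore enter both regions, contradicting that $\mathcal{C}$ lies on one side of $H$. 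By contrast, when $n$ is time-like the ``wrong'' side meets the shell in a compact metric ball, which is why no contradiction arises there. The same coarse-density argument also disposes of (1) without invoking Borel--Zariski density (which is correct but much heavier than necessary): a level set $\{\langle \cdot,a\rangle_H=b\}$ on the shell is either compact or has complementary regions containing arbitrarily large balls, and in either case cannot contain the orbit. With these substitutions the proof is complete.
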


\begin{proposition} [Proposition 2.15 \cite{bertrand2014prescription}]
\label{prop:projection}
Let $\mathcal{C}$ be a Fuchsian convex body. Then the radial projection $p: \partial \mathcal{C} \to \mathbb{H}^m$ is a homeomorphism, where
\begin{equation}
\label{eq:projection}
\rho(x) = \sqrt{-\langle p^{-1}(x), p^{-1}(x) \rangle_H}, \quad \forall x \in \mathbb{H}^m.
\end{equation}
\end{proposition}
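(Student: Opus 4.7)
The plan is to define the candidate inverse $p^{-1}(x) := \rho(x)\,x$, where $\rho(x) > 0$ is the unique scalar such that $\rho(x)\,x \in \partial\mathcal{C}$, and then verify three things: (i) for each $x \in \mathbb{H}^m$ exactly one point of the open ray $\{t\,x : t>0\}$ lies in $\partial\mathcal{C}$, (ii) $p$ is continuous, and (iii) $\rho$ is continuous. Well-definedness of $p$ into $\mathbb{H}^m$ is automatic since $\partial\mathcal{C} \subset C_f$ consists of time-like vectors, so $p(z) = z/\sqrt{-\langle z,z\rangle_H}$ always lands in $\mathbb{H}^m$; continuity of $p$ is immediate because this map is smooth on $C_f$.

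For injectivity, \ie the uniqueness half of (i), I argue by contradiction using a supporting hyperplane. Assume $z_1, z_2 = \lambda z_1 \in \partial\mathcal{C}$ with $\lambda > 1$. By Proposition \ref{prop:fuchsian}, parts (2)--(4), there is a space-like supporting hyperplane $H'$ of $\mathcal{C}$ at $z_2$ with future-directed time-like unit normal $n'$, so $\mathcal{C} \subset \{z : \langle z, n'\rangle_H \leq c'\}$ where $c' := \langle z_2, n'\rangle_H < 0$ (since pairings of two future-directed time-like vectors are negative). Then $z_1 \in \mathcal{C}$ forces $\langle z_1, n'\rangle_H \leq c'$, while scaling gives $\langle z_1, n'\rangle_H = c'/\lambda > c'$, using $\lambda > 1$ and $c' < 0$. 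This contradiction establishes uniqueness on each ray.

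Surjectivity, the existence half of (i), is the main technical step. By Proposition \ref{prop:fuchsian}(1) the set $\mathcal{C}$ has nonempty interior, so $\Omega := p(\mathcal{C}^\circ)$ is an open, nonempty, $\Gamma$-invariant subset of $\mathbb{H}^m$; the goal is to upgrade this to $\Omega = \mathbb{H}^m$. I plan to exploit $\Gamma$-cocompactness on $\mathbb{H}^m$: if some $x_\star \in \partial\Omega$ existed, then $\rho(x) \to \infty$ along a sequence $x_n \to x_\star$ in $\Omega$, producing boundary points $z_n = \rho(x_n) x_n \in \partial\mathcal{C}$ escaping to infinity in the Lorentz norm; cocompactness, however, allows one to choose $\gamma_n \in \Gamma$ so that the $\Gamma$-translates $\gamma_n z_0$ of a fixed interior reference $z_0 \in \mathcal{C}^\circ$ radially project onto points converging to $x_\star$ while remaining on a fixed pseudosphere $\mathbb{H}^m_{r_0}$. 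Convexity of $\mathcal{C}$ then forces the segments joining $\gamma_n z_0$ to $z_n$ to lie in $\mathcal{C}$, and their radial projections fill a neighborhood of $x_\star$ in $\mathbb{H}^m$ once $z_n$ escapes far enough along the ray, contradicting $x_\star \notin \Omega$.

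Finally, continuity of $\rho$, and hence of $p^{-1}$, follows from closedness of $\partial\mathcal{C}$ combined with the uniqueness already established. Given $x_n \to x$, any accumulation value $t^\star$ of $\rho(x_n)$ yields $t^\star x = \lim \rho(x_n) x_n \in \partial\mathcal{C}$ by closedness, whence $t^\star = \rho(x)$ by uniqueness; local boundedness of $\rho(x_n)$ is guaranteed because $\rho$ is $\Gamma$-invariant and therefore descends to a function on the compact quotient $\mathbb{H}^m/\Gamma$, bounded above by the same cocompactness input used in the surjectivity argument. The anticipated main obstacle is the surjectivity step: a naive Hahn--Banach separation does not by itself rule out the existence of a ray that is $\Gamma$-invariantly missed by $\mathcal{C}$, so the convexity of $\mathcal{C}$ must be genuinely coupled with the dynamics of the cocompact $\Gamma$-action rather than applied only locally.
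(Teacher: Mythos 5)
The paper does not prove this proposition at all: it is imported verbatim as Proposition~2.15 of \cite{bertrand2014prescription}, so there is no in-paper argument to compare yours against. Judged on its own, your reconstruction is essentially the standard proof and its core steps are sound. The injectivity argument is clean and correct: a space-like supporting hyperplane at $z_2=\lambda z_1$ with future-directed normal $n'$ gives $\langle z_1,n'\rangle_H\le c'=\langle z_2,n'\rangle_H<0$, while scaling gives $\langle z_1,n'\rangle_H=c'/\lambda>c'$, and the sign bookkeeping (two future-directed time-like vectors pair negatively, and ``future side'' means $\langle z,n'\rangle_H\le c'$, consistent with items (3)--(4) of Proposition~\ref{prop:fuchsian}) checks out. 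The continuity argument for $\rho$ via closedness of $\partial\mathcal{C}$ plus uniqueness on rays is also fine, modulo two small omissions: you need a positive \emph{lower} bound on $\rho$ near $x$ (so the accumulation value $t^\star$ is not $0$), which follows from the space-like supporting hyperplane bounding $\mathcal{C}$ away from the origin on every ray, and the phrase ``descends to a function on the compact quotient'' does not by itself yield local boundedness --- the actual upper bound comes from the convex-hull construction below.

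The one step that is genuinely under-specified is surjectivity. As written, the contradiction mechanism is not established: if $x_\star\in\partial\Omega$, nothing you have said forces $\rho(x_n)\to\infty$ along $x_n\to x_\star$, and ``their radial projections fill a neighborhood of $x_\star$'' is asserted rather than derived. The clean route, which uses exactly the cocompactness you invoke but avoids the boundary-point contradiction, is this: fix $z_0\in\mathcal{C}$; the orbit $\Gamma z_0\subset\mathcal{C}$ projects to a net $\Gamma\, p(z_0)$ in $\mathbb{H}^m$ (cocompactness); a straight segment between two points of $C_f$ radially projects onto the geodesic arc between their projections (geodesics of $\mathbb{H}^m$ are intersections with $2$-planes through the origin), so $p(\mathrm{Conv}(\Gamma z_0))$ contains the geodesic convex hull of a net, which is all of $\mathbb{H}^m$. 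Hence every ray $\{tx:t>0\}$ meets $\mathcal{C}$; by item (5) of Proposition~\ref{prop:fuchsian} the hitting set is a closed half-line $[\rho(x),\infty)$ with $\rho(x)>0$, and $\rho(x)x\in\partial\mathcal{C}$. With that replacement the proof is complete and matches the argument in \cite{bertrand2014prescription}.
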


From Proposition \ref{prop:projection}, it follows that $p^{-1}(x) = \rho(x) x$. Therefore, the radial function $\rho$ is continuous on $\mathbb{H}^m$, and it is greater than zero and less than infinity. In addition, the radial function $\rho$ is invariant under the action of the group $\Gamma$, \ie
\begin{equation}
\label{eq:radial-invariance}
\rho(\gamma x) = \rho(x), \quad \forall x \in \mathbb{H}^m, \ \forall \gamma \in \Gamma.
\end{equation}

Let $\mathcal{C}$ be a $\Gamma$-convex body. We say that $y \in \mathbb{H}^m$ is the inward unit normal vector of $\mathcal{C}$ at the point $z_0 \in \partial \mathcal{C}$ if $y$ is the normal vector of a supporting hyperplane at $z_0$. From Proposition \ref{prop:fuchsian}, we know that for $z_0 = \rho(x_0) x_0 \in \partial \mathcal{C}$, $\langle z - z_0, y \rangle_H \leq 0, \ \forall z \in \partial \mathcal{C}$,
which implies that
\begin{equation}
\rho(x_0) \langle x_0, y \rangle_H \geq \rho(x) \langle x, y \rangle_H, \quad \forall x \in \mathbb{H}^m.
\end{equation}

Conversely, if the above inequality holds, then the hyperplane
\begin{equation}
\label{eq:hyperplane}
\left\{\frac{\langle x_0, y \rangle_H}{\langle x, y \rangle_H} \rho(x_0) x \mid x \in \mathbb{H}^m \right\}
\end{equation}

is the supporting hyperplane at $z_0 = \rho(x_0) x_0$ with the inward unit normal vector $y$.

Based on the above, we define the set of subnormal vectors for a point $x \in \mathbb{H}^m$ as the set of all inward unit normal vectors corresponding to $z = \rho(x) x$.

\begin{definition}
\label{def:subnormal}
Let $\mathcal{C}$ be a $\Gamma$-convex body, and $\partial \mathcal{C} = \{ \rho(x) x : x \in \mathbb{H}^m \}$. The set of subnormal vectors for a point $x \in \mathbb{H}^m$ is defined as
\begin{equation}
\label{eq:subnormal}
\partial \rho(x) = \{ y \in \mathbb{H}^m \mid \rho(x) \langle x, y \rangle_H \geq \rho(z) \langle z, y \rangle_H, \quad \forall z \in \mathbb{H}^m \}.
\end{equation}

The Gauss map of $\mathcal{C}$ is defined as the multivalued map $\mathcal{G} = \partial \rho \circ p: \partial \mathcal{C} \to \mathbb{H}^m$, that is,
\begin{equation}
\label{eq:gauss-map}
\mathcal{G}(z) = \partial \rho(x), \quad \forall z = \rho(x) x \in \partial \mathcal{C}.
\end{equation}
\end{definition}

Note that $\langle \rho(z) z - \rho(x) x, y \rangle_H \leq 0$ if and only if $\langle \rho(z) \gamma z - \rho(x) \gamma x, \gamma y \rangle_H \leq 0, \quad \forall \gamma \in \Gamma$. Therefore, $\partial \rho(\gamma x) = \gamma \partial \rho(x)$. Thus, we have the following proposition.

\begin{proposition}[Lemma 2.19 \cite{bertrand2014prescription}]
\label{prop:gauss-curvature}
Let $\mathcal{C}$ be a $\Gamma$-convex body. For any Borel set $U \subset \partial \mathcal{C}$, both $p(U)$ and $\mathcal{G}(U)$ are Borel subsets of $\mathbb{H}^m$. 

Moreover, the map $\pi_{\Gamma} \circ \partial \rho: \mathbb{H}^m \to \mathbb{H}^m / \Gamma$ is well-defined and invariant under the action of the group $\Gamma$, where $\pi_{\Gamma}: \mathbb{H}^m \to \mathbb{H}^m / \Gamma$ is the covering map. 

Therefore, $\pi_{\Gamma} \circ \partial \rho$ induces a map $\mathcal{G}_{\Gamma}: \mathbb{H}^m / \Gamma \to \mathbb{H}^m / \Gamma$ on the hyperbolic manifold $\mathbb{H}^m / \Gamma$, such that if $U \subset \mathbb{H}^m / \Gamma$ is a Borel set, then $\mathcal{G}_{\Gamma}(U)$ is also a Borel set.
\end{proposition}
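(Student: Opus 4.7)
The plan is to prove the three assertions in order: Borel measurability of $p(U)$ and $\mathcal{G}(U)$ for Borel $U \subset \partial \mathcal{C}$; $\Gamma$-equivariance and descent of $\pi_\Gamma \circ \partial \rho$ to the quotient; and Borel measurability of $\mathcal{G}_\Gamma(U)$ on $\mathbb{H}^m/\Gamma$.

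For the image $p(U)$, I would appeal directly to Proposition \ref{prop:projection}: since $p$ is a homeomorphism, it preserves the Borel $\sigma$-algebra in both directions, so $p(U)$ is Borel. For the set-valued map $\mathcal{G} = \partial \rho \circ p$, I would first check that its graph
\[
\Sigma = \{(z, y) \in \partial \mathcal{C} \times \mathbb{H}^m : y \in \mathcal{G}(z)\}
\]
is closed. The defining condition $\rho(p(z)) \langle p(z), y\rangle_H \geq \rho(w) \langle w, y\rangle_H$ for all $w \in \mathbb{H}^m$ is preserved under limits because $\rho$ is continuous (Proposition \ref{prop:projection}) and the Lorentzian pairing is bilinear. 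Since each $\mathcal{G}(z)$ is compact, this yields upper semicontinuity with compact values, hence $\mathcal{G}$ sends compact sets to compact sets. As $\partial \mathcal{C}$ is $\sigma$-compact (it is the $\Gamma$-orbit of a compact fundamental domain), the image under $\mathcal{G}$ of any closed subset of $\partial \mathcal{C}$ is $F_\sigma$ and hence Borel. For a general Borel $U$, I would exploit the fact that $\partial \rho$ is single-valued outside the non-smooth locus $S$ of the convex surface, which in local Euclidean charts has codimension at least one and is Borel; on $\partial \mathcal{C} \setminus S$ the map $\mathcal{G}$ is an honest continuous function of a Polish space, to which the standard ``continuous image of Borel is analytic, and single-valued continuous on a Borel domain is Borel'' machinery applies, and the image of $U \cap S$ is handled by $\sigma$-compactness.

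For the second assertion, I would invoke the identity $\partial \rho(\gamma x) = \gamma\, \partial \rho(x)$ noted in the paragraph preceding the proposition, which follows from the $\Gamma$-invariance of $\rho$ (equation \ref{eq:radial-invariance}) and the Lorentz-equivariance of $\langle \cdot, \cdot \rangle_H$. Composing with $\pi_\Gamma$ gives $\pi_\Gamma(\partial \rho(\gamma x)) = \pi_\Gamma(\gamma\, \partial \rho(x)) = \pi_\Gamma(\partial \rho(x))$, so $\pi_\Gamma \circ \partial \rho$ is $\Gamma$-invariant and factors through the quotient by its universal property, yielding $\mathcal{G}_\Gamma$. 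For the third assertion, given a Borel $U \subset \mathbb{H}^m/\Gamma$, its preimage $\tilde U = \pi_\Gamma^{-1}(U)$ is Borel by continuity of $\pi_\Gamma$; Proposition \ref{prop:projection} makes $p^{-1}(\tilde U) \subset \partial \mathcal{C}$ Borel, so by the first part $\partial \rho(\tilde U) = \mathcal{G}(p^{-1}(\tilde U))$ is Borel in $\mathbb{H}^m$. Because $\Gamma$ acts by homeomorphisms, $\pi_\Gamma$ is open, so $\mathcal{G}_\Gamma(U) = \pi_\Gamma(\partial \rho(\tilde U))$ is Borel in $\mathbb{H}^m/\Gamma$.

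The main obstacle is the first part: because $\mathcal{G}$ is genuinely set-valued, the naive ``Borel image under a continuous map'' argument does not apply, and projecting Borel subsets of $\Sigma$ onto the second factor only yields analytic sets a priori. The delicate step is therefore to isolate and control the non-smooth locus of the Fuchsian convex surface---verifying it is Borel of strictly smaller dimension---so that $\mathcal{G}$ splits into a single-valued continuous part and a small exceptional piece amenable to $\sigma$-compactness. I would lean on the standard theory of singular sets of convex hypersurfaces, transferred to $\partial \mathcal{C}$ via local affine charts provided by Proposition \ref{prop:projection}, to make this reduction rigorous without introducing new machinery.
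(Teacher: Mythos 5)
You should know at the outset that the paper contains no proof of this statement to compare against: it is imported verbatim as Lemma 2.19 of \cite{bertrand2014prescription}, with only the preceding one-line remark $\partial\rho(\gamma x)=\gamma\,\partial\rho(x)$ offered in support of the equivariance claim. Judged on its own merits, your handling of $p(U)$ (via the homeomorphism of Proposition \ref{prop:projection}), your closed-graph argument that $\mathcal{G}$ is upper semicontinuous with compact values --- hence carries compact sets to compact sets and $\sigma$-compact sets to $F_\sigma$ sets --- and your equivariance/descent argument are all sound and are the standard route for those pieces.

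The genuine gap is the passage from closed sets to arbitrary Borel sets. You decompose the \emph{source} $\partial\mathcal{C}$ into its smooth locus and the singular set $S$, and invoke the principle that a ``single-valued continuous map on a Borel domain'' sends Borel sets to Borel sets. That principle is false: continuous images of Borel sets are only analytic, and they are Borel in general only under injectivity (Lusin--Suslin) or a small-fiber hypothesis. Here $\mathcal{G}$ restricted to the smooth locus is badly non-injective --- an entire flat face of $\partial\mathcal{C}$, the generic situation for the $\Gamma$-convex polyhedra used throughout the paper, collapses to one normal vector. The exceptional piece fails too: $U\cap S$ is Borel but need not be $F_\sigma$, so $\sigma$-compactness of $S$ does not control $\mathcal{G}(U\cap S)$, and the image of the singular locus is not negligible (the normal cone at a single vertex already has nonempty interior in $\mathbb{H}^m$). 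The standard repair decomposes the \emph{target} instead: let $R\subset\mathbb{H}^m$ be the set of normals attained at a unique boundary point; its complement is $F_\sigma$ and $\sigma_{\mathbb{H}^m}$-negligible, $R$ is Borel, the contact map $F:R\to\mathbb{H}^m$ is continuous, and $\mathcal{G}(U)\cap R=F^{-1}(p(U))$ is Borel because it is a \emph{preimage}. This yields measurability of $\mathcal{G}(U)$ for the completed measure, which is what the definition of $\mu_{\mathcal{C}}$ actually requires; genuine Borel-ness of $\mathcal{G}(U)\setminus R$ needs a further argument that your proposal does not supply. Finally, your last step leans on ``$\pi_\Gamma$ is open, hence images of Borel sets are Borel,'' which is again not a valid implication; the clean fix is to note that $\partial\rho(\tilde U)$ is $\Gamma$-invariant, so $\mathcal{G}_\Gamma(U)$ is Borel in the quotient precisely because its full preimage $\partial\rho(\tilde U)$ is.
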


In order to define Gauss curvature measure, we also require the following proposition.
\begin{proposition}[Lemma 2.21 \cite{bertrand2014prescription}]
Let $C$ be a Fuchsian convex set and $\Gamma$ its related subgroup of isometries. Then, there exists a unique canonical Borel measure $\sigma_{\mathbb{H}^m / \Gamma}$ on $\mathbb{H}^m / \Gamma$, and its total mass equals $\sigma_{\mathbb{H}^m}(D)$, where $\sigma_{\mathbb{H}^m}$ is the Riemannian measure on $\mathbb{H}^m$ and $D$ is any convex, locally finite, fundamental domain for $\Gamma$. Subsequently, $\sigma_{\mathbb{H}^m}(D)$ is denoted by $\text{Vol}(\mathbb{H}^m / \Gamma)$.
\end{proposition}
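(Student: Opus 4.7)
The plan is to construct the measure $\sigma_{\mathbb{H}^m/\Gamma}$ by descending the Riemannian measure $\sigma_{\mathbb{H}^m}$ through the quotient map, then verify uniqueness and compute the total mass via a fundamental domain. Since $\Gamma \in \mathcal{F}$ acts discretely, freely and by isometries on $\mathbb{H}^m$, the action is properly discontinuous, so the covering map $\pi_\Gamma: \mathbb{H}^m \to \mathbb{H}^m/\Gamma$ is a local diffeomorphism and the hyperbolic Riemannian metric descends unambiguously to $\mathbb{H}^m/\Gamma$. First I would define $\sigma_{\mathbb{H}^m/\Gamma}$ as the Riemannian measure associated with this induced metric; equivalently, for a Borel set $U \subset \mathbb{H}^m/\Gamma$ contained in a sufficiently small evenly covered neighborhood, set $\sigma_{\mathbb{H}^m/\Gamma}(U) := \sigma_{\mathbb{H}^m}(V)$ where $V$ is any connected component of $\pi_\Gamma^{-1}(U)$, and extend globally by countable additivity using a partition of $\mathbb{H}^m/\Gamma$ into such neighborhoods. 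The $\Gamma$-invariance of $\sigma_{\mathbb{H}^m}$ (since $\Gamma$ acts by isometries) guarantees that this definition does not depend on the component chosen.

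Next I would handle uniqueness. Any Borel measure on $\mathbb{H}^m/\Gamma$ whose pullback to small evenly covered charts coincides with $\sigma_{\mathbb{H}^m}$ must agree with $\sigma_{\mathbb{H}^m/\Gamma}$ on a generating $\pi$-system of Borel sets, hence everywhere by the $\pi$-$\lambda$ theorem. This makes $\sigma_{\mathbb{H}^m/\Gamma}$ the canonical measure compatible with the local isometry structure of $\pi_\Gamma$.

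To compute the total mass, I would fix a convex, locally finite fundamental domain $D$ for $\Gamma$. By the defining properties of a fundamental domain, the translates $\{\gamma D\}_{\gamma \in \Gamma}$ cover $\mathbb{H}^m$, the interiors $\{\gamma \operatorname{int}(D)\}$ are pairwise disjoint, and $\pi_\Gamma$ restricted to $\operatorname{int}(D)$ is injective with image of full measure in $\mathbb{H}^m/\Gamma$. The key measure-theoretic input is that the boundary $\partial D$ has vanishing Riemannian measure: convexity of $D$ in the hyperbolic sense forces $\partial D$ to be a locally finite union of portions of totally geodesic hypersurfaces, each of which has $\sigma_{\mathbb{H}^m}$-measure zero. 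Combining these facts gives
\begin{equation}
\sigma_{\mathbb{H}^m/\Gamma}(\mathbb{H}^m/\Gamma) = \sigma_{\mathbb{H}^m/\Gamma}(\pi_\Gamma(\operatorname{int}(D))) = \sigma_{\mathbb{H}^m}(\operatorname{int}(D)) = \sigma_{\mathbb{H}^m}(D).
\end{equation}
Independence of $D$ then follows from this identity (both sides equal the intrinsic total mass), or directly by comparing two fundamental domains $D_1, D_2$: decomposing each into pieces indexed by $\gamma \in \Gamma$ via $D_1 \cap \gamma D_2$ and translating by $\gamma^{-1}$ shows $\sigma_{\mathbb{H}^m}(D_1) = \sigma_{\mathbb{H}^m}(D_2)$.

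The main obstacle is the measure-theoretic bookkeeping at the boundary of the fundamental domain: one must ensure that the overlaps $\gamma_1 D \cap \gamma_2 D$ for $\gamma_1 \neq \gamma_2$, and the identifications these induce on $\partial D$ under $\pi_\Gamma$, contribute zero mass. I expect this to be the only delicate step; the rest is a routine application of descent of a Radon measure along a properly discontinuous free isometric action, and I would quote the standard facts about quotients of Riemannian manifolds by such actions to streamline the argument.
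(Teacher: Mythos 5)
The paper does not actually prove this proposition: it is imported verbatim as Lemma 2.21 of Bertrand's work \cite{bertrand2014prescription} and used as a black box, so there is no in-paper argument to compare against. Your descent construction is the standard proof of this kind of statement and is essentially correct: $\Gamma$ is discrete and free and acts by isometries, hence properly discontinuously, so $\pi_\Gamma$ is a locally isometric covering map; the $\Gamma$-invariance of $\sigma_{\mathbb{H}^m}$ makes the locally defined pushforward well-defined; uniqueness among measures locally matching $\sigma_{\mathbb{H}^m}$ follows from a $\pi$-system argument; and the total-mass computation reduces to injectivity of $\pi_\Gamma$ on $\operatorname{int}(D)$ together with $\sigma_{\mathbb{H}^m}(\partial D)=0$. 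The one step you should tighten is the justification of that last measure-zero claim: convexity of $D$ does \emph{not} by itself force $\partial D$ to be a locally finite union of totally geodesic pieces (a geodesic ball is convex with smooth non-geodesic boundary; that convex locally finite fundamental domains are polyhedra is a theorem, e.g.\ in Ratcliffe, not an immediate consequence of convexity). The correct and sufficient fact is simply that the boundary of any convex set with nonempty interior has Riemannian measure zero -- most cleanly seen in the Klein model, where hyperbolic convexity coincides with Euclidean convexity, the boundary is locally a Lipschitz graph, and the hyperbolic volume form is absolutely continuous with respect to Lebesgue measure. With that substitution, and the same observation applied to the overlaps $D_1\cap\gamma D_2\cap\gamma' D_2$ in your final independence argument, your proof is complete and matches what Bertrand's cited lemma establishes.
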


\begin{definition}
\label{def:gauss-curvature}
Let $\mathcal{C}$ be a $\Gamma$-convex body. The Gauss curvature measure $\mu_{\mathcal{C}}$ of $\mathcal{C}$ is defined as
\begin{equation}
\label{eq:gauss-curvature}
\mu_{\mathcal{C}}(U) = \sigma_{\mathbb{H}^m / \Gamma}(\mathcal{G}_{\Gamma}(U)), \quad \forall \text{ Borel set } U \subset \mathbb{H}^m / \Gamma.
\end{equation}
\end{definition}

From Proposition \ref{prop:gauss-curvature}, it follows that the Gauss curvature measure is well-defined on $\Gamma$-convex bodies. In particular,
$\mu_{\mathcal{C}}(\mathbb{H}^m / \Gamma) = \sigma_{\mathbb{H}^m / \Gamma}(\mathbb{H}^m / \Gamma) = \text{Vol}(\mathbb{H}^m / \Gamma).$
If $\mathcal{C}$ is a $\Gamma$-convex polytope, then $\mu_{\mathcal{C}}$ is discrete, and its support set is the set of all vertices of $\mathcal{C}$.

The Minkowski problem in Minkowski space investigates whether there exists a Fuchsian body in Minkowski space whose Gauss curvature measure is equal to a given probability measure.

\subsubsection{Hyperbolic Legendre Dual}
\begin{definition}
\label{def:legendre-dual}
Let $\rho : \mathbb{H}^m \to (0, +\infty)$ be a positive hyperbolic function. Its hyperbolic Legendre dual is a hyperbolic function $\rho^* : \mathbb{H}^m \to (0, +\infty)$, defined as
\begin{equation}
\label{eq:legendre-dual}
\rho^*(y) = \sup_{x \in \mathbb{H}^m} \frac{-1}{\rho(x) \langle x, y \rangle_H}, \quad \forall y \in \mathbb{H}^m.
\end{equation}

Let $\mathcal{C}$ be a subset of $C_f$, and its hyperbolic Legendre dual $\mathcal{C}^*$ is a subset of $C_f$, defined as
\begin{equation}
\label{eq:legendre-dual-set}
\mathcal{C}^* = \{ y \in C_f \mid \langle y, z \rangle_H \leq -1, \, \forall z \in \mathcal{C} \}.
\end{equation}
\end{definition}

\begin{proposition}[Lemma 2.6 \cite{fillastre2013fuchsian}]
\label{prop:support-plane}
Let $\mathcal{C}$ be a $\Gamma$-convex body. For each $y \in \mathbb{H}^m$, the supremum $\sup_{x \in \mathbb{H}^m} \rho(x) \langle x, y \rangle_H$
is negative and is attained at some point $x \in \mathbb{H}^m$. Furthermore, for every point $y \in \mathbb{H}^m$, $y$ is the unique inward unit normal vector to a supporting hyperplane of $\mathcal{C}$.
\end{proposition}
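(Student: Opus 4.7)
The plan is to verify the two claims in succession, exploiting the $\Gamma$-invariance of the radial function to reduce the non-compact supremum on $\mathbb{H}^m$ to a tractable problem. First, I would observe that by Proposition \ref{prop:projection} the radial function $\rho$ is continuous on $\mathbb{H}^m$, and by equation (\ref{eq:radial-invariance}) it is $\Gamma$-invariant, hence it descends to a continuous function on the compact quotient $\mathbb{H}^m / \Gamma$. Taking max and min over a compact fundamental domain yields constants with $0 < \rho_{\min} \leq \rho(x) \leq \rho_{\max} < \infty$ for every $x \in \mathbb{H}^m$, which will play the role of uniform estimates in the coercivity argument.

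Second, I would analyze the objective $f(x) := \rho(x)\langle x, y\rangle_H$. From the distance formula (\ref{eq:hyperboloid-distance}) we have $\langle x, y\rangle_H = -\cosh d_H(x,y) < 0$ for all $x, y \in \mathbb{H}^m$, so $f(x) < 0$ everywhere and in particular the supremum is non-positive. As $d_H(x,y) \to \infty$, $\cosh d_H(x,y) \to +\infty$, and combined with the uniform lower bound $\rho(x) \geq \rho_{\min} > 0$ this forces $f(x) \to -\infty$. Hence $f$ is continuous and proper; the super-level set $\{f \geq f(y)\} = \{f \geq -\rho(y)\}$ is closed and $d_H$-bounded, therefore compact, and $f$ attains its maximum there at some interior point $x_\ast \in \mathbb{H}^m$. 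Since $f(x_\ast) < 0$, the supremum is strictly negative.

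Third, I would translate the optimality of $x_\ast$ into the supporting hyperplane statement. Setting $z_\ast := \rho(x_\ast) x_\ast \in \partial\mathcal{C}$, the inequality $\rho(x_\ast)\langle x_\ast, y\rangle_H \geq \rho(x)\langle x, y\rangle_H$ for all $x \in \mathbb{H}^m$ is exactly the characterization given just before Definition \ref{def:subnormal}, so the hyperplane (\ref{eq:hyperplane}) is a supporting hyperplane of $\mathcal{C}$ at $z_\ast$ with inward unit normal $y$. For uniqueness, any supporting hyperplane with inward unit normal $y$ is space-like by Proposition \ref{prop:fuchsian} and therefore has equation $\{z \in \mathbb{R}^m_1 : \langle z, y\rangle_H = c\}$ for some $c \in \mathbb{R}$; the supporting condition with $\mathcal{C}$ lying on the future side fixes $c = \sup_{z \in \mathcal{C}} \langle z, y\rangle_H$, a value depending only on $\mathcal{C}$ and $y$. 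Hence the hyperplane is unique.

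The step I expect to require the most care is the coercivity of $f$: the proof really uses both that $\rho$ is bounded below by a positive constant (which genuinely needs the $\Gamma$-invariance together with the compactness of $\mathbb{H}^m / \Gamma$) and that $\langle x, y\rangle_H = -\cosh d_H(x,y)$ grows without bound as $x$ escapes. Once attainment is secured, the remaining work is a direct reading of the geometric equivalence encoded in equation (\ref{eq:hyperplane}) and the properties of supporting hyperplanes from Proposition \ref{prop:fuchsian}.
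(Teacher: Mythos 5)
The paper does not prove this proposition; it is quoted verbatim from Lemma 2.6 of \cite{fillastre2013fuchsian}, so there is no in-paper argument to compare yours against. Judged on its own, your proof is correct and complete. The coercivity argument is sound: the $\Gamma$-invariance of $\rho$ together with compactness of $\mathbb{H}^m/\Gamma$ gives $0<\rho_{\min}\le\rho\le\rho_{\max}<\infty$, and $\langle x,y\rangle_H=-\cosh d_H(x,y)$ then forces $f(x)=\rho(x)\langle x,y\rangle_H\le-\rho_{\min}\cosh d_H(x,y)\to-\infty$, so the super-level set $\{f\ge -\rho(y)\}$ is closed and bounded, hence compact by Hopf--Rinow, and the maximum is attained. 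One small simplification: strict negativity of the supremum does not need the attainment step at all, since $\langle x,y\rangle_H\le-1$ already gives $\sup f\le-\rho_{\min}<0$. The translation of the first-order optimality condition into the supporting-hyperplane statement via equation (\ref{eq:hyperplane}), and the uniqueness argument (a space-like hyperplane with unit normal $y$ is determined by a single level $c$, which the supporting condition pins to $\sup_{z\in\mathcal{C}}\langle z,y\rangle_H$ using items 3 and 4 of Proposition \ref{prop:fuchsian}), are both valid readings of the material the paper sets up before Definition \ref{def:subnormal}.
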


From Proposition \ref{prop:support-plane}, it follows that the dual function $\rho^*$ is well-defined on $\mathbb{H}^m$, and its boundary is greater than zero and less than infinity. Moreover, $\rho^*$ is invariant under the action of the group $\Gamma$, \ie,
\begin{equation}
\rho^*(\gamma y) = \rho^*(y), \quad \forall y \in \mathbb{H}^m, \, \forall \gamma \in \Gamma.
\end{equation}

\begin{proposition}[Lemma 2.30 \cite{bertrand2014prescription}]
\label{prop:legendre-dual}
Let $\mathcal{C}$ be a $\Gamma$-convex body. Then, $\mathcal{C}^*$ is also a $\Gamma$-convex body, and $(\mathcal{C}^*)^* = \mathcal{C}$.

Furthermore, the radial function of $\partial \mathcal{C}^*$ is $\rho^*$, and
\begin{equation}
\rho(x) = \sup_{y \in \mathbb{H}^m} \frac{-1}{\rho^*(y) \langle y, x \rangle_H}, \quad \forall x \in \mathbb{H}^m.
\end{equation}
\end{proposition}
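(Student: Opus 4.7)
The plan is to proceed in four steps mirroring the four assertions: (i) verify $\mathcal{C}^*$ is a closed convex proper $\Gamma$-invariant subset of $C_f$; (ii) identify its radial function with $\rho^*$; (iii) establish biduality $(\mathcal{C}^*)^*=\mathcal{C}$; (iv) derive the dual formula for $\rho$. For (i) I would rewrite $\mathcal{C}^* = \bigcap_{z\in\mathcal{C}}\{y\in C_f : \langle y,z\rangle_H\le -1\}$ as an intersection of closed convex sets, so $\mathcal{C}^*$ inherits closedness and convexity. The $\Gamma$-invariance is immediate from $\langle\gamma y,z\rangle_H = \langle y,\gamma^{-1}z\rangle_H \le -1$ (using that $\gamma^{-1}z\in\mathcal{C}$ and that $\Gamma$ preserves the Lorentzian form), while properness and non-degeneracy follow from Proposition \ref{prop:support-plane}: the supremum defining $\rho^*(y)$ is finite and strictly positive at every $y\in\mathbb{H}^m$, so each radial ray in $C_f$ meets $\mathcal{C}^*$ in a proper half-line.

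For (ii), by radial convexity of $\mathcal{C}$ (item 5 of Proposition \ref{prop:fuchsian}) the defining constraint of $\mathcal{C}^*$ is tightest on $\partial\mathcal{C}=\{\rho(x)x : x\in\mathbb{H}^m\}$, so $\mathcal{C}^*=\{y\in C_f:\rho(x)\langle x,y\rangle_H\le -1\ \forall x\in\mathbb{H}^m\}$. Since $\langle x,y\rangle_H<0$ on $\mathbb{H}^m$, the condition $\lambda y\in\mathcal{C}^*$ rearranges to $\lambda\ge -1/(\rho(x)\langle x,y\rangle_H)$ for every $x$, that is $\lambda\ge\rho^*(y)$. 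Hence the minimal admissible scalar is $\rho^*(y)$ and $\rho^*(y)y\in\partial\mathcal{C}^*$, so $\rho^*$ is the radial function.

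The main obstacle is (iii). The inclusion $\mathcal{C}\subseteq(\mathcal{C}^*)^*$ is immediate from the symmetry of the defining inequality. For the reverse, given $w\in C_f\setminus\mathcal{C}$, write $w=t_w x_w$ with $x_w\in\mathbb{H}^m$ and $t_w<\rho(x_w)$, using that radial convexity identifies $\mathcal{C}$ with $\{tx:x\in\mathbb{H}^m,\,t\ge\rho(x)\}$. Proposition \ref{prop:support-plane} then furnishes a space-like supporting hyperplane of $\mathcal{C}$ at $\rho(x_w)x_w$ with inward normal $y_0\in\mathbb{H}^m$, whose characterization yields $\rho(x_w)\langle x_w,y_0\rangle_H=-1/\rho^*(y_0)$ and $\rho(x)\langle x,y_0\rangle_H\le -1/\rho^*(y_0)$ for every $x\in\mathbb{H}^m$. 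Setting $y=\rho^*(y_0)y_0$ puts $y\in\mathcal{C}^*$, while a direct computation gives $\langle w,y\rangle_H=-t_w/\rho(x_w)>-1$, so $w\notin(\mathcal{C}^*)^*$. The delicate point is that the separating hyperplane must not be produced by an abstract Hahn-Banach argument but as an actual space-like supporting hyperplane of $\mathcal{C}$, because only then can its inward normal be rescaled into $\mathbb{H}^m$ so as to match the defining inequality of $\mathcal{C}^*$; this is precisely what the space-likeness guaranteed by Proposition \ref{prop:fuchsian} buys us.

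Step (iv) is then a corollary: applying (i) and (ii) with the roles of $\mathcal{C}$ and $\mathcal{C}^*$ swapped shows that $\mathcal{C}^*$ is itself a $\Gamma$-convex body and that the radial function of $(\mathcal{C}^*)^*$ is $\sup_{y\in\mathbb{H}^m}-1/(\rho^*(y)\langle y,x\rangle_H)$; combining with (iii) gives $\rho(x)=\sup_{y\in\mathbb{H}^m}-1/(\rho^*(y)\langle y,x\rangle_H)$, which is the claimed formula.
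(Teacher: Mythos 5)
The paper does not prove this proposition; it is imported as Lemma 2.30 of \cite{bertrand2014prescription}, so there is no in-paper argument to compare against. Your proof is correct and follows the standard polarity argument: reduce the defining constraints of $\mathcal{C}^*$ to $\partial\mathcal{C}$ via radial monotonicity (so that $\lambda y\in\mathcal{C}^*$ iff $\lambda\ge\rho^*(y)$), and obtain the hard inclusion $(\mathcal{C}^*)^*\subseteq\mathcal{C}$ by exhibiting, for $w=t_wx_w$ with $t_w<\rho(x_w)$, the explicit witness $y=\rho^*(y_0)y_0\in\mathcal{C}^*$ coming from a supporting hyperplane at $\rho(x_w)x_w$, for which $\langle w,y\rangle_H=-t_w/\rho(x_w)>-1$. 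Your remark that the separating hyperplane must be a genuine space-like supporting hyperplane (so its normal can be rescaled into $\mathbb{H}^m$) is exactly the right point; note only that the existence and space-likeness of that hyperplane are Proposition \ref{prop:fuchsian}(2)--(3), whereas Proposition \ref{prop:support-plane} is what you should cite for finiteness, positivity and attainment of $\rho^*$.

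One step is glossed over: closedness of $\mathcal{C}^*$. Each set $\{y\in C_f:\langle y,z\rangle_H\le-1\}$ is a closed half-space intersected with the \emph{open} cone $C_f$, so your intersection is a priori only closed relative to $C_f$, not in $\mathbb{R}^{m+1}$; one must rule out limit points on the light cone. This uses the cocompactness of $\Gamma$: if $y$ were future light-like, then along a $\Gamma$-orbit $\{\gamma z_0\}\subset\mathcal{C}$ whose projections to $\mathbb{H}^m$ converge conically to the ideal point of $y$ (every boundary point is a conical limit point for a cocompact group, and the radial function is $\Gamma$-invariant hence bounded), one gets $\langle y,\gamma z_0\rangle_H\to 0$, contradicting $\langle y,\gamma z_0\rangle_H\le-1$. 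This is a small but genuine omission; the remainder of the argument, including the biduality and the final formula $\rho=(\rho^*)^*$ obtained by swapping the roles of $\mathcal{C}$ and $\mathcal{C}^*$, is sound.
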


From Proposition \ref{prop:legendre-dual}, it follows that $\rho$ and $\rho^*$ are hyperbolic Legendre duals of each other. This shows that the hyperbolic Legendre dual is the hyperbolic space counterpart of the Legendre dual in Euclidean convex analysis theory.

From a geometric perspective, the $\Gamma$-convex body $\mathcal{C}$ is closely related to its hyperbolic Legendre dual $\mathcal{C}^*$. Specifically, $\mathcal{C}^*$ is the convex hull of all the supporting hyperplanes of $\mathcal{C}$, and $\mathcal{C}$ is the convex hull of all the supporting hyperplanes of $\mathcal{C}^*$. For any point $\rho^*(y_0) y_0 \in \partial \mathcal{C}^*$, by Proposition \ref{prop:support-plane}, there exists a unique supporting hyperplane on $\mathcal{C}$ with $y_0$ as the inward unit normal vector, and its supporting point is $\rho(x_0) x_0 \in \partial \mathcal{C}$, and we have $y_0 \in \partial \rho(x_0)$, \ie,
\begin{equation}
\rho^*(y_0) = \frac{-1}{\rho(x_0) \langle x_0, y_0 \rangle_H} \text{ and }  \rho^*(y_0) \geq \frac{-1}{\rho(x) \langle x, y_0 \rangle_H}, \quad \forall x \in \mathbb{H}^m.
\end{equation}

Conversely, for any point $\rho(x_0) x_0 \in \partial \mathcal{C}$, there exists a unique supporting hyperplane of $\mathcal{C}^*$ with $x_0$ as the inward unit normal vector, and its supporting point is $\rho(y_0) y_0 \in \partial \mathcal{C}^*$. Moreover, we have $x_0 \in \partial \rho^*(y_0)$, \ie,
\begin{equation}
\rho(x_0) = \frac{-1}{\rho^*(y_0) \langle y_0, x_0 \rangle_H} \text{ and }  \rho(x_0) \geq \frac{-1}{\rho^*(y) \langle y, x_0 \rangle_H}, \quad \forall y \in \mathbb{H}^m.
\end{equation}

\begin{figure}[h]
\begin{center}
\fbox{\rule{0pt}{2in}
\includegraphics[width=0.8\linewidth]{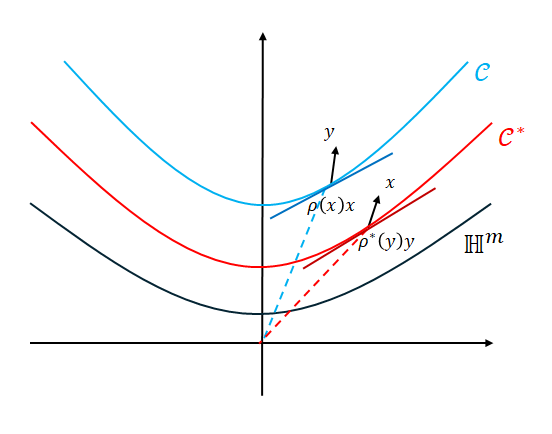}}
\end{center}
   \caption{Hyperbolic Legendre Dual}
\label{fig:legendre-dual}
\end{figure}

% Legendre dual of a $\Gamma$-convex polyhedron
Let $\mathcal{P}$ be a $\Gamma$-convex polyhedron. From Definition \ref{def:fuchsian}, it follows that there exist points $x_1, x_2, \dots, x_k \in \mathbb{H}^m$ and $\rho = (\rho_1, \rho_2, \dots, \rho_k) \in \mathbb{R}^k_+$ such that $\mathcal{P}$ is the hyperbolic Legendre dual of the point set $\{ \rho_i \gamma x_i \mid 1 \leq i \leq k, \gamma \in \Gamma \}$. In fact,
\begin{equation}
\label{eq:polyhedron-dual}
\begin{split}
\mathcal{P} &= \{ z \in C_f \mid \langle z - \frac{1}{\rho_i} \gamma x_i, \gamma x_i \rangle_H \leq 0, \forall 1 \leq i \leq k, \forall \gamma \in \Gamma \} \\
&= \{ z \in C_f \mid \langle z, \rho_i \gamma x_i \rangle_H \leq -1, \forall 1 \leq i \leq k, \forall \gamma \in \Gamma \} \\
&= \{ \rho_i \gamma x_i \mid 1 \leq i \leq k, \gamma \in \Gamma \}^*.
\end{split}
\end{equation}

The following two propositions are important properties of $\Gamma$-convex polyhedrons.
\begin{proposition}[Lemma 3.3 \cite{bertrand2014prescription}]
\label{prop:polyhedron-dual}
The hyperbolic Legendre dual of a $\Gamma$-convex polyhedron is also a $\Gamma$-convex polyhedron.
\end{proposition}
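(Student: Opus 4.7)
\textbf{Proof proposal for Proposition \ref{prop:polyhedron-dual}.}

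The plan is to exploit the two-sided duality already established in Proposition \ref{prop:legendre-dual}, namely $(\mathcal{C}^*)^* = \mathcal{C}$ for every $\Gamma$-convex body, together with the concrete representation of a $\Gamma$-convex polyhedron given in (\ref{eq:polyhedron-dual}). Let $\mathcal{P}$ be a $\Gamma$-convex polyhedron generated by the finite data $\{x_1,\dots,x_k\}\subset\mathbb{H}^m$ and $(\rho_1,\dots,\rho_k)\in\mathbb{R}^k_+$, and write $S := \{\rho_i\gamma x_i : 1\le i\le k,\ \gamma\in\Gamma\}\subset C_f$. Equation (\ref{eq:polyhedron-dual}) says precisely that $\mathcal{P}=S^*$, so by Proposition \ref{prop:legendre-dual} we obtain $\mathcal{P}^* = S^{**}$. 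Thus the task reduces to identifying $S^{**}$ with an explicit $\Gamma$-convex polyhedron.

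To do this, introduce $\mathcal{Q}$ as the Fuchsian convex hull of the $\Gamma$-orbit $S$, i.e.\ the smallest closed convex subset of $C_f$ containing $S$ and closed under the radial ray operation $z\mapsto \lambda z$ for $\lambda\ge 1$ (which is the operation noted in Proposition \ref{prop:fuchsian}(5)). By the remark following Definition \ref{def:fuchsian}, $\mathcal{Q}$ is itself a $\Gamma$-convex polyhedron. I will show $\mathcal{Q}^* = S^*$, after which Proposition \ref{prop:legendre-dual} applied to $\mathcal{Q}$ yields $\mathcal{P}^* = S^{**} = (\mathcal{Q}^*)^* = \mathcal{Q}$, finishing the proof. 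The inclusion $\mathcal{Q}^* \subseteq S^*$ is immediate since $S\subseteq\mathcal{Q}$. For the reverse inclusion, take $y\in S^*$ so that $\langle y,s\rangle_H \le -1$ for every $s\in S$. Because $\langle y,\cdot\rangle_H$ is linear, this inequality is preserved under convex combinations; because $\langle y,z\rangle_H<0$ for $z\in C_f$ (both $y$ and $z$ are future-directed time-like), multiplying $z$ by $\lambda\ge 1$ only makes $\langle y,z\rangle_H$ more negative, so the bound $\le -1$ is also preserved under the radial operation. Since these two operations generate $\mathcal{Q}$ from $S$, we conclude $\langle y,z\rangle_H\le -1$ for all $z\in\mathcal{Q}$, i.e.\ $y\in\mathcal{Q}^*$.

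The main obstacle in carrying out this plan rigorously is the last step: making the description of $\mathcal{Q}$ as the closure under finite convex combinations and radial dilation truly cover the Fuchsian convex hull, and checking that $\mathcal{Q}$ is a bona-fide $\Gamma$-convex body (closed, proper in $C_f$, $\Gamma$-invariant, and not contained in any lower-dimensional affine subspace so that Proposition \ref{prop:legendre-dual} applies). $\Gamma$-invariance of $\mathcal{Q}$ follows because $S$ is $\Gamma$-invariant and $\Gamma$ acts by Lorentzian isometries, hence preserves both convex combinations and radial dilations. Properness and the non-degeneracy required by Proposition \ref{prop:fuchsian}(1) follow from the fact that the $x_i$ are not pairwise collinear in $\mathbb{H}^m$ and that $\Gamma$ is cocompact (so the orbit $S$ cannot lie in a proper time-like hyperplane through the origin). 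Once these geometric checks are in place, the duality argument above closes, and $\mathcal{P}^* = \mathcal{Q}$ is exhibited as a $\Gamma$-convex polyhedron whose defining orbit is $S$.
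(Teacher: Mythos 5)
The paper does not actually prove Proposition \ref{prop:polyhedron-dual}; it is imported verbatim as Lemma 3.3 of \cite{bertrand2014prescription}, so there is no in-paper argument to compare your route against. Judged on its own terms, the reduction you carry out is correct as far as it goes: equation \ref{eq:polyhedron-dual} does give $\mathcal{P}=S^*$, your verification that the constraint $\langle y,\cdot\rangle_H\leq -1$ survives convex combinations, radial dilation by $\lambda\geq 1$, and passage to the closure is sound (both vectors being future-directed time-like makes the pairing negative, so dilation only helps), and therefore $\mathcal{Q}^*=S^*=\mathcal{P}$ and $\mathcal{P}^*=(\mathcal{Q}^*)^*=\mathcal{Q}$ once Proposition \ref{prop:legendre-dual} is applicable to $\mathcal{Q}$.

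The genuine gap is the final identification of $\mathcal{Q}$ as a $\Gamma$-convex \emph{polyhedron}. By Definition \ref{def:fuchsian} and equation \ref{eq:polyhedron}, that means exhibiting finitely many points $y_j\in\mathbb{H}^m$ and weights $\sigma_j>0$ so that $\mathcal{Q}$ is the intersection of the $\Gamma$-orbits of the corresponding half-spaces. You justify this by the unproved remark following Definition \ref{def:fuchsian}, but that remark is essentially equivalent to the proposition you are trying to prove: the paper itself, in the paragraph preceding Proposition \ref{prop:vertices}, deduces that the hull $\mathcal{C}$ of the orbit is a $\Gamma$-convex polyhedron \emph{from} Proposition \ref{prop:polyhedron-dual}. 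So within the paper's logical structure your argument is circular, and the substance of Bertrand's lemma is exactly the step you have outsourced: one must show that the boundary of $\mathcal{Q}$ is locally finite, that every face lies in a space-like supporting hyperplane whose inward unit normal is a point of $\mathbb{H}^m$ (so that the half-space has the required form), and that cocompactness of $\Gamma$ forces the faces into finitely many $\Gamma$-orbits, which then furnish the data $(y_j,\sigma_j)$. Your closing paragraph correctly flags properness, $\Gamma$-invariance and non-degeneracy of $\mathcal{Q}$, but the local-finiteness and finitely-many-orbits-of-faces argument — where the real work lies — is not supplied.
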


\begin{proposition}[Lemma 4.2 \cite{fillastre2013fuchsian}]
\label{prop:polyhedron-face}
An $m$-dimensional $\Gamma$-convex polyhedron is locally finite and has countably many faces, each of which is an $(m - 1)$-dimensional convex polyhedron.
\end{proposition}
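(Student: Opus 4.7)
The plan is to combine the explicit half-space representation of $\mathcal{P}$ with the cocompactness and discreteness of the $\Gamma$-action on $\mathbb{H}^m$. By (\ref{eq:polyhedron}), $\mathcal{P}$ is the intersection of the countable family of closed half-spaces $H^-_{i,\gamma}=\{z\in C_f : \langle z, \gamma x_i\rangle_H \leq -\rho_i^{-1}\}$ indexed by $(i,\gamma) \in \{1,\dots,k\}\times\Gamma$, and each face of $\partial\mathcal{P}$ is of the form $\mathcal{P}\cap H_{i,\gamma}$ for some $(i,\gamma)$. Countability of the index set therefore bounds the number of faces by $\aleph_0$, once the polyhedral structure is established.

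For local finiteness, I would fix a compact fundamental domain $D\subset\mathbb{H}^m$ for $\Gamma$ (which exists because $\mathbb{H}^m/\Gamma$ is compact) and set $K:=p^{-1}(D)\subset\partial\mathcal{P}$. By the $\Gamma$-invariance (\ref{eq:radial-invariance}) and continuity of $\rho$, there are constants $0<m_0 \leq M<\infty$ with $m_0 \leq \rho|_D \leq M$, so $K$ is compact. A hyperplane $H_{i,\gamma}$ supports $\mathcal{P}$ at some $z_0=\rho(x_0)x_0$ with $x_0\in D$ precisely when $\langle z_0, \gamma x_i\rangle_H=-\rho_i^{-1}$; combining this with the identity $\langle x_0,\gamma x_i\rangle_H=-\cosh d_H(x_0,\gamma x_i)$ valid on $\mathbb{H}^m$ gives $\rho(x_0)\cosh d_H(x_0,\gamma x_i) = \rho_i^{-1}$. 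Since $\rho|_D \geq m_0$ and $\min_i \rho_i > 0$, this yields the uniform bound $\cosh d_H(x_0,\gamma x_i)\leq 1/(m_0\min_i\rho_i)$, hence $d_H(x_0,\gamma x_i)\leq R$ for a finite $R$ depending only on $D$ and the data $\{\rho_i\}$. Proper discontinuity of $\Gamma$ then implies that only finitely many orbit points $\gamma x_i$ lie in the closed $R$-neighborhood of $D$, so only finitely many hyperplanes $H_{i,\gamma}$ can meet $K$. Because every compact subset of $\partial\mathcal{P}$ is covered by finitely many $\Gamma$-translates of $K$, local finiteness holds throughout.

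To finish, each face $F_{i,\gamma}:=\mathcal{P}\cap H_{i,\gamma}$ is convex as an intersection of convex sets and lies within the spacelike hyperplane $H_{i,\gamma}$ (Proposition~\ref{prop:fuchsian}, item~3), which carries a flat $m$-dimensional Euclidean structure induced by the Lorentzian metric. By the local finiteness just established, in a neighborhood of any of its points $F_{i,\gamma}$ equals the intersection of $H_{i,\gamma}$ with finitely many of the remaining half-spaces $H^-_{j,\gamma'}$, exhibiting $F_{i,\gamma}$ as a convex polyhedron inside $H_{i,\gamma}$. Its dimension is $m-1$ because $F_{i,\gamma}$ is a facet of the $m$-dimensional polyhedral hypersurface $\partial\mathcal{P}$, which by Proposition~\ref{prop:projection} is homeomorphic to $\mathbb{H}^m$ via radial projection.

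The main obstacle is extracting the uniform distance estimate in the local finiteness step: this rests on $\rho$ being bounded away from zero on the compact fundamental domain $D$, which is a consequence of $\mathcal{P}$ being a proper Fuchsian convex subset of the future cone as codified in Proposition~\ref{prop:projection}. Once this estimate is available, the proper discontinuity of $\Gamma$ yields local finiteness, and both the countability and the polyhedral structure of the faces then follow from routine convex-geometric arguments applied inside each supporting hyperplane.
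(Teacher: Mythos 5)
The paper does not actually prove this proposition; it imports it verbatim as Lemma 4.2 of \cite{fillastre2013fuchsian}, so there is no internal argument to compare yours against. Your proof supplies the standard argument and its core is sound: the half-space representation $\mathcal{P}=\bigcap_{i,\gamma}\{z:\langle z,\gamma x_i\rangle_H\le-\rho_i^{-1}\}$ gives countability of the candidate faces; the identity $\langle x_0,\gamma x_i\rangle_H=-\cosh d_H(x_0,\gamma x_i)$ together with the positive lower bound on the radial function over a compact fundamental domain $D$ (Proposition~\ref{prop:projection} plus $\Gamma$-invariance and cocompactness) converts ``$H_{i,\gamma}$ meets $p^{-1}(D)$'' into the uniform bound $\cosh d_H(x_0,\gamma x_i)\le 1/(m_0\min_i\rho_i)$; and proper discontinuity then leaves only finitely many contributing hyperplanes over each translate of $D$. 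This is exactly the mechanism behind Fillastre's lemma, so your route is not a detour but a reconstruction of the cited proof, and the local-finiteness step is the genuinely nontrivial part, which you handle correctly.

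Two points deserve attention. First, your closing dimension count is internally inconsistent: you correctly observe that a spacelike hyperplane of $\mathbb{R}^m_1=\mathbb{R}^{m+1}$ is an isometric copy of $\mathbb{E}^m$, and a nondegenerate face $F_{i,\gamma}$ has nonempty relative interior there, so it is $m$-dimensional, not $(m-1)$-dimensional; calling it a ``facet of the $m$-dimensional hypersurface $\partial\mathcal{P}$'' does not lower its dimension, since the top cells of an $m$-dimensional polyhedral hypersurface are themselves $m$-dimensional. The discrepancy originates in the statement itself (the paper later describes the same faces $F_{i,\gamma}(\varphi)$ as $m$-dimensional compact convex polyhedra), so you should flag the statement rather than force your argument to produce $m-1$. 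Second, a minor gap: you implicitly assume every face of $\partial\mathcal{P}$ arises as $\mathcal{P}\cap H_{i,\gamma}$ and that each such intersection is a genuine facet; redundant hyperplanes can touch $\mathcal{P}$ in a lower-dimensional set or not at all (compare Proposition~\ref{prop:vertices}), so strictly speaking the faces are among the sets $\mathcal{P}\cap H_{i,\gamma}$, which is all that countability and local finiteness require. Neither issue affects the substance of the argument.
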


Inspired by Euclidean convex geometry \cite{schneider2013convex}, we define the concepts of convex hull and upper envelope in Minkowski spacetime $\mathbb{R}^m_1$. Since both $\mathbb{R}^m_1$ and $\mathbb{E}^{m+1}$ are defined in the real vector space $\mathbb{R}^{m+1}$, the convex hull in $\mathbb{R}^m_1$ is analogous to the Euclidean convex hull, and the upper envelope is obtained by replacing the Euclidean inner product with the Lorentzian inner product.

\begin{definition}
The convex hull (positive hull) of a set of points $A = \{ x_i \in \mathbb{R}^m_1 \mid i \in I \}$ in $\mathbb{R}^m_1$ is the set of all convex (positive) combinations of any finite number of elements of $A$, denoted as $\text{Conv}(A) = \text{Conv} \{ x_i \mid i \in I \}$.
\end{definition}

The convex hull of a set of points $A$ is the boundary of the positive hull of the set $A$, and they are isomorphic.

\begin{definition}
The upper envelope of the family of functions $\{ f_i : \mathbb{H}^m \to \mathbb{R} \mid i \in I \}$ in $\mathbb{R}^m_1$ is the graph of the following function:
\begin{equation}
f(y) = \sup_{i \in I} f_i(y), \quad \forall y \in \mathbb{H}^m,
\end{equation}
denoted as $\text{Env}(f) = \text{Env} \{ f_i \mid i \in I \}$. 
\end{definition}

Now let the $\Gamma$-convex polyhedron $\mathcal{P}$ be given by equation \ref{eq:polyhedron}, and $\mathcal{U}$ be the boundary of $\mathcal{P}$. If $\rho^*$ is the radial function of $\mathcal{U}$, then:
\begin{equation}
\rho^*(y) = \sup_{1 \leq i \leq k, \gamma \in \Gamma} \frac{-1}{\rho_i \langle \gamma x_i, y \rangle_H}.
\end{equation}

This implies that $\mathcal{U}$ is the upper envelope of the family of functions 
$\{ \pi_{i, \gamma}(y) = \frac{-1}{\rho_i \langle \gamma x_i, y \rangle_H} \mid 1 \leq i \leq k, \gamma \in \Gamma \}$ 
in $\mathbb{R}^m_1$. 
For each face of $\mathcal{U}$, there exists $1 \leq i \leq k$ and $\gamma \in \Gamma$ such that its unit normal vector can be represented as $\gamma x_i$, denoted as $F_{i, \gamma}$.

By Proposition \ref{prop:polyhedron-face}, $F_{i, \gamma}$ is an $(m - 1)$-dimensional convex polyhedron and is contained in the hyperplane 
$\pi_{i, \gamma} = \left\{ \frac{-1}{\rho_i \langle \gamma x_i, y \rangle_H}  \mid y \in \mathbb{H}^m \right\}$.

The radial projection $U_{i, \gamma}$ of the face $F_{i, \gamma}$, can be written as:
\begin{equation}
\label{eq:face-projection}
U_{i, \gamma} = \left\{ y \in \mathbb{H}^m \mid \frac{-1}{\rho_i \langle \gamma x_i, y \rangle_H} \geq \frac{-1}{\rho_j \langle \beta x_j, y \rangle_H}, \forall 1 \leq j \leq k, \forall \beta \in \Gamma  \right\}.
\end{equation}

To simplify the notation, we omit the subscript $\gamma$ when $\gamma = 1$. For example, $U_{i, 1} = U_i$. Note that $U_{i, \gamma} = p(F_{i, \gamma}) = \gamma p(F_i) = \gamma U_i$. Thus, by Proposition \ref{prop:projection}, the radial projection of $\mathcal{U}$ induces a cell decomposition $\mathcal{D}$ on $\mathbb{H}^m$,
\begin{equation}
\mathbb{H}^m = \bigcup_{i=1}^k \bigcup_{\gamma \in \Gamma} U_{i, \gamma} = \bigcup_{i=1}^k \bigcup_{\gamma \in \Gamma} \gamma U_i.
\end{equation}

Furthermore, $\mathcal{D}$ induces a hyperbolic weighted Delaunay triangulation $\mathcal{T}$, whose vertex set is $\{ \gamma x_i | 1 \leq i \leq k, \gamma \in \Gamma \}$. Two cells $U_{i, \gamma}, U_{j, \beta} \in \mathcal{D}$ intersect in an edge if and only if the points $\gamma x_i$ and $\beta x_j$ are connected by an edge in $\mathcal{T}$, \ie,
\begin{equation}
\label{eq:face-nonempty}
U_{i, \gamma} \cap U_{j, \beta} \neq \varnothing \iff \gamma x_i \sim \beta x_j \in \mathcal{T}.
\end{equation}

Let $\mathcal{C}$ be the positive hull of the point set $\{ \rho_i \gamma x_i \mid 1 \leq i \leq k, \gamma \in \Gamma \}$ in $\mathbb{R}^m_1$, which is a $\Gamma$-convex set. From equation \ref{eq:polyhedron-dual}, we have $\mathcal{P} = \mathcal{C}^*$. From Proposition \ref{prop:legendre-dual}, we obtain $\mathcal{C} = (\mathcal{C}^*)^* = \mathcal{P}^*$. Therefore, by Proposition \ref{prop:polyhedron-dual}, $\mathcal{C}$ is also a $\Gamma$-convex polyhedron. All the vertices of $\mathcal{C}$ are contained in the point set $\{ \rho_i \gamma x_i \mid 1 \leq i \leq k, \gamma \in \Gamma \}$.

\begin{figure}[h]
\begin{center}
\fbox{\rule{0pt}{2in}
\includegraphics[width=0.8\linewidth]{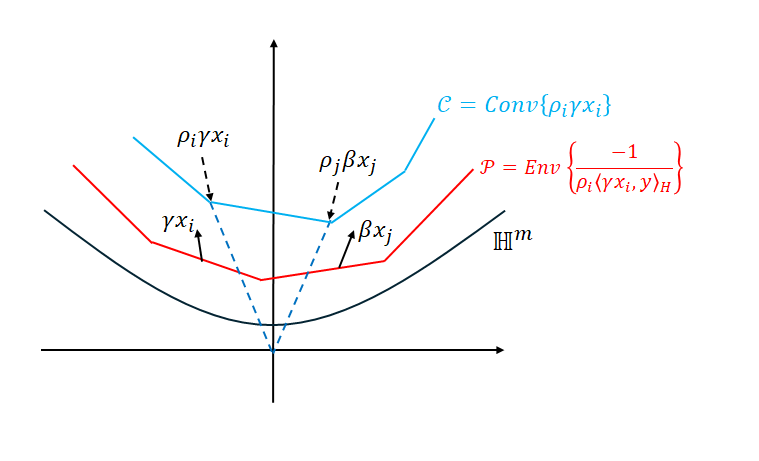}}
\end{center}
   \caption{Convex Hull $\mathcal{C}$ and upper envelope $\mathcal{P}$}
\label{fig:convex-hull}
\end{figure}

Next, we have the following proposition:
\begin{proposition}
\label{prop:vertices}
The set of all vertices of $\mathcal{C}$ is $\{ \rho_i \gamma x_i \mid 1 \leq i \leq k, \gamma \in \Gamma \}$ if and only if $ \text{int}(U_i) \neq \varnothing $,  $\forall 1 \leq i \leq k $.
\end{proposition}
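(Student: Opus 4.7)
The plan is to exploit the face correspondence between the $\Gamma$-convex polyhedron $\mathcal{C}$ and its hyperbolic Legendre dual $\mathcal{P}=\mathcal{C}^*$, combined with the fact that the radial projection $p:\partial\mathcal{P}\to\mathbb{H}^m$ is a homeomorphism (Proposition \ref{prop:projection}). By $\Gamma$-invariance, the condition that every $\rho_i\gamma x_i$ is a vertex of $\mathcal{C}$ is equivalent to every $\rho_i x_i$ being a vertex (since the vertex set is $\Gamma$-invariant and $F_{i,\gamma}=\gamma F_i$, $U_{i,\gamma}=\gamma U_i$). Thus it suffices to prove that, for each fixed $i$, the point $\rho_i x_i$ is a vertex of $\mathcal{C}$ if and only if $\mathrm{int}(U_i)\neq\varnothing$.

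First, I would pin down the Legendre face duality for $\Gamma$-convex polyhedra. By Proposition \ref{prop:polyhedron-face}, both $\mathcal{C}$ and $\mathcal{P}$ are locally finite $\Gamma$-convex polyhedra, so the classical inclusion-reversing correspondence applies face by face: a $k$-dimensional face of $\mathcal{C}$ corresponds to an $(m-1-k)$-dimensional face of $\mathcal{P}$. In particular, by Proposition \ref{prop:support-plane}, the hyperplane $\pi_i$ is the unique supporting hyperplane of $\mathcal{P}$ with inward normal $x_i$, and its contact set is exactly $F_i$. The point $\rho_i x_i\in\mathcal{C}$ lies on the ray dual to this hyperplane, and face duality gives: $\rho_i x_i$ is a vertex (\ie a $0$-dimensional face) of $\mathcal{C}$ iff the dual face $F_i$ of $\mathcal{P}$ has dimension $m-1$, \ie is a facet of $\mathcal{P}$. (Indeed, if $F_i$ had dimension less than $m-1$, then $\rho_ix_i$ would lie in the relative interior of a positive-dimensional face of $\mathcal{C}$, contradicting its being a vertex; conversely, if $F_i$ is a facet, then the supporting hyperplane at $\rho_i x_i$ is uniquely determined, forcing $\rho_i x_i$ to be a $0$-face.)

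Second, I would relate the dimension of $F_i$ to the interior of $U_i$. Since $F_i$ is a convex subset of the hyperplane $\pi_i\subset\mathbb{R}^m_1$, its relative interior in $\pi_i$ is non-empty iff $\dim F_i=m-1$. The radial projection $p$ is a homeomorphism onto $\mathbb{H}^m$ (Proposition \ref{prop:projection}), and because its restriction to the smooth piece of $\partial\mathcal{P}$ lying in $\pi_i$ is a local diffeomorphism to $\mathbb{H}^m$, it sends relative interiors to interiors in $\mathbb{H}^m$. Since $U_i=p(F_i)$, this yields $\mathrm{int}(U_i)\neq\varnothing$ iff $\dim F_i=m-1$. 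Chaining the two equivalences gives exactly the claim.

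The main obstacle is the first step: transferring the standard polytope face duality to the Fuchsian, non-compact, infinitely-faceted setting. The key technical point is that even though $\mathcal{C}$ and $\mathcal{P}$ have infinitely many faces, Proposition \ref{prop:polyhedron-face} ensures local finiteness, so around any point the duality reduces to the classical finite-dimensional statement, while $\Gamma$-equivariance of the Legendre transform (Proposition \ref{prop:legendre-dual}) ensures the correspondence is globally consistent. Once that duality is in hand, the remainder of the argument is a short bookkeeping exercise using the homeomorphism in Proposition \ref{prop:projection}.
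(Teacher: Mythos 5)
Your overall strategy (vertices of $\mathcal{C}$ correspond, under Legendre/normal-cone duality, to full-dimensional cells $U_i$) is the right picture, but as written the proof has a genuine gap: the entire content of the proposition is delegated to a ``classical inclusion-reversing face correspondence'' between $\mathcal{C}$ and $\mathcal{P}=\mathcal{C}^*$ in the Fuchsian setting, which you identify as the main obstacle but do not actually establish; local finiteness (Proposition \ref{prop:polyhedron-face}) alone does not hand you the dimension formula $\dim F + \dim F^* = m-1$ for dual faces in Minkowski space, and proving that correspondence is essentially equivalent to proving the proposition itself. Worse, your parenthetical justification of the key step is reversed: you write that if $F_i$ is a facet then ``the supporting hyperplane at $\rho_i x_i$ is uniquely determined, forcing $\rho_i x_i$ to be a $0$-face.'' Uniqueness of the supporting hyperplane characterizes \emph{smooth} boundary points, not vertices; a vertex has a full-dimensional cone of supporting hyperplanes. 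What you need is the opposite statement: that some supporting hyperplane has contact set equal to the single point $\rho_i x_i$.

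The paper avoids all of this with a short direct argument from the defining inequalities \ref{eq:face-projection}. If $y_i\in\mathrm{int}(U_i)$, then the inequality defining $U_i$ is strict for every $\rho_j\gamma x_j\neq \rho_i x_i$, i.e. $\langle \rho_j\gamma x_j-\rho_i x_i, y_i\rangle_H<0$; hence the hyperplane through $\rho_i x_i$ with normal $y_i$ strictly separates $\rho_i x_i$ from all other orbit points, so $\rho_i x_i$ is a vertex, and $\Gamma$-invariance propagates this to the whole orbit. Conversely, if $\rho_i x_i$ is a vertex, a supporting hyperplane at $\rho_i x_i$ whose contact set is $\{\rho_i x_i\}$ has an inward normal $y_i$ satisfying the same strict inequalities, which places $y_i$ in $\mathrm{int}(U_i)$. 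I recommend you replace the duality machinery with this two-line separating-hyperplane argument; if you do want to keep the duality route, you must first prove the face-dimension correspondence for $\Gamma$-convex polyhedra (which is not in the cited propositions) and correct the reversed implication about unique supporting hyperplanes.
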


\begin{proof}
Assume that $\text{int}(U_i) \neq \varnothing$ for all $1 \leq i \leq k$. According to equation \ref{eq:face-projection}, for any $ 1 \leq i \leq k $, there exists $y_i \in \text{int}(U_i)$ such that
\begin{equation}
\rho_i \langle x_i, y_i \rangle_H > \rho_j \langle \gamma x_j, y_i \rangle_H, \forall 1 \leq j \leq k, \forall \gamma \in \Gamma \text{ s.t. } \gamma x_j \neq x_i.
\end{equation}

Thus, $\langle \rho_j \gamma x_j - \rho_i x_i, y_i \rangle_H < 0, \forall 1 \leq j \leq k, \forall \gamma \in \Gamma $ such that $ \gamma x_j \neq x_i $. This implies that all other points $ \rho_j \gamma x_j $ lie on the future side of the hyperplane passing through the point $ \rho_i x_i $, described by $ \langle z - \rho_i x_i, y_i \rangle_H = 0 $. Therefore, $ \rho_i x_i $ is a vertex of $ \mathcal{C} $. Since $ \Gamma $ acts invariantly on $ \mathcal{C} $, the set of all vertices of $ \mathcal{C} $ is $ \{ \rho_i \gamma x_i | 1 \leq i \leq k, \gamma \in \Gamma \} $.

Assume that the set of all vertices of $ \mathcal{C} $ is $ \{ \rho_i \gamma x_i | 1 \leq i \leq k, \gamma \in \Gamma \} $. For $ 1 \leq i \leq k $, at the point $ \rho_i x_i $, there exists a supporting hyperplane $ P_i $, with an inward unit normal vector $ y_i \in \mathbb{H}^m $, such that all other points $ \rho_j \gamma x_j $ lie on the future side of $ P_i $. This implies that $ \langle \rho_j \gamma x_j - \rho_i x_i, y_i \rangle_H < 0 $, for all $ 1 \leq j \leq k $, and for all $ \gamma \in \Gamma $ such that $ \gamma x_j \neq x_i $. Therefore, $ \text{int}(U_i) \neq \varnothing, \forall 1 \leq i \leq k $.
\end{proof}

For simplicity, we still use $ \rho $ to denote the radial function of $ \partial \mathcal{C} $. According to Definition \ref{def:subnormal} and Proposition \ref{prop:support-plane}, the subnormal map $ \partial \rho $ induces a subnormal cell decomposition $ \mathcal{S} $ of $ \mathbb{H}^m $,
\begin{equation}
\mathbb{H}^m = \bigcup_{i=1}^k \bigcup_{\gamma \in \Gamma} \partial \rho(\gamma x_i) = \bigcup_{i=1}^k \bigcup_{\gamma \in \Gamma} \gamma \partial \rho(x_i).
\end{equation}

\begin{proposition}
\label{prop:isomorphic}
The cell decomposition $ \mathcal{D} $ and the subnormal cell decomposition $ \mathcal{S} $ are isomorphic.
\end{proposition}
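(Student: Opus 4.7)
The plan is to show that the two decompositions coincide cell-by-cell via the identification $U_{i,\gamma} = \partial\rho(\gamma x_i)$, leveraging the hyperbolic Legendre duality $\mathcal{P} = \mathcal{C}^*$ established above. First I fix an index $(i,\gamma)$ and unfold both definitions. By equation~(\ref{eq:face-projection}), $y \in U_{i,\gamma}$ iff $\pi_{i,\gamma}(y) = -1/(\rho_i\langle \gamma x_i, y\rangle_H)$ attains the supremum defining $\rho^*(y)$; equivalently, since the quantities $\langle \beta x_j, y\rangle_H$ are all negative and the $\rho_j$ all positive, iff
\[
\rho_i \langle \gamma x_i, y \rangle_H \geq \rho_j \langle \beta x_j, y \rangle_H, \quad \forall\, 1 \leq j \leq k,\ \forall \beta \in \Gamma.
\]

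Next I would observe that the map $z \mapsto \rho(z)\langle z,y\rangle_H$ is nothing but the restriction to $\partial\mathcal{C}$ of the Lorentzian linear functional $w \mapsto \langle w, y\rangle_H$ via the radial parameterization $z \mapsto \rho(z)z$ of Proposition~\ref{prop:projection}. Since $\mathcal{C}$ is a $\Gamma$-convex polyhedron whose vertex set is contained in $\{\rho_j\beta x_j\}$ and this functional is affine on each face of $\partial\mathcal{C}$, its supremum (which exists and is negative by Proposition~\ref{prop:support-plane}) is attained at a vertex. Hence the condition defining $y \in \partial\rho(\gamma x_i)$, namely $\rho(\gamma x_i)\langle \gamma x_i, y\rangle_H \geq \rho(z)\langle z,y\rangle_H$ for every $z \in \mathbb{H}^m$, reduces to exactly the finite system of inequalities displayed above. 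This yields $U_{i,\gamma} = \partial\rho(\gamma x_i) = \gamma\,\partial\rho(x_i)$. By Proposition~\ref{prop:vertices}, both sides have non-empty interior in the generic case $\text{int}(U_i)\neq\varnothing$; otherwise the corresponding cells on both sides simultaneously collapse to lower-dimensional strata, so the identification persists at the level of cell closures.

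To upgrade this cellwise equality to an isomorphism of cell complexes, I would verify incidence preservation. By~(\ref{eq:face-nonempty}), $U_{i,\gamma} \cap U_{j,\beta} \neq \varnothing$ iff the vertices $\rho_i\gamma x_i$ and $\rho_j\beta x_j$ are joined by an edge in the weighted Delaunay triangulation $\mathcal{T}$; under the hyperbolic Legendre duality (in the form recorded after Proposition~\ref{prop:legendre-dual}, matching supporting hyperplanes at vertices of $\mathcal{C}$ with points of faces of $\mathcal{P}$), this is precisely the condition for the subnormal cells $\gamma\partial\rho(x_i)$ and $\beta\partial\rho(x_j)$ to meet along a common face. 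Both decompositions are $\Gamma$-equivariant by construction, so the bijection descends to $\mathbb{H}^m/\Gamma$ and is inclusion-preserving on the induced face lattices.

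The principal obstacle is the reduction of the continuous supremum over $\mathbb{H}^m$ in the definition of $\partial\rho(\gamma x_i)$ to a finite maximum over the vertex set of $\mathcal{C}$. Making this rigorous requires knowing that $\rho(z)\langle z, y\rangle_H$ is affine along the radial image of each face of $\partial\mathcal{C}$, which follows from the piecewise structure of the radial function of a $\Gamma$-convex polyhedron guaranteed by Propositions~\ref{prop:fuchsian} and~\ref{prop:polyhedron-face}, together with the local finiteness ensuring that in any fundamental domain only finitely many translates $\gamma x_j$ are relevant. Once this reduction is in hand, the remainder of the argument is a direct translation between the two descriptions using the duality already developed.
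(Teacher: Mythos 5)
Your proposal is correct and follows essentially the same route as the paper: both reduce to the cellwise identity $U_{i,\gamma} = \partial\rho(\gamma x_i)$ by unfolding equation~(\ref{eq:face-projection}) and Definition~\ref{def:subnormal} and using the linearity of the Lorentzian pairing to pass between the supremum over $\mathcal{C}$ and the finite system of inequalities indexed by the generating points $\rho_j \beta x_j$. The paper's proof is just the three-line chain of equivalences in its equation~(\ref{eq:isomorphic}); your additional remarks on vertex attainment and incidence preservation only make explicit what the paper leaves implicit.
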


\begin{proof}
From equation \ref{eq:face-projection}, we know that for any $ y \in \mathbb{H}^m $,
\begin{equation}
\label{eq:isomorphic}
\begin{split}
y \in U_{i,\gamma} &\iff \rho_i \langle \gamma x_i, y \rangle_H \geq \rho_j \langle \beta x_j, y \rangle_H, \quad \forall 1 \leq j \leq k, \beta \in \Gamma \\
&\iff \rho_i \langle \gamma x_i, y \rangle_H \geq \langle z, y \rangle_H, \quad \forall z \in \mathcal{C} \\
&\iff y \in \partial \rho (\gamma x_i).
\end{split}
\end{equation}
Thus, $ U_{i,\gamma} = \partial \rho(\gamma x_i), \forall 1 \leq i \leq k, \gamma \in \Gamma$.
\end{proof}

Proposition \ref{prop:isomorphic} shows that $\mathcal{P}$, $\mathcal{U}$, $\mathcal{D}$, and $\mathcal{S}$ are isomorphic to each other.

\begin{proposition}
\label{prop:isomorphic-TC}
Assume that $ \text{int}(U_i) \neq \varnothing, \forall 1 \leq i \leq k $. The hyperbolic weighted Delaunay triangulation $ \mathcal{T} $ and the convex hull $ \mathcal{C} $ are isomorphic.
\end{proposition}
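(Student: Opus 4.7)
The plan is to exhibit an incidence-preserving bijection between the face lattices of $\mathcal{T}$ and $\mathcal{C}$, using the duality chain $\mathcal{T}\longleftrightarrow\mathcal{D}\cong\mathcal{S}\longleftrightarrow\partial\mathcal{C}$ that has already been essentially assembled by the preceding propositions. First, by Proposition \ref{prop:vertices}, the hypothesis $\mathrm{int}(U_i)\neq\varnothing$ forces the vertex set of $\mathcal{C}$ to be exactly $\{\rho_i\gamma x_i : 1\leq i\leq k,\ \gamma\in\Gamma\}$, which matches the vertex set of $\mathcal{T}$ under the obvious map $\gamma x_i\mapsto \rho_i\gamma x_i$. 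It remains to promote this vertex bijection to an incidence-preserving map of higher-dimensional faces.

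The key observation is a characterization of faces of $\mathcal{C}$ in terms of the cells $U_{i,\gamma}$. A subset $\{\rho_{i_0}\gamma_0 x_{i_0},\ldots,\rho_{i_\ell}\gamma_\ell x_{i_\ell}\}$ of vertices of $\mathcal{C}$ spans an $\ell$-face of $\mathcal{C}$ if and only if there is a supporting hyperplane of $\mathcal{C}$ touching $\partial\mathcal{C}$ in exactly these vertices. By Proposition \ref{prop:support-plane}, such a supporting hyperplane has a unique inward unit normal $y\in\mathbb{H}^m$, and by the chain of equivalences established in equation \ref{eq:isomorphic} in the proof of Proposition \ref{prop:isomorphic}, the hyperplane normal to $y$ supports $\mathcal{C}$ at $\rho_{i_j}\gamma_j x_{i_j}$ iff $y\in\partial\rho(\gamma_j x_{i_j})=U_{i_j,\gamma_j}$. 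Consequently, the listed vertices span a common face of $\mathcal{C}$ iff the intersection $U_{i_0,\gamma_0}\cap\cdots\cap U_{i_\ell,\gamma_\ell}$ is nonempty.

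On the triangulation side, the weighted Delaunay triangulation $\mathcal{T}$ is defined as the combinatorial dual of the cell decomposition $\mathcal{D}$: the one-dimensional case is exactly equation \ref{eq:face-nonempty}, and the general statement is that $\gamma_0 x_{i_0},\ldots,\gamma_\ell x_{i_\ell}$ span an $\ell$-simplex of $\mathcal{T}$ if and only if $U_{i_0,\gamma_0}\cap\cdots\cap U_{i_\ell,\gamma_\ell}\neq\varnothing$. Comparing with the characterization in the previous paragraph gives the desired bijection between simplices of $\mathcal{T}$ and faces of $\mathcal{C}$; equivariance under $\Gamma$ is automatic since every construction in sight is $\Gamma$-invariant.

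The main technical obstacle will be the dimension bookkeeping in degenerate configurations: if several cells $U_{i_j,\gamma_j}$ meet along an intersection of dimension strictly greater than $m-\ell$, the dual face of $\mathcal{C}$ need not have dimension $\ell$, and one either has to restrict to the generic setting implicit in the Delaunay construction or argue that the simplex-to-face map is defined at the level of maximal faces and then extended by taking subfaces. The local finiteness of $\mathcal{P}$ and $\mathcal{C}$ (Propositions \ref{prop:polyhedron-dual} and \ref{prop:polyhedron-face}) ensures that in any case only finitely many cells participate at each intersection, so the resulting combinatorial isomorphism is well-defined.
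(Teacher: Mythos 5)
Your argument is correct and follows essentially the same route as the paper: the vertex correspondence comes from Proposition \ref{prop:vertices}, and incidences are transferred through the chain $\mathcal{T}\leftrightarrow\mathcal{D}\cong\mathcal{S}\leftrightarrow\mathcal{C}$ via equations \ref{eq:face-nonempty} and \ref{eq:isomorphic}. The only difference is that you carry the correspondence up to $\ell$-dimensional faces and flag the degenerate-configuration issue, whereas the paper verifies only the vertex and edge (1-skeleton) correspondence.
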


\begin{proof}
From Proposition \ref{prop:vertices}, we know that the set of all vertices of $ \mathcal{C} $ is $ \{ \gamma x_i \mid 1 \leq i \leq k, \gamma \in \Gamma \} $, and the set of vertices of $ \mathcal{T} $ is also this set. Therefore, based on equations \ref{eq:face-nonempty} and \ref{eq:isomorphic}, we have:
\begin{equation}
\begin{split}
\gamma x_i \sim \beta x_j \in \mathcal{T} &\iff U_{i, \gamma} \cap U_{j, \beta} \neq \varnothing \\
&\iff \partial \rho(\gamma x_i) \cap \partial \rho(\beta x_j) \neq \varnothing \\
&\iff \rho_i \gamma x_i \sim \rho_j \beta x_j \in \mathcal{C}.
\end{split}
\end{equation}
\end{proof}

\subsection{Geometric Variational Principle for Hyperbolic Optimal Transport}
In this section, we analyze the optimal transport problem on a hyperbolic manifold using the perspective of convex differential geometry. 
By studying the hyperbolic Legendre duality theory, we investigate the equivalence between the hyperbolic optimal transport problem and the Minkowski problem in Minkowski spacetime, and derive the geometric variational principle for the hyperbolic optimal transport map. 
Based on Kantorovich duality, we prove that the Kantorovich functional is twice differentiable and concave, and that its gradient and Hessian matrix have integral expressions.

Let the hyperbolic transport cost function $ c: M \times M \to \mathbb{R} $ on the hyperbolic manifold $ M $ be defined as
\begin{equation}
c(x, y) = \ln \cosh \, d_M(x, y).
\end{equation}

Here, $ d_M $ is the geodesic distance function (Riemannian metric function) of $ M $. We call the optimal transport problem on the hyperbolic manifold the optimal transport problem under this hyperbolic transport cost function. Next, we consider constructing convex functions in the universal covering space of the hyperbolic manifold to perform geometric variational analysis of the hyperbolic optimal transport problem.

Let $ M = \mathbb{H}^m / \Gamma $ be an $m$-dimensional compact hyperbolic manifold, and let $ \mu $ be a probability measure on $ M $ that is absolutely continuous with respect to the Riemannian measure $ \sigma_M $. Let $ \nu = \sum_{i=1}^k \nu_i \delta_{p_i} $ be a discrete measure, where $ \{ p_1, p_2, \dots, p_k \} \subset M $ and $ \mu(M) = \sum_{i=1}^k \nu_i $.

According to the definition of absolute continuity, we know that there exists a measurable function $ f: M \to \mathbb{R} $ such that $ d\mu = f \, d\sigma_M $. Therefore, the composition $ f_H := f \circ \pi_\Gamma $ is a measurable function on $ \mathbb{H}^m $. Hence, we can define the measure $ \mu_H $ on $ \mathbb{H}^m $, with the density function given by $d\mu_H = f_H \, d\sigma_H$.

Now, we select a fundamental domain $D \subset \mathbb{H}^m$ for the group $\Gamma$ acting on $\mathbb{H}^m$, and points $x_1, x_2, \dots, x_k \in \overline{D}$ such that $\pi_\Gamma(x_i) = p_i$, for all $1 \leq i \leq k$.
For any $\varphi = (\varphi_1, \varphi_2, \dots, \varphi_k) \in \mathbb{R}^k$, we define $\mathcal{C}_\varphi$ as the convex hull in $\mathbb{R}^m_1$ of the set of points $\{ e^{-\varphi_i} \gamma x_i \mid 1 \leq i \leq k, \gamma \in \Gamma \}$. Let $\mathcal{P}_\varphi$ be the $\Gamma$-convex polyhedron defined by the set $\{ z \in C_f \mid \langle z - e^{\varphi_i} \gamma x_i, \gamma x_i \rangle_H \leq 0, \forall 1 \leq i \leq k, \forall \gamma \in \Gamma \}$.
According to the hyperbolic Legendre duality theory established in the previous section, $\mathcal{P}_\varphi$ and $\mathcal{C}_\varphi$ are mutually hyperbolic Legendre dual $\Gamma$-convex polyhedrons.

Let $\mathcal{U}_\varphi$ be the boundary of $\mathcal{P}_\varphi$, and its radial function $\rho^*$ is given by
\begin{equation}
\label{eq:radial-dual}
\rho^*(y) = \sup_{1 \leq i \leq k, \gamma \in \Gamma} \frac{-e^{\varphi_i}}{\langle \gamma x_i, y \rangle_H}, \forall y \in \mathbb{H}^m.
\end{equation}

This implies that $\mathcal{U}_\varphi$ is the upper envelope of the family of hyperplanes in $\mathbb{R}^m_1$: 
\begin{equation}
\pi_{i,\gamma}(y) = \frac{-e^{\varphi_i}}{\langle \gamma x_i, y \rangle_H}, 1 \leq i \leq k, \gamma \in \Gamma.
\end{equation}

For each face of $\mathcal{U}_\varphi$, denoted as $F_{i, \gamma}(\varphi)$, there exists $1 \leq i \leq k$ and $\gamma \in \Gamma$ such that its unit normal vector can be expressed as $\gamma x_i$. 
By Proposition \ref{prop:polyhedron-face}, we know that $F_{i, \gamma}(\varphi)$ is an $m$-dimensional compact convex polyhedron and is contained in the hyperplane $\pi_{i, \gamma} = \left\{ \frac{-e^{\varphi_i}}{\langle \gamma x_i, y \rangle_H} y \mid y \in \mathbb{H}^m \right\}$.
According to equation \ref{eq:face-projection}, we know that the radial projection of $F_{i, \gamma}(\varphi)$ can be expressed as
\begin{equation}
\label{eq:face-projection-phi}
U_{i, \gamma}(\varphi) = \{ y \in \mathbb{H}^m \mid e^{-\varphi_i} \langle \gamma x_i, y \rangle_H \geq e^{-\varphi_j} \langle \beta x_j, y \rangle_H, \forall 1 \leq j \leq k, \forall \beta \in \Gamma \}.
\end{equation}

Thus, the radial projection of $\mathcal{U}_\varphi$ induces a cell decomposition $\mathcal{D}_\varphi$ on $\mathbb{H}^m$,
\begin{equation}
\label{eq:cell-decomp}
\mathbb{H}^m = \bigcup_{i=1}^k \bigcup_{\gamma \in \Gamma} U_{i, \gamma}(\varphi) = \bigcup_{i=1}^k \bigcup_{\gamma \in \Gamma} \gamma U_i(\varphi).
\end{equation}

Since the discrete group $\Gamma$ is finitely generated, we can index all elements of $\Gamma$ using an index set $I$. Let $E_j^i \subset I$ be the set of indices $k \in I$ such that $U_{j, \gamma_k}(\varphi)$ and $U_i(\varphi)$ intersect in $\mathbb{H}^m$ along an edge $U_{i, jk}$. Since each edge $U_{i, jk}$ is the radial projection of an edge of the face $F_i$ and is contained in the intersection line of $\mathbb{H}^m$ and the hyperplane $\{ y \in \mathbb{R}^m_1 \mid e^{-\varphi_i} \langle x_i, y \rangle_H = e^{-\varphi_j} \langle \gamma_k x_j, y \rangle_H \}$, it follows that $U_{i, jk}$ is a geodesic segment in $\mathbb{H}^m$. Therefore, each cell $U_i(\varphi)$ is a hyperbolic convex polyhedron in $\mathbb{H}^m$. This implies that each set $E_j^i$ is either finite or empty.

\subsubsection{Geometric Variational Principle}
We first prove a property of the covering map $\pi_\Gamma$.
\begin{proposition}
\label{prop:cover-inj}
The covering map $\pi_\Gamma$ restricted to $U_i(\varphi) \setminus \cup_{k \in E_i^i} U_{i, ik}$ is injective.
\end{proposition}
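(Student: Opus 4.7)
The plan is a short proof by contradiction that combines the deck-transformation description of $\pi_\Gamma$ with the combinatorial structure of the cell decomposition $\mathcal{D}_\varphi$. Suppose $y, y' \in U_i(\varphi) \setminus \bigcup_{k \in E_i^i} U_{i,ik}$ satisfy $\pi_\Gamma(y) = \pi_\Gamma(y')$; the aim is to show that then $y = y'$.

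First I would use the fact that $\Gamma \in \mathcal{F}$ acts freely and properly discontinuously on $\mathbb{H}^m$, so $\pi_\Gamma$ is a regular covering and there exists a unique $\gamma \in \Gamma$ with $y' = \gamma y$. Assume for contradiction that $\gamma \neq 1$. Freeness of the action gives $\gamma U_i(\varphi) \neq U_i(\varphi)$, and by the indexing convention $\gamma U_i(\varphi) = U_{i,\gamma}(\varphi)$ is a cell of $\mathcal{D}_\varphi$ distinct from $U_i(\varphi)$. Combining $y' \in U_i(\varphi)$ with $y' = \gamma y \in \gamma U_i(\varphi)$ places $y'$ in the non-empty intersection $U_i(\varphi) \cap U_{i,\gamma}(\varphi)$ of two distinct cells.

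The next step invokes the polyhedral face structure inherited from $\mathcal{U}_\varphi$. Since $\mathcal{U}_\varphi$ is the upper envelope of a family of hyperplanes in $\mathbb{R}^m_1$, its radial projection $\mathcal{D}_\varphi$ is a polyhedral complex in $\mathbb{H}^m$, and any non-empty intersection of two distinct cells is a common face of both. For the pair $U_i(\varphi)$ and $U_{i,\gamma_k}(\varphi)$ with $\gamma = \gamma_k$, the definition of $E_j^i$ given just before equation \ref{eq:face-nonempty} identifies this shared face with the edge $U_{i,ik}$ and places $k$ in $E_i^i$. Therefore $y' \in U_{i,ik} \subset \bigcup_{k \in E_i^i} U_{i,ik}$, contradicting the choice of $y'$ from the complement. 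Hence $\gamma = 1$ and $y = y'$.

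The main obstacle I anticipate is ensuring that the shared face $U_i(\varphi) \cap U_{i,\gamma}(\varphi)$ is always captured by one of the indexed edges $U_{i,ik}$, rather than a lower-dimensional face that might not be directly recorded in $E_i^i$. I would resolve this by exploiting the face lattice of the polyhedral decomposition together with its $\Gamma$-invariance: any non-empty intersection of $U_i(\varphi)$ with a $\Gamma$-translate of itself is a face of $U_i(\varphi)$, and even when this face has codimension greater than one it lies in the closure of a codimension-one face shared with some $\Gamma$-translate of $U_i(\varphi)$, so $y'$ remains in the removed union. Once this combinatorial point is cleanly formulated the argument closes immediately.
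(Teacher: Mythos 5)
Your argument is correct and is essentially the paper's own proof: the paper reaches the same conclusion $y_2 \in U_i(\varphi)\cap U_{i,\gamma}(\varphi)$ (and $y_1 \in U_i(\varphi)\cap U_{i,\gamma^{-1}}(\varphi)$) by substituting $j=i$ and $\beta=\gamma^{\pm 1}$ into the defining inequalities of $U_i(\varphi)$, which amounts to an explicit inner-product verification of the equivariance $\gamma U_i(\varphi)=U_{i,\gamma}(\varphi)$ that you invoke directly. The codimension issue you flag at the end is a genuine subtlety, but it is equally present and left implicit in the paper's proof, which simply asserts the existence of $k_1,k_2\in E_i^i$ with $y_1\in U_{i,ik_1}$ and $y_2\in U_{i,ik_2}$.
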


\begin{proof}
Suppose there are two points $y_1, y_2 \in U_i(\varphi)$ such that $\pi_\Gamma(y_1) = \pi_\Gamma(y_2)$. Then, there exists a non-trivial element $\gamma \in \Gamma$ such that $y_2 = \gamma y_1$. According to equation \ref{eq:face-projection-phi}, since $y_1, \gamma y_1 \in U_i(\varphi)$, we can deduce that
\begin{equation}
\label{eq:cover-inj-proof1}
e^{-\varphi_i} \langle x_i, y_1 \rangle_H \geq e^{-\varphi_j} \langle \beta x_j, y_1 \rangle_H, \forall 1 \leq j \leq k, \beta \in \Gamma,
\end{equation}
\begin{equation}
\label{eq:cover-inj-proof2}
e^{-\varphi_i} \langle x_i, \gamma y_1 \rangle_H \geq e^{-\varphi_j} \langle \beta x_j, \gamma y_1 \rangle_H, \forall 1 \leq j \leq k, \beta \in \Gamma,
\end{equation}
Let $j = i$ and $\beta = \gamma^{-1}$ in equation \ref{eq:cover-inj-proof1}, and  let $j = i$ and $\beta = \gamma$ in equation \ref{eq:cover-inj-proof2}, we obtain
\begin{equation}
e^{-\varphi_i} \langle x_i, y_1 \rangle_H \geq e^{-\varphi_i} \langle \gamma^{-1} x_i, y_1 \rangle_H,
\end{equation}
\begin{equation}
e^{-\varphi_i} \langle \gamma^{-1} x_i, y_1 \rangle_H = e^{-\varphi_i} \langle x_i, \gamma y_1 \rangle_H \geq e^{-\varphi_i} \langle \gamma x_i, \gamma y_1 \rangle_H = e^{-\varphi_i} \langle x_i, y_1 \rangle_H.
\end{equation}

This implies that $e^{-\varphi_i} \langle x_i, y_1 \rangle_H = e^{-\varphi_i} \langle \gamma^{-1} x_i, y_1 \rangle_H$ and $e^{-\varphi_i} \langle \gamma x_i, \gamma y_1 \rangle_H = e^{-\varphi_i} \langle x_i, \gamma y_1 \rangle_H$.

Therefore, $y_1 \in U_i(\varphi) \cap U_{i, \gamma^{-1}}(\varphi)$ and $y_2 \in U_i(\varphi) \cap U_{i, \gamma}(\varphi)$. Thus, there exist $k_1, k_2 \in E_i^i$ such that $y_1 \in U_{i, ik_1}$ and $y_2 \in U_{i, ik_2}$. Therefore, the covering map $\pi_\Gamma$ restricted to $U_i(\varphi) \setminus \cup_{k \in E_i^i} U_{i, ik}$ is injective.
\end{proof}

For any $1 \leq i \leq k$, let the covering projection of the cell $U_i(\varphi)$ be denoted as $W_i(\varphi) := \pi_\Gamma(U_i(\varphi))$, with its $\mu$-measure defined as $\omega_i(\varphi) := \mu(W_i(\varphi))$. 
By Proposition \ref{prop:cover-inj}, the covering map $\pi_\Gamma$ is an isometry on $\text{int}(U_i(\varphi))$ and maps it to $\text{int}(W_i(\varphi))$. 
According to Corollary 5.14 in \cite{lee2018introduction}, we know that $\pi_\Gamma$ maps each geodesic edge of $U_i(\varphi)$ to a geodesic edge of $W_i(\varphi)$. 
Thus, $\pi_\Gamma$ maps the boundary of $U_i(\varphi)$ to the boundary of $W_i(\varphi)$. 
The two cells $W_i(\varphi)$ and $W_j(\varphi)$ intersect in a geodesic edge $W_{ij}$ if and only if $E_j^i \neq \varnothing $.
Note that every hyperplane of codimension 1 in $\mathbb{H}^m$ has measure zero under the Riemannian measure $\sigma_H$, hence
\begin{equation}
\label{eq:measure-zero}
\sigma_M (\partial W_i(\varphi)) = \sigma_H (\partial U_i(\varphi)) = 0, \forall 1 \leq i \leq k,
\end{equation}
\begin{equation}
\omega_i(\varphi) = \mu(W_i(\varphi)) = \mu_H (U_i(\varphi)), \forall 1 \leq i \leq k.
\end{equation}

Furthermore, $W_i(\varphi)$ is a convex region with finitely many geodesic edges, and
\begin{equation}
\label{eq:cell}
\begin{split}
W_i(\varphi) &= \pi_\Gamma \left\{ y \in \mathbb{H}^m \mid e^{-\varphi_i} \langle x_i, y \rangle_H \geq e^{-\varphi_j} \langle \beta x_j, y \rangle_H, \forall 1 \leq j \leq k, \beta \in \Gamma \right\} \\
&= \{ p \in M \mid \ln \cosh d_M (p, p_i) - \varphi_i \leq \ln \cosh d_M (p, p_j) - \varphi_j, \forall 1 \leq j \leq k \}.
\end{split}
\end{equation}

Next, we will prove that the collection of all $W_i(\varphi)$ induces a cell decomposition on $M$.
\begin{proposition}
\label{prop:m-decomp}
Given a vector $\varphi = (\varphi_1, \varphi_2, \dots, \varphi_k) \in \mathbb{R}^k$, the collection of all cells $W_i(\varphi)$, $1 \leq i \leq k$, induces a cell decomposition $\mathcal{W}_\varphi$ on $M$,
\begin{equation}
\label{eq:m-decomp}
M = \bigcup_{i=1}^k W_i(\varphi).
\end{equation}
Furthermore, $\mathcal{W}_\varphi$ is independent of the choice of the fundamental domain $D$ of the group $\Gamma$ acting on $\mathbb{H}^m$ and the representatives $x_i \in \overline{D}$, $1 \leq i \leq k$.
\end{proposition}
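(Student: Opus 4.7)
The plan is to verify three things in sequence: covering of $M$, disjointness of cell interiors, and independence of $\mathcal{W}_\varphi$ from the auxiliary choices. The convex polyhedral structure of each individual $W_i(\varphi)$, with its finitely many geodesic faces, has already been recorded in the discussion preceding the proposition and will not need to be redone here.

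The covering property follows at once from applying the surjective projection $\pi_\Gamma$ to equation \eqref{eq:cell-decomp}: since $\pi_\Gamma\circ\gamma=\pi_\Gamma$ for every $\gamma\in\Gamma$, one has $\pi_\Gamma(\gamma U_i(\varphi))=W_i(\varphi)$, so $M=\pi_\Gamma(\mathbb{H}^m)=\bigcup_{i=1}^k W_i(\varphi)$. Independence from $D$ and the lifts $x_i\in\overline{D}$ is likewise immediate from the intrinsic description in equation \eqref{eq:cell}, whose right-hand side mentions only $p_i\in M$, the weights $\varphi_i$, and the distance $d_M$. Any other admissible choice $(D',x_i')$ with $x_i'=\beta_i x_i$, $\beta_i\in\Gamma$, satisfies $\pi_\Gamma(\beta_i x_i)=p_i$ and therefore produces exactly the same cells.

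The substantive step is ruling out overlap between distinct cells, which I would handle by adapting Proposition \ref{prop:cover-inj} to the case of different indices. Assume for contradiction that $p\in\mathrm{int}(W_i(\varphi))\cap\mathrm{int}(W_j(\varphi))$ with $i\neq j$, and pick lifts $y_1\in\mathrm{int}(U_i(\varphi))$ and $y_2=\gamma y_1\in\mathrm{int}(U_j(\varphi))$ for some $\gamma\in\Gamma$. Because interior lifts make every competing inequality in \eqref{eq:face-projection-phi} strict, specialising to $(l,\beta)=(j,\gamma^{-1})$ in the first cell and to $(l,\beta)=(i,\gamma)$ in the second, and then invoking the $\Gamma$-invariance $\langle\gamma a,\gamma b\rangle_H=\langle a,b\rangle_H$, yields both
\[
e^{-\varphi_i}\langle x_i,y_1\rangle_H>e^{-\varphi_j}\langle\gamma^{-1}x_j,y_1\rangle_H\quad\text{and}\quad e^{-\varphi_j}\langle\gamma^{-1}x_j,y_1\rangle_H>e^{-\varphi_i}\langle x_i,y_1\rangle_H,
\]
a direct contradiction. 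Running the parallel argument with non-strict inequalities confines boundary overlaps $W_i(\varphi)\cap W_j(\varphi)$, $i\neq j$, to the lower-dimensional equality locus $\ln\cosh d_M(\cdot,p_i)-\varphi_i=\ln\cosh d_M(\cdot,p_j)-\varphi_j$, which is the shared geodesic face of the two cells.

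The main obstacle I anticipate is the bookkeeping step that promotes ``$p$ lies in the topological interior of $W_i(\varphi)$ on $M$'' to ``a lift $y_1$ can be chosen in the strict interior of $U_i(\varphi)$ at which all competing inequalities are strict.'' This reduces to showing that $\pi_\Gamma$ restricts to a homeomorphism between the respective interiors, which follows from Proposition \ref{prop:cover-inj} combined with the hyperbolic-polyhedral structure of $U_i(\varphi)$; once that correspondence is in hand, the pair of contradictory inequalities above is the direct hyperbolic analogue of the standard power-diagram argument used in the Euclidean and spherical cases.
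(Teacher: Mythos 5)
Your proposal is correct and follows essentially the same route as the paper: covering by projecting the decomposition in equation \ref{eq:cell-decomp}, disjointness by pulling an overlap point back to lifts $y_1\in U_i(\varphi)$ and $\gamma y_1\in U_j(\varphi)$ and playing the two defining inequalities of equation \ref{eq:face-projection-phi} against each other via $\Gamma$-invariance of $\langle\cdot,\cdot\rangle_H$, and independence from $(D,x_i)$ via the observation that changing representatives only replaces $U_i(\varphi)$ by $U_{i,\gamma_i}(\varphi)$, which has the same projection (equivalently, via the intrinsic formula in equation \ref{eq:cell}). One caution on your contradiction argument with strict inequalities: a point of the topological interior of $W_i(\varphi)$ in $M$ need not lift into $\mathrm{int}(U_i(\varphi))$ when the cell is self-adjacent (its lifts may sit on a self-gluing edge $U_{i,ik}$), so the claimed homeomorphism of interiors is not quite enough; but this is harmless, since your non-strict variant --- take arbitrary lifts and conclude equality, hence that the overlap lies on the common wall --- is exactly the paper's argument and already completes the proof.
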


\begin{proof}
Given any $p \in M$, there exists a $y \in \overline{D}$ such that $\pi_\Gamma(y) = p$. From the cell decomposition $\mathcal{D}_\varphi$ given by equation \ref{eq:cell-decomp}, we know that there exists $1 \leq i \leq k$ and $\gamma \in \Gamma$ such that $y \in U_{i,\gamma}(\varphi)$, hence $p \in \pi_\Gamma(U_{i,\gamma}(\varphi)) = \pi_\Gamma(U_i(\varphi)) = W_i(\varphi)$.

If for $1 \leq i < j \leq k$ we have $p \in W_i(\varphi) \cap W_j(\varphi)$, then there exist $y_1 \in U_i(\varphi)$ and $y_2 \in U_j(\varphi)$ such that $\pi_\Gamma(y_1) = \pi_\Gamma(y_2) = p$. 
Therefore, there exists an element $\gamma \in \Gamma$ such that $y_2 = \gamma y_1$, which implies that $y_1 \in U_{j, \gamma^{-1}}$. 
Hence, $y_1 \in \partial U_i \cap \partial U_{j, \gamma^{-1}}$. Note that $\pi_\Gamma(U_{j, \gamma^{-1}}) = \pi_\Gamma(U_j) = W_j(\varphi)$, so $p \in \partial W_i(\varphi) \cap \partial W_j(\varphi)$. 
Therefore, the collection of all cells $W_i(\varphi)$ induces a cell decomposition on $M$.

Assume we select another fundamental domain $D'$ of the group $\Gamma$ acting on $\mathbb{H}^m$, and representatives $x'_i \in \overline{D'}$, $1 \leq i \leq k$. Then, for $1 \leq i \leq k$, there exists $\gamma_i \in \Gamma$ such that $x'_i = \gamma_i x_i$, and thus
\begin{equation}
\begin{split}
U_{x'_i}(\varphi) &= \{ y \in \mathbb{H}^m \mid e^{-\varphi_i} \langle x'_i, y \rangle_H \geq e^{-\varphi_j} \langle \beta x'_j, y \rangle_H, \forall 1 \leq j \leq k, \beta \in \Gamma \} \\
&= \{ y \in \mathbb{H}^m \mid e^{-\varphi_i} \langle \gamma_i x_i, y \rangle_H \geq e^{-\varphi_j} \langle \beta \gamma_j x_j, y \rangle_H, \forall 1 \leq j \leq k, \beta \in \Gamma \} \\
&= \{ y \in \mathbb{H}^m \mid e^{-\varphi_i} \langle \gamma_i x_i, y \rangle_H \geq e^{-\varphi_j} \langle \zeta x_j, y \rangle_H, \forall 1 \leq j \leq k, \zeta \in \Gamma \} \\
&= U_{i, \gamma_i}(\varphi).
\end{split}
\end{equation}

Thus, we have $\pi_\Gamma(U_{x'_i}(\varphi)) = \pi_\Gamma(U_{i, \gamma_i}(\varphi)) = W_i(\varphi)$. This shows that the cells $W_i(\varphi)$ is independent of the choice of fundamental domain $D$ and representatives $x_i \in \overline{D}, 1 \leq i \leq k$.
\end{proof}

The following proposition gives an important property regarding the hyperbolic transport cost of $\mathcal{W}_\varphi$.

\begin{proposition}
\label{prop:ot-cost}
Let $M = \mathbb{H}^m/\Gamma$ be an $m$-dimensional compact hyperbolic manifold, and let $\mu$ be a measure on $M$ that is absolutely continuous with respect to the Riemannian measure $\sigma_M$. Given $\varphi = (\varphi_1, \varphi_2, \dots, \varphi_k) \in \mathbb{R}^k$, let $\mathcal{W}_\varphi$ be the cell decomposition given by equation \ref{eq:m-decomp}. For any cell decomposition $\mathcal{D}$ of $M$, where $M = \bigcup_{i=1}^{k} X_i$ and $\mu(X_i) = \mu(W_i(\varphi)), \forall 1 \leq i \leq k$, the hyperbolic transport cost of $\mathcal{W}_\varphi$ is no greater than the hyperbolic transport cost of $\mathcal{D}$.
\end{proposition}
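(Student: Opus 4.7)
The plan is to exploit the defining property of the cells $W_i(\varphi)$ from equation \ref{eq:cell}, which characterize them as the regions where the tilted cost $c(p,p_i)-\varphi_i$ attains its pointwise minimum over $i$. This is the hyperbolic analog of the power/Laguerre diagram argument in the Euclidean semi-discrete setting: once the cells are defined by a $c$-concave potential, any rearrangement with the same marginals can only increase the total cost.

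First I would write both transport costs in the ``tilted'' form by adding and subtracting the vector $\varphi$. Setting $h(p):=\min_{1\le j\le k}\bigl(c(p,p_j)-\varphi_j\bigr)$, I obtain, for the optimal-looking decomposition,
\begin{equation*}
\sum_{i=1}^k \int_{W_i(\varphi)} c(p,p_i)\,d\mu(p)
= \sum_{i=1}^k \int_{W_i(\varphi)} \bigl(c(p,p_i)-\varphi_i\bigr)\,d\mu(p) + \sum_{i=1}^k \varphi_i\,\mu(W_i(\varphi)),
\end{equation*}
and analogously for the competing decomposition $\mathcal{D}$ with cells $X_i$. Because the hypothesis gives $\mu(X_i)=\mu(W_i(\varphi))$ for each $i$, the two constant terms $\sum_i \varphi_i\mu(\,\cdot\,)$ agree and cancel. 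So the inequality reduces to comparing the tilted integrals.

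Next I would invoke equation \ref{eq:cell}: for $p\in W_i(\varphi)$ the value $c(p,p_i)-\varphi_i$ is exactly $h(p)$, hence
\begin{equation*}
\sum_{i=1}^k \int_{W_i(\varphi)} \bigl(c(p,p_i)-\varphi_i\bigr)\,d\mu(p) = \int_M h(p)\,d\mu(p).
\end{equation*}
For $\mathcal{D}$, on each $X_i$ one only has $c(p,p_i)-\varphi_i \geq h(p)$ by the definition of $h$, so the corresponding sum is at least $\int_M h\,d\mu$. Subtracting, the desired inequality follows.

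The only mild obstacle is set-theoretic: points lying on the boundary shared by several $W_i(\varphi)$ may be counted in more than one cell, so I would verify that this double-counting is harmless. By Proposition \ref{prop:m-decomp} the boundaries of the $W_i(\varphi)$ are finite unions of geodesic hyperplanes in $M$, which have Riemannian measure zero (see equation \ref{eq:measure-zero}); since $\mu$ is absolutely continuous with respect to $\sigma_M$, these overlaps contribute nothing to any of the integrals, so the rearrangement argument above is rigorous. I would also implicitly assume the cells $X_i$ in the competing decomposition $\mathcal{D}$ are measurable and essentially disjoint, which is part of the standing notion of a cell decomposition used in the paper.
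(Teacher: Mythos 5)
Your proposal is correct and follows essentially the same route as the paper's proof: both rest on the defining inequality of $W_i(\varphi)$ from equation \ref{eq:cell} to compare the tilted costs pointwise, and on the equal-measure hypothesis to cancel the $\sum_i\varphi_i\,\mu(\cdot)$ terms. The paper phrases the comparison by splitting the integrals over the intersections $W_i(\varphi)\cap X_j$ rather than introducing the envelope function $h$, but this is only a notational difference; your remark on the measure-zero boundaries is a harmless addition the paper handles via equation \ref{eq:measure-zero}.
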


\begin{proof}
According to equation \ref{eq:cell}, we know that for any point $p \in W_i(\varphi) \cap X_j$, we have $\ln \cosh d_M(p, p_i) - \varphi_i \leq \ln \cosh d_M(p, p_j) - \varphi_j$, thus

\begin{equation}
\begin{split}
&\sum_{i=1}^k \int_{W_i(\varphi)} \left( \ln \cosh d_M(p, p_i) - \varphi_i \right) d\mu \\
= &\sum_{i=1}^k \sum_{j=1}^k \int_{W_i(\varphi) \cap X_j} \left( \ln \cosh d_M(p, p_i) - \varphi_i \right) d\mu \\
\leq &\sum_{i=1}^k \sum_{j=1}^k \int_{W_i(\varphi) \cap X_j} \left( \ln \cosh d_M(p, p_j) - \varphi_j \right) d\mu \\
= &\sum_{j=1}^k \int_{X_j} \left( \ln \cosh d_M(p, p_j) - \varphi_j \right) d\mu.
\end{split}
\end{equation}

Note that $\sum_{i=1}^k \int_{W_i(\varphi)} \varphi_i d\mu = \sum_{i=1}^k \mu(W_i(\varphi)) \varphi_i = \sum_{j=1}^k \mu(X_j) \varphi_j = \sum_{j=1}^k \int_{X_j} \varphi_j d\mu$.

Therefore, the hyperbolic transport cost of $\mathcal{W}_\varphi$ is no greater than the hyperbolic transport cost of $\mathcal{D}$.
\end{proof}

Based on the above, we now prove the following theorem.

\begin{theorem}
\label{thm:1}
Let $M = \mathbb{H}^m / \Gamma$ be an $m$-dimensional compact hyperbolic manifold, and $\mu$ be a measure on $M$ absolutely continuous with respect to the Riemannian measure $\sigma_M$. 
Let $\nu = \sum_{i=1}^k \nu_i \delta_{p_i}$ be a discrete measure satisfying $\{p_1, p_2, \dots, p_k\} \subset M$ and $\mu(M) = \sum_{i=1}^k \nu_i$. 
Given $\varphi = (\varphi_1, \varphi_2, \dots, \varphi_k) \in \mathbb{R}^k$, let $\mathcal{W}_\varphi$ be the cell decomposition given by equation \ref{eq:m-decomp}. 
Then, the transport map $T: W_i(\varphi) \mapsto p_i$ is a hyperbolic optimal transport map from $(M, \mu)$ to $(M, \nu)$ if and only if
\begin{equation}
\label{eq:face-measure}
\omega_i(\varphi) = \nu_i, \quad \forall 1 \leq i \leq k.
\end{equation}
\end{theorem}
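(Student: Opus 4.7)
The plan is to reduce the theorem to a mass-preservation statement combined with Proposition~\ref{prop:ot-cost}, which already packages the cost-minimality of $\mathcal{W}_\varphi$ among cell decompositions with prescribed mass.

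First I would unpack what it means for $T\colon W_i(\varphi)\mapsto p_i$ to push $\mu$ forward to $\nu$. Since the cells $W_i(\varphi)$ tile $M$ by Proposition~\ref{prop:m-decomp} and their pairwise intersections lie in the $\sigma_M$-null boundaries identified in equation~\ref{eq:measure-zero}, the preimage $T^{-1}(\{p_i\})$ agrees with $W_i(\varphi)$ up to a $\mu$-null set (using $\mu\ll\sigma_M$). Consequently
\[
T_\#\mu(\{p_i\}) \;=\; \mu(T^{-1}(\{p_i\})) \;=\; \mu(W_i(\varphi)) \;=\; \omega_i(\varphi),
\]
so the admissibility condition $T_\#\mu=\nu$ is exactly equivalent to $\omega_i(\varphi)=\nu_i$ for every $1\le i\le k$. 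This single identity already handles the forward implication: if $T$ is optimal then it is in particular admissible, forcing $\omega_i(\varphi)=\nu_i$.

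For the backward implication, assume $\omega_i(\varphi)=\nu_i$ for all $i$. By the identity above, $T$ is an admissible transport map from $\mu$ to $\nu$. Now let $T'\colon M\to\{p_1,\dots,p_k\}$ be any competing transport map with $T'_\#\mu=\nu$; since $\nu$ is supported on $\{p_1,\dots,p_k\}$, every competitor automatically has this form. Setting $X_i:=T'^{-1}(\{p_i\})$ gives a measurable decomposition of $M$ with $\mu(X_i)=\nu_i=\mu(W_i(\varphi))$, which is precisely the hypothesis of Proposition~\ref{prop:ot-cost}. That proposition then asserts that the hyperbolic transport cost of $\mathcal{W}_\varphi$ (which equals the cost of $T$) is no greater than the cost of the decomposition induced by $T'$. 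Since $T'$ was arbitrary, $T$ attains the Monge infimum and is therefore optimal.

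The only delicate point is the boundary bookkeeping in the first step: one needs $\mathcal{W}_\varphi$ to be a genuine partition of $M$ up to a $\mu$-null set so that $T$ is well defined almost everywhere and the pushforward computation is unambiguous. This is exactly what Proposition~\ref{prop:m-decomp} together with equation~\ref{eq:measure-zero} and the absolute continuity of $\mu$ supply. With those ingredients in place, the main content of the theorem is fully outsourced to Proposition~\ref{prop:ot-cost}, and what remains is the two-line bookkeeping above.
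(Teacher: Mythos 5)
Your proposal is correct and follows exactly the paper's route: the paper's own proof is the single sentence ``follows directly from Propositions \ref{prop:m-decomp} and \ref{prop:ot-cost},'' and your write-up simply fills in the bookkeeping (boundaries are $\mu$-null, admissibility $\Leftrightarrow$ $\omega_i(\varphi)=\nu_i$, competitors induce decompositions with the prescribed masses) that the paper leaves implicit.
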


\begin{proof}
The theorem follows directly from Proposition \ref{prop:m-decomp} and \ref{prop:ot-cost}.
\end{proof}

Theorem \ref{thm:1} shows that the composition of radial projection and covering map of a $\Gamma$-convex polyhedron can induce a hyperbolic optimal transport map. 
Therefore, the existence of the hyperbolic optimal transport map is transformed into the construction of a $\Gamma$-convex polyhedron with a given measure on each face. 
This provides a finite-dimensional geometric variational principle for optimal transport problems on hyperbolic manifolds.

\subsubsection{Kantorovich Dual}
To prove the existence of the hyperbolic optimal transport map, we need to solve the finite-dimensional nonlinear system \ref{eq:face-measure}. 
In this section, we prove that the nonlinear system \ref{eq:face-measure} is precisely the existence condition for the extremal solution of the Kantorovich functional. 
Since the radial function $ \rho^* $ is invariant under the action of the group $ \Gamma $, we can decompose a function $ \rho^*_M $ on the manifold $ M $ such that $ \rho^*_M \circ \pi_{\Gamma} = \rho^* $. 
For $ \varphi = (\varphi_1, \varphi_2, \dots, \varphi_k) \in \mathbb{R}^k $, we identify $ \varphi $ as a finitely supported function defined on $ M $, that is, $ \varphi(p_i) = \varphi_i, \forall 1 \leq i \leq k $, and for other points $ p \in M $, $\varphi(p) = 0$.

By equation \ref{eq:radial-dual}, we have
\begin{align}
&\rho^*(y) = \sup_{1 \leq i \leq k, \gamma \in \Gamma} \frac{-e^{\varphi_i}}{\langle \gamma x_i, y \rangle_H}, \nonumber \\
\Leftrightarrow &- \ln \rho^*(y) = \inf_{1 \leq i \leq k, \gamma \in \Gamma} \ln \left( - \langle \gamma x_i, y \rangle_H \right) - \varphi_i, \nonumber \\
\Leftrightarrow &- \ln \rho_M^*(p) = \inf_{1 \leq i \leq k} \ln \cosh d_M (p_i, p) - \varphi_i, \label{eq:kd1} \\
\Leftrightarrow &\, \varphi_i = \inf_{p \in M} \ln \cosh d_M (p_i, p) - (- \ln \rho_M^*(p)). \label{eq:kd2}
\end{align}

Thus, the $\overline{c}$-transform of $ \varphi $ under the hyperbolic transport cost function $ c(x, y) = \ln \cosh d_M(x, y) $ is $ \varphi^{\overline{c}}(p) = - \ln \rho^*_M(p) $. Therefore, from equation \ref{eq:cell} we have
\begin{equation}
\begin{split}
\varphi_i + \varphi^{\overline{c}}(p) = c(p_i, p) &\Leftrightarrow - \ln \rho^*_M(p) = \ln \cosh d_M(p_i, p) - \varphi_i, \\
&\Leftrightarrow p \in W_i(\varphi).
\end{split}
\end{equation}

Based on Theorem \ref{thm:duality} and equation \ref{eq:dp-exist}, the Kantorovich functional for the semi-discrete hyperbolic optimal transport problem from $(M, \mu)$ to $(M, \nu)$ is given by:
\begin{equation}
\label{eq:kantorovich-functional}
\begin{split}
I(\varphi) &= \int_M \varphi^{\overline{c}} \, d\mu + \int_M \varphi \, d\nu \\
&= \sum_{i=1}^k \int_{W_i(\varphi)} c(p_i,p) - \varphi_i \, d\mu + \sum_{i=1}^k \nu_i \varphi_i.
\end{split}
\end{equation}

By equation \ref{eq:measure-zero}, we can express $I(\varphi)$ as
\begin{equation}
I(\varphi) = \sum_{i=1}^k \int_{U_i(\varphi)} c_H(x_i,x) - \varphi_i \, d\mu_H + \sum_{i=1}^k \nu_i \varphi_i.
\end{equation}

\begin{figure}[h]
\begin{center}
\fbox{\rule{0pt}{2in}
\includegraphics[width=0.8\linewidth]{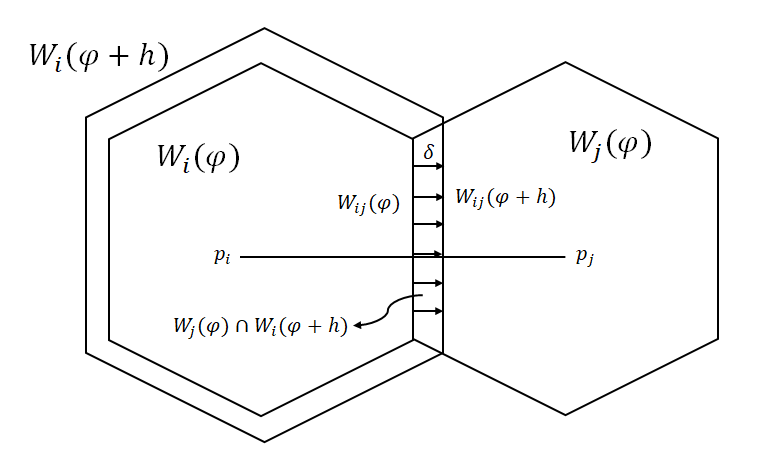}}
\end{center}
   \caption{Computing the cell measure}
\label{fig:cell-delta}
\end{figure}

To compute the gradient of $I(\varphi)$, we need to estimate the $\mu$-measure of the intersection edge between two adjacent cells. Suppose the intersection edge between two adjacent cells $W_i(\varphi)$ and $W_j(\varphi)$ is $W_{ij}(\varphi)$. Let $\lambda : M \to \mathbb{R}$ be an auxiliary function defined as:
\begin{equation}
\lambda(x) = c(p_i, x) - c(p_j, x), \forall x \in M.
\end{equation}

Thus, $W_{ij}(\varphi)$ is a level set of $\lambda$ where $\lambda(x) = \varphi_i - \varphi_j$, and is orthogonal to the gradient of $\lambda$,
\begin{equation}
\nabla \lambda(x) = \nabla_x c(p_i, x) - \nabla_x c(p_j, x),
\end{equation}

where $\nabla$ is the Levi-Civita connection of the manifold $M$.

The gradient flow of $\lambda$ is a smooth curve $ r : \mathbb{R} \to M $, defined by
\begin{equation}
\frac{d}{dt}r(x,t) = \frac{\nabla \lambda (x)}{|\nabla \lambda (x)|}, \quad r(x,0)= x.
\end{equation}

Note that $ \frac{d}{dt} \lambda(r(x,t)) = \langle \nabla \lambda, \dot{r} \rangle (r(x,t)) = |\nabla \lambda|(r(x,t)) $. 
This shows that the streamline $ r(x, \cdot) $ is a geodesic, with its initial position at $ x $ and the flow velocity being the unit velocity along the gradient direction $ \nabla \lambda(x) $. Given $ x \in M $ and $ t \in \mathbb{R} $, the geodesic distance from $ r(x, 0) $ to $ r(x, t) $ is $ |t| $.

For any $ \delta > 0 $, let $ \textbf{e}_i $ be the unit vector along the $ i $-th coordinate, and $ h = \delta \textbf{e}_i $. 
When $ \varphi $ changes to $ \varphi + h $, $ W_i(\varphi) $ changes to $ W_i(\varphi + h) $, and $ W_j(\varphi) \cap W_i(\varphi + h) \neq \varnothing $, as shown in Figure \ref{fig:cell-delta}.
For $ x \in W_{ij}(\varphi) $, suppose $ T(x) $ is the time period for the streamline $ r(x, \cdot) $ starting from $ W_{ij}(\varphi) $ to $ W_{ij}(\varphi + h) $. Then, there exists $ \xi = r(x, t_0) $, with $ t_0 \in [0, T(x)] $, such that
\begin{equation}
\begin{split}
\delta = \lambda(r(x,T(x))) - \lambda(r(x,0)) &= \int_0^{T(x)} \dot{\lambda}(r(x,T(x)) \, dt \\
&= \int_0^{T(x)} |\nabla \lambda|r(x,T(x)) \, dt \\
&= |\nabla \lambda(\xi)|T(x).
\end{split}
\end{equation}

Since $ \lambda $ is continuous and $ W_j(\varphi) \cap W_i(\varphi + h) $ is compact, $ \nabla \lambda $ is Lipschitz continuous on $ W_j(\varphi) \cap W_i(\varphi + h) $. Therefore, there exists $ C > 0 $ such that
\begin{equation}
\label{eq:grad-est}
|\nabla \lambda(x)| - C\delta \leq |\nabla \lambda(\xi)| \leq |\nabla \lambda(x)| + C\delta.
\end{equation}

Then, we have
\begin{equation}
T(x) = \frac{\delta}{|\nabla \lambda(\xi)|} = \frac{\delta}{|\nabla \lambda(x)|} + o(\delta).
\end{equation}

Therefore,
\begin{equation}
\begin{split}
&\mu(W_j(\varphi) \cap W_i(\varphi + h)) \\
= &\int_{W_j(\varphi) \cap W_i(\varphi + h)} f(y) \, d\sigma_M(y) \\
= &\int_{W_{ij}(\varphi)} T(x)f(x) \, d\sigma^{m-1}(x) \\
= &\int_{W_{ij}(\varphi)} \left( \frac{\delta}{|\nabla \lambda(x)|} + o(\delta) \right) f(x) \, d\sigma^{m-1}(x) \\
= &\, \delta \int_{W_{ij}(\varphi)} \frac{f(x)}{|\nabla_x c(p_i, x) - \nabla_x c(p_j, x)|} \, d\sigma^{m-1}(x) + o(\delta),
\end{split}
\end{equation}
where $ \sigma^{m-1} $ is the $(m-1)$-dimensional Hausdorff measure on $ M $. 

Based on the above, we obtain the partial derivative of the cell measure $ \omega_i(\varphi) $ as
\begin{equation}
\label{eq:omega-grad-ij}
\frac{\partial \omega_i(\varphi)}{\partial \varphi_j} = \frac{\partial \omega_j(\varphi)}{\partial \varphi_i} = - \int_{W_{ij}(\varphi)} \frac{f(x)}{|\nabla_x c(p_i, x) - \nabla_x c(p_j, x)|} \, d\sigma^{m-1}(x).
\end{equation}

If $ W_i(\varphi) $ and $ W_j(\varphi) $ are not adjacent, then $ W_{ij}(\varphi) = 0 $ and $ \frac{\partial \omega_j(\varphi)}{\partial \varphi_i} = 0 $, and the above formula still holds. Since $ \sum_{i=1}^k \omega_i(\varphi) = \mu(M) $, we have
\begin{equation}
\label{eq:omega-grad-i}
\frac{\partial \omega_i(\varphi)}{\partial \varphi_i} = - \sum_{j \neq i} \frac{\partial \omega_j(\varphi)}{\partial \varphi_i}.
\end{equation}

In conclusion, we obtain the following theorem:
\begin{theorem}
\label{thm:2}
Let $ M = \mathbb{H}^m / \Gamma $ be an $ m $-dimensional compact hyperbolic manifold, and the probability measure $ \mu $ on $ M $ has a continuous density function with respect to the Riemannian measure $ \sigma_M $, $ d\mu = f d\sigma_M $. 
Let $ \nu = \sum_{i=1}^k \nu_i \delta_{p_i} $ be a discrete measure, satisfying $ \min_{1 \leq i \leq k} \nu_i \geq 0 $, $ \{p_1, p_2, \dots, p_k\} \subset M $, and $ \mu(M) = \sum_{i=1}^k \nu_i $. 
Then, the Kantorovich functional $ I(\varphi) $ for the optimal transport problem on the hyperbolic space is a concave function on $ \mathbb{R}^k $. 
Furthermore, $ I(\varphi) $ is twice differentiable, and its gradient is
\begin{equation}
\label{eq:kantorovich-grad}
\text{grad} \, I(\varphi) = (\nu_1 - \omega_1(\varphi), \nu_2 - \omega_2(\varphi), \dots, \nu_k - \omega_k(\varphi)),
\end{equation}

with the second partial derivatives as
\begin{align}
\frac{\partial^2 I(\varphi)}{\partial \varphi_i^2} &= - \sum_{j \neq i} \frac{\partial \omega_i(\varphi)}{\partial \varphi_j}, \label{eq:kantorovich-d2i} \\
\begin{split}
\frac{\partial^2 I(\varphi)}{\partial \varphi_i \partial \varphi_j} &= - \frac{\partial \omega_i(\varphi)}{\partial \varphi_i} = - \frac{\partial \omega_j(\varphi)}{\partial \varphi_i} \\
&= \int_{W_{ij}(\varphi)} \frac{f(x)}{|\nabla_x c(p_i, x) - \nabla_x c(p_j, x)|} \, d\sigma^{m-1}(x), \label{eq:kantorovich-d2ij}
\end{split}
\end{align}
where $ \nabla $ is the Levi-Civita connection on $ M $, and $ \sigma_{m-1} $ is the $(m - 1)$-dimensional Hausdorff measure on $ M $.
\end{theorem}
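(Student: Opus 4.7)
The plan is to (i) compute $\nabla I(\varphi)$ by an envelope-type argument exploiting the defining condition of the cells $W_i(\varphi)$, (ii) differentiate the gradient once more using the infinitesimal description of $\omega_i(\varphi)$ derived just above the theorem statement, and (iii) identify the Hessian as the negative of a weighted graph Laplacian to conclude concavity.

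For the first derivative, write
\begin{equation*}
I(\varphi) = \sum_{i=1}^k \int_{W_i(\varphi)} \bigl(c(p_i,p)-\varphi_i\bigr)\,d\mu(p) + \sum_{i=1}^k \nu_i\varphi_i.
\end{equation*}
A perturbation $\varphi \mapsto \varphi + \delta\, e_j$ has an explicit effect (from the $-\varphi_j\,d\mu$ and $\nu_j\varphi_j$ terms) and an implicit effect from the motion of the cells $W_i(\varphi)$. The crucial cancellation is that, by the cell definition in equation \ref{eq:cell}, on every shared facet $W_{i\ell}(\varphi)$ the integrand $c(p_i,p)-\varphi_i$ agrees with $c(p_\ell,p)-\varphi_\ell$; hence mass transferred across a facet contributes the same value to both adjacent integrals and cancels. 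Using the co-area bound in equation \ref{eq:grad-est} together with continuity of $f$, the uncancelled remainder is $o(\delta)$, yielding $\partial I/\partial\varphi_j = \nu_j - \omega_j(\varphi)$.

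For the second derivatives, differentiate $\nabla I = \nu - \omega(\varphi)$ once more. For $i \neq j$, the expression for $\partial\omega_i/\partial\varphi_j$ via the gradient flow of $\lambda(x) = c(p_i,x)-c(p_j,x)$ has already been derived in equation \ref{eq:omega-grad-ij} and gives the stated off-diagonal integral over $W_{ij}(\varphi)$. For the diagonal entry, differentiate the conservation identity $\sum_{j=1}^k \omega_j(\varphi) \equiv \mu(M)$ (a consequence of Proposition \ref{prop:m-decomp} and equation \ref{eq:measure-zero}) in $\varphi_i$ to get $\partial\omega_i/\partial\varphi_i = -\sum_{j\neq i}\partial\omega_j/\partial\varphi_i$, then combine with the symmetry $\partial\omega_i/\partial\varphi_j = \partial\omega_j/\partial\varphi_i$ (evident from the integrand over $W_{ij} = W_{ji}$) to reach the stated form. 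Continuity in $\varphi$ of all these integrals, again via continuity of $f$ and the Lipschitz bound on $|\nabla\lambda|$, then shows $I \in C^2(\mathbb{R}^k)$. Writing $w_{ij}(\varphi) := \int_{W_{ij}(\varphi)} f/|\nabla_x c(p_i,x)-\nabla_x c(p_j,x)|\,d\sigma^{m-1} \geq 0$, the Hessian has off-diagonal entries $+w_{ij}$, diagonal entries $-\sum_{j\neq i} w_{ij}$, and row sums zero, so for any $v \in \mathbb{R}^k$
\begin{equation*}
v^{\top} \mathrm{Hess}\, I(\varphi)\, v \;=\; -\tfrac{1}{2}\sum_{i \neq j} w_{ij}(\varphi)\,(v_i - v_j)^2 \;\leq\; 0,
\end{equation*}
which establishes concavity, with degeneracy precisely along the translation direction $\mathbf{1}$.

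The main obstacle I anticipate is in the first step: as $\varphi$ varies, the combinatorial structure of $\mathcal{W}_\varphi$ can change (facets can appear, merge, or vanish), so one must argue that boundary contributions to the first variation of $I$ remain uniformly $o(\delta)$ across such transitions. This is precisely what the sweep/co-area argument preceding the theorem handles, using the Lipschitz control of $|\nabla\lambda|$ on a thin transition strip and the continuity of the density $f$. Once the gradient formula is secured, the Hessian formulas and concavity follow algebraically from the already-derived expressions for $\partial\omega_j/\partial\varphi_i$ and the graph-Laplacian structure.
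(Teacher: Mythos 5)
Your proposal is correct and follows essentially the same route as the paper: the gradient is obtained by a first-variation argument in which the explicit $\delta\nu_i$ and $-\delta\,\mu(W_i\cap W_i(\varphi+h))$ terms survive while the contribution of the thin transition strips is $o(\delta)$ precisely because the integrands $c(p_i,\cdot)-\varphi_i$ and $c(p_j,\cdot)-\varphi_j$ agree on the facet $W_{ij}(\varphi)$ and the strip has width $\delta/|\nabla\lambda|+o(\delta)$; the Hessian entries are then read off from the pre-derived formulas for $\partial\omega_i/\partial\varphi_j$ and the conservation identity $\sum_i\omega_i=\mu(M)$. Your concavity step via the quadratic-form identity $v^\top\mathrm{Hess}\,I\,v=-\tfrac12\sum_{i\neq j}w_{ij}(v_i-v_j)^2$ is a slightly more explicit rendering of the paper's diagonal-dominance argument (and incidentally uses the sign-consistent form of the diagonal entries), but it is the same underlying graph-Laplacian structure.
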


\begin{proof}
Let $ \textbf{e}_i $ be the unit vector in the positive direction of the $ i $-th coordinate axis, $ \delta > 0 $, and $ h = \delta \textbf{e}_i $. Then, according to the definition of the Kantorovich functional in equation \ref{eq:kantorovich-functional}, we have
\begin{equation}
\begin{split}
I(\varphi + h) - I(\varphi) = &\, \delta \nu_i + \int_{W_i(\varphi) \cap W_i(\varphi + h)} (c(p_i,y) - \varphi_i - \delta) - (c(p_i,y) - \varphi_i) \, d\mu(y) \\
&+ \sum_{j \neq i} \int_{W_j(\varphi) \cap W_i(\varphi + h)} (c(p_i,y) - \varphi_i - \delta) - (c(p_j,y) - \varphi_j) \, d\mu(y).
\end{split}
\end{equation}

Let $ s(t) = \lambda(r(x, t)) - \lambda(r(x, 0)) $, then we have $ \frac{ds}{dt}(t) = |\nabla \lambda|(r(x, t)) $. Note that
\begin{equation}
\begin{split}
&\int_0^{T(x)} (c(p_i,r(x, t)) - \varphi_i - \delta) - (c(p_j,r(x, t)) - \varphi_j) \, dt \\
= &\int_0^{T(x)} \lambda(r(x,t)) - \lambda(r(x,0)) - \delta \, dt \\
= &\int_0^{\delta} s(t) - \delta \, dt \\
= &\int_0^{\delta} \frac{s - \delta}{\dot{s}} \, ds.
\end{split}
\end{equation}

From the gradient estimate \ref{eq:grad-est}, we know that
\begin{equation}
\int_0^{\delta} \frac{s - \delta}{\dot{s}} \, ds = \frac{1}{|\nabla \lambda(x)|} \int_0^{\delta} s - \delta \, ds + o(\delta) = -\frac{\delta^2}{2|\nabla \lambda(x)|} + o(\delta).
\end{equation}

Thus, for $j \neq i$ we have the estimate below
\begin{equation}
\label{eq:int-ji}
\begin{split}
&\int_{W_j(\varphi) \cap W_i(\varphi + h)} (c(p_i,y) - \varphi_i - \delta) - (c(p_j,y) - \varphi_j) \, d\mu(y) \\
= &\int_{W_{ij}(\varphi)} \int_0^{T(x)} \left[ (c(p_i,r(x, t)) - \varphi_i - \delta) - (c(p_j,r(x, t)) - \varphi_j) \right] f(x) \, dt \, d\sigma^{m-1}(x) \\
= &\int_{W_{ij}(\varphi)} -\frac{\delta^2}{2|\nabla \lambda(x)|} f(x) \, d\sigma^{m-1}(x) + o(\delta) \\
= &\, o(\delta).
\end{split}
\end{equation}

On the other hand, according to equations \ref{eq:omega-grad-ij} and \ref{eq:omega-grad-i}, we have
\begin{equation}
\begin{split}
\omega_i(\varphi) &= \mu \left( W_i(\varphi) \cap W_i(\varphi + h) \right) + \sum_{j \neq i} \mu \left( W_i(\varphi) \cap W_j(\varphi + h) \right) \\
&= \mu \left( W_i(\varphi) \cap W_i(\varphi + h) \right) + o(\delta).
\end{split}
\end{equation}

Therefore,
\begin{equation}
\label{eq:int-ii}
\begin{split}
&\int_{W_i(\varphi) \cap W_i(\varphi + h)} (c(p_i,y) - \varphi_i - \delta) - (c(p_i,y) - \varphi_i) \, d\mu(y) \\
= &-\delta \mu(W_i(\varphi) \cap W_i(\varphi + h)) \\
= &-\delta \omega_i(\varphi) + o(\delta). 
\end{split}
\end{equation}

By combining equations \ref{eq:int-ji} and \ref{eq:int-ii}, we obtain
\begin{equation}
I(\varphi + h) - I(\varphi) = \delta \nu_i - \delta \omega_i(\varphi) + o(\delta).
\end{equation}

This gives the proof of the gradient of the Kantorovich functional \ref{eq:kantorovich-grad}. The second partial derivatives of the Kantorovich functional \ref{eq:kantorovich-d2i} and \ref{eq:kantorovich-d2ij} are given by equations \ref{eq:omega-grad-ij} and \ref{eq:omega-grad-i}.

Note that for $ j \neq i $, $ \frac{\partial^2 I(\varphi)}{\partial \varphi_i \partial \varphi_j} \geq 0 $, and for $ 1 \leq i \leq k $, $ \sum_{j=1}^k \frac{\partial^2 I(\varphi)}{\partial \varphi_i \partial \varphi_j} = 0 $. 
We can deduce that the Hessian matrix $ \text{Hess}(I) $ of the Kantorovich functional is diagonally dominant and has a one-dimensional null space spanned by the vector $ (1, 1, \dots, 1) $. Therefore, $ \text{Hess}(I) $ is negative semi-definite. 
From this, we conclude that the Kantorovich functional is a concave function, and the proof is complete.
\end{proof}

From the proof of Proposition \ref{prop:cover-inj}, we know that the covering map $ \pi_\Gamma $ maps each geodesic edge of the cell $ U_{i,jk}(\varphi) $ isometrically to the boundary of $ W_{i,j}(\varphi) $. 
Thus, we have an equivalent formula to compute the Hessian matrix $ \text{Hess}(I(\varphi)) $ in the hyperbolic space $ \mathbb{H}^m $,

\begin{align}
\frac{\partial^2 I(\varphi)}{\partial \varphi_i^2} &= - \sum_{j \neq i} \frac{\partial^2 I(\varphi)}{\partial \varphi_i \partial \varphi_j}, \\
\frac{\partial^2 I(\varphi)}{\partial \varphi_i \partial \varphi_j} &= - \sum_{k \in E_i^j} \int_{U_{i,jk}(\varphi)} \frac{f_H(x)}{|\nabla_x^H c_H(x, x_i) - \nabla_x^H c_H(x, \gamma_k x_j)|} \, d\sigma^{m-1}(x),
\end{align}

where $ \sigma_H^{m-1} $ is the $(m - 1)$-dimensional Hausdorff measure on $ \mathbb{H}^m $, $ \nabla^H $ is the Levi-Civita connection on $ \mathbb{H}^m $, and $ f_H $ is the lift of $ f $ on $ \mathbb{H}^m $, satisfying $ f \circ \pi_\Gamma = f_H $. 
The index sets $ E^j_i $ and $ E^i_j $ are in one-to-one correspondence, \ie, $ U_i(\varphi) \cap U_{j,\gamma_k}(\varphi) \neq \varnothing \Leftrightarrow U_{i,\gamma^{-1}_k}(\varphi) \cap U_j(\varphi) \neq \varnothing $.

Using Proposition 7.7 in \cite{boumal2023introduction}, we can derive an explicit expression to compute the gradient of the cost function $ c_H(x, x_i) $ on $ \mathbb{H}^m $,
\begin{equation}
\nabla_x^H c_H(x, x_i) = \nabla_x^H \left( -\langle x, x_i \rangle_H \right) = (I + xx^TJ)J\frac{Jx_i}{\langle x, x_i \rangle_H} = \frac{x_i}{\langle x, x_i \rangle_H} + x,
\end{equation}
where $I$ is the identity matrix and $J = \text{diag}(1,\cdots, 1, -1)$.

Note that
\begin{equation}
\label{eq:edge-property}
e^{-\varphi_i} \langle x_i, x \rangle_H = e^{-\varphi_j} \langle \gamma_k x_j, x \rangle_H, \forall x \in U_{i,jk}(\varphi).
\end{equation}

Therefore, for any $x \in U_{i,jk}(\varphi)$, we have
\begin{equation}
\label{eq:edge-property2}
\frac{f_H(x)}{|\nabla_x^H c_H(x, x_i) - \nabla_x^H c_H(x, \gamma_k x_j)|} = \frac{-f_H(x) e^{-\varphi_i} \langle x_i, x \rangle_H}{|e^{-\varphi_i}x_i - e^{-\varphi_j}\gamma_k x_j|}.
\end{equation}

\begin{figure}[h]
\begin{center}
\fbox{\rule{0pt}{2in}
\includegraphics[width=0.8\linewidth]{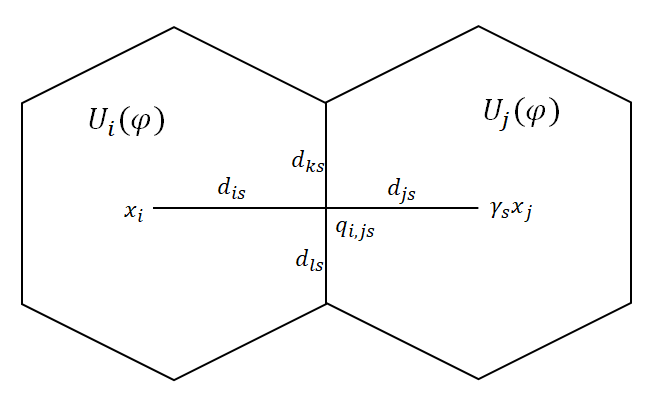}}
\end{center}
   \caption{Cells $U_i(\varphi)$ and $U_{j,\gamma_s}(\varphi)$}
\label{fig:cells}
\end{figure}

In the case of the two-dimensional hyperbolic plane $ \mathbb{H}^2 $, we have a special formula for the Hessian matrix $ \text{Hess}(I(\varphi)) $ given in the following corollary.
\begin{corollary}
Let $ M = \mathbb{H}^2/\Gamma $ be a two-dimensional compact hyperbolic manifold, with $ \mu = \sigma_M $. The cells $ U_i(\varphi) $ and $ U_{j,\gamma_s}(\varphi) $ intersect at the common edge $ U_{i,j_s}(\varphi) $, as shown in Figure \ref{fig:cells}. 
Let $ q_{i,js} $ be the intersection point of the cell $ U_{i,js}(\varphi) $ and the hyperbolic geodesic connecting $ x_i $ and $ \gamma_s x_j $. 
Then the partial derivatives of the cell measure $ \omega_i(\varphi) $ are given by
\begin{align}
\frac{\partial \omega_i(\varphi)}{\partial \varphi_j} &= \frac{\partial \omega_j(\varphi)}{\partial \varphi_i} = - \sum_{s \in E_i^j} \frac{\sinh d_{ks} + \sinh d_{ls}}{\tanh d_{is} + \tanh d_{js}}, \label{eq:omega-grad2d-ij} \\
\frac{\partial \omega_i(\varphi)}{\partial \varphi_i} &= - \sum_{j \neq i} \frac{\partial \omega_j(\varphi)}{\partial \varphi_i}, \label{eq:omega-grad2d-i}
\end{align}

where $ d_{is} = d_H(x_i, q_{i,js}) $, $ d_{js} = d_H(\gamma_s x_j, q_{i,js}) $, and $ d_{ks}, d_{ls} $ be the hyperbolic lengths of the two parts into which the common edge $ U_{i,js}(\varphi) $ is divided by the point $ q_{i,js} $.
\end{corollary}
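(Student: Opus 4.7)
My plan is to specialize the Hessian integral formula of Theorem \ref{thm:2} to $m=2$ with $\mu=\sigma_M$ (so $f_H\equiv 1$), invoke the integrand simplification of equation \ref{eq:edge-property2}, and then evaluate the resulting one-dimensional integral along each geodesic edge $U_{i,js}(\varphi)$. Using $-\langle x_i,x\rangle_H=\cosh d_H(x,x_i)$, the task reduces to verifying
\begin{equation*}
\int_{U_{i,js}(\varphi)} \frac{e^{-\varphi_i}\cosh d_H(x,x_i)}{\bigl|e^{-\varphi_i}x_i-e^{-\varphi_j}\gamma_s x_j\bigr|}\,d\sigma_H^{1}(x) = \frac{\sinh d_{ks}+\sinh d_{ls}}{\tanh d_{is}+\tanh d_{js}},
\end{equation*}
for each $s\in E_i^j$. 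Equation \ref{eq:omega-grad2d-i} is then just equation \ref{eq:omega-grad-i} restated.

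The first key step is geometric: I would show that the edge $U_{i,js}(\varphi)$ meets the center-to-center geodesic at $q_{i,js}$ orthogonally. Since the edge is itself a hyperbolic geodesic (by the discussion preceding equation \ref{eq:face-nonempty}), I would differentiate the defining relation $e^{-\varphi_i}\cosh d_H(x,x_i)=e^{-\varphi_j}\cosh d_H(x,\gamma_s x_j)$ along the edge at $q_{i,js}$. Using $\tfrac{d}{ds}d_H(x(s),y)|_{s=0}=-\cos\theta$ and the fact that the directions at $q_{i,js}$ to $x_i$ and to $\gamma_s x_j$ are antipodal, this yields $-e^{-\varphi_i}\sinh d_{is}\cos\alpha=e^{-\varphi_j}\sinh d_{js}\cos\alpha$, where $\alpha$ is the angle between the edge and the center-to-center geodesic. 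Since the left-hand side is non-positive while the right-hand side is non-negative, $\cos\alpha=0$.

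Granted perpendicularity, the hyperbolic Pythagorean theorem in the right triangle with vertices $x_i$, $q_{i,js}$, $x$ gives $\cosh d_H(x,x_i)=\cosh d_{is}\cosh d$ where $d$ is the signed arc length along the edge from $q_{i,js}$. Integrating from $-d_{ls}$ to $d_{ks}$ produces the factor $\cosh d_{is}(\sinh d_{ks}+\sinh d_{ls})$. It then remains to verify the algebraic identity
\begin{equation*}
\frac{e^{-\varphi_i}\cosh d_{is}}{\bigl|e^{-\varphi_i}x_i-e^{-\varphi_j}\gamma_s x_j\bigr|} = \frac{1}{\tanh d_{is}+\tanh d_{js}}.
\end{equation*}
Squaring both sides, expanding the Lorentzian norm via $\langle x_i,x_i\rangle_H=-1=\langle \gamma_s x_j,\gamma_s x_j\rangle_H$ together with $\langle x_i,\gamma_s x_j\rangle_H=-\cosh(d_{is}+d_{js})$, and substituting $e^{-\varphi_j}=e^{-\varphi_i}\cosh d_{is}/\cosh d_{js}$ (the condition that $q_{i,js}$ lies on the edge), reduces the claim to
\begin{equation*}
2\cosh a\cosh b\cosh(a+b)=\cosh^{2}(a+b)+\cosh^{2}a+\cosh^{2}b-1
\end{equation*}
with $a=d_{is}$ and $b=d_{js}$. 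I would verify this by expanding $\cosh(a+b)$ on the left and using the product-to-sum identity $\cosh(a-b)\cosh(a+b)=\tfrac{1}{2}[\cosh 2a+\cosh 2b]=\cosh^{2}a+\cosh^{2}b-1$ on the right.

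I expect the orthogonality claim to be the main conceptual obstacle: it is not an arbitrary geometric coincidence but a specific feature of the $\ln\cosh$ cost, directly analogous to how the hyperplanes bounding cells in a Euclidean power diagram remain perpendicular to the site-to-site segments independently of the weights. Once orthogonality is established, the remainder is mechanical: the Pythagorean reduction for the integral and the product-to-sum manipulation for the constant factor.
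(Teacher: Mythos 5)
Your proposal is correct and follows essentially the same route as the paper: specialize the Hessian integral of Theorem \ref{thm:2} via equation \ref{eq:edge-property2}, reduce the Lorentzian norm $\lvert e^{-\varphi_i}x_i-e^{-\varphi_j}\gamma_s x_j\rvert$ to $e^{-\varphi_i}\cosh d_{is}(\tanh d_{is}+\tanh d_{js})$ using $\langle x_i,\gamma_s x_j\rangle_H=-\cosh(d_{is}+d_{js})$ and exactly the product-to-sum identity you state, and evaluate the edge integral as $\int\cosh y\,dy$ via the hyperbolic Pythagorean theorem. The only difference is that you explicitly prove the orthogonality of the edge to the geodesic through $x_i$ and $\gamma_s x_j$ (by first variation of distance), a step the paper uses silently here and only justifies later as the perpendicularity of the Laguerre bisector; making it explicit is a small but genuine improvement in self-containedness.
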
 

\begin{proof}
From equation \ref{eq:edge-property}, we have $ e^{-\varphi_i} \cosh d_{is} = e^{-\varphi_j} \cosh d_{js} $, thus
\begin{equation}
\begin{split}
&e^{2\varphi_i}|e^{-\varphi_i}x_i - e^{-\varphi_j}\gamma_s x_j|^2 \\
= &\cosh^{-2} d_{js} \langle (\cosh d_{js}) x_i - (\cosh d_{is}) \gamma_s x_j, (\cosh d_{js}) x_i - (\cosh d_{is}) \gamma_s x_j \rangle_H \\
= &\cosh^{-2} d_{js} \left( -\cosh^2 d_{is} + 2\cosh d_{is} \cosh d_{js} \cosh (d_{is} + d_{js}) - \cosh^2 d_{js} \right) \\
= &\cosh^2 d_{is} \left( \tanh d_{is} + \tanh d_{js} \right)^2.
\end{split}
\end{equation}

Using equation \ref{eq:edge-property2}, for any $x \in U_{i,jk}(\varphi)$, we have
\begin{equation}
\frac{1}{|\nabla_x^H c_H(x, x_i) - \nabla_x^H c_H(x, \gamma_k x_j)|} = \frac{\cosh d_H(x_i,x)}{\cosh d_{is} (\tanh d_{is} + \tanh d_{js})}.
\end{equation}

Therefore,
\begin{equation}
\begin{split}
&\int_{U_{i,js}(\varphi)} \frac{1}{|\nabla_x^H c_H(x, x_i) - \nabla_x^H c_H(x, \gamma_k x_j)|} \, d\sigma_H^{m-1}(x) \\
= &\int_{U_{i,js}(\varphi)} \frac{\cosh d_H(x_i,x)}{\cosh d_{is} (\tanh d_{is} + \tanh d_{js})} \, d\sigma_H^{m-1}(x) \\
= &\int_{U_{i,js}(\varphi)} \frac{\cosh d_H(q_{i,js},x)}{\tanh d_{is} + \tanh d_{js}}  \, d\sigma_H^{m-1}(x) \\
= &\int_{-d_{ks}}^{d_{ls}} \frac{\cosh y}{\tanh d_{is} + \tanh d_{js}} \, dy \\
= &\, \frac{\sinh d_{ks} + \sinh d_{ls}}{\tanh d_{is} + \tanh d_{js}}.
\end{split}
\end{equation}

The theorem then follows from equations \ref{eq:kantorovich-d2i} and \ref{eq:kantorovich-d2ij}.
\end{proof}

\subsubsection{Existence of the Optimal Transport Map on Hyperbolic Space}
Now we arrive at the main theorem of this work.
\begin{theorem}
\label{thm:3}
Let $ M = \mathbb{H}^m/\Gamma $ be an $ m $-dimensional compact hyperbolic manifold, and let $ \mu $ be a probability measure on $ M $ with a continuous density function $ d\mu = f d\sigma_M $ with respect to the Riemannian measure $ \sigma_M $. 
Let $ \nu = \sum_{i=1}^k \nu_i \delta_{p_i} $ be a discrete measure, satisfying $ \min_{1 \leq i \leq k} \nu_i > 0 $, $ \{p_1, p_2, \dots, p_k\} \subset M $, and $ \mu(M) = \sum_{i=1}^k \nu_i $. 
Then, there exists a height vector $ \varphi = (\varphi_1, \varphi_2, \dots, \varphi_k) \in \mathbb{R}^k $ such that $ \mu(W_i(\varphi)) = \nu_i $ for all $ 1 \leq i \leq k $. This height vector is unique up to adding a constant vector $ (c, c, \dots, c) $, and the height vector $ \varphi $ minimizes the following convex energy function
\begin{equation}
E(\varphi) = \int_0^{\varphi} \sum_{i=1}^k \mu(W_i(\varphi)) \, d\varphi_i - \sum_{i=1}^k \varphi_i \nu_i,
\end{equation}

in the admissible height space
\begin{equation}
\Phi = \{ \varphi \in \mathbb{R}^k \mid \mu(W_i(\varphi)) > 0, \forall 1 \leq i \leq k \}.
\end{equation}

Furthermore, among all measure-preserving transport maps $ T: (M, \mu) \to (M, \nu) $, the map $ T: W_i(\varphi) \mapsto p_i, \forall 1 \leq i \leq k $, minimizes the hyperbolic transport cost
\begin{equation}
\int_M \ln \cosh d_M(x, T(x)) d\mu.
\end{equation}
\end{theorem}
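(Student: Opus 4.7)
The plan is to deduce Theorem~\ref{thm:3} from Theorems~\ref{thm:1} and \ref{thm:2} by showing that the convex energy $E$ attains a minimum in the interior of the admissible space $\Phi$, and that this critical point realizes the equi-measure condition of Theorem~\ref{thm:1}. First I would check that $E$ is well-defined as a line integral: by Theorem~\ref{thm:2}, the Jacobian of the map $\varphi \mapsto (\omega_1(\varphi),\dots,\omega_k(\varphi))$ equals $-\mathrm{Hess}(I(\varphi))$, which is symmetric. Hence the differential form $\sum_i \omega_i(\varphi)\,d\varphi_i$ is closed on the simply connected open set $\Phi$, so $\int_0^\varphi \sum_i \omega_i\,d\varphi_i$ is path-independent and $E$ is a smooth function on $\Phi$ with $\nabla E(\varphi) = (\omega_i(\varphi) - \nu_i)_{i=1}^k$ and $\mathrm{Hess}(E) = -\mathrm{Hess}(I)$.

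Next I would invoke Theorem~\ref{thm:2}: $\mathrm{Hess}(I)$ is negative semi-definite with one-dimensional null space spanned by $\mathbf{1} = (1,\dots,1)$, so $E$ is convex on $\Phi$, strictly convex transversally to $\mathbf{1}$, and invariant under shifts $\varphi \mapsto \varphi + c\mathbf{1}$ (the latter because shifting $\varphi$ does not change the cells $W_i(\varphi)$, by definition \eqref{eq:cell}). This immediately gives the uniqueness claim up to additive constants once a critical point is shown to exist. To obtain such a critical point, I would restrict $E$ to the hyperplane $H = \{\varphi \in \mathbb{R}^k : \sum_i \varphi_i = 0\}$, on which it is strictly convex, and show that $E|_{\Phi \cap H}$ is coercive in the sense that $E(\varphi) \to +\infty$ whenever $\varphi \in \Phi \cap H$ tends to infinity or to the boundary $\partial \Phi$.

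The coercivity argument is the main obstacle. The boundary analysis is the easier half: if $\varphi^{(n)} \to \bar\varphi \in \partial\Phi \cap H$, then some cell measure $\omega_j(\bar\varphi) = 0$, so $(\omega_j(\varphi^{(n)}) - \nu_j) \to -\nu_j < 0$ using $\min_i \nu_i > 0$, and a one-variable Taylor expansion along the segment from $\bar\varphi$ to the interior shows that $E$ decreases strictly as we move into $\Phi$, which rules out boundary minimizers (so the infimum, if attained, lies in $\mathrm{int}(\Phi)$). For coercivity at infinity inside $H$, the key observation is that if $\varphi_i \to +\infty$ along a sequence in $H$ with $\sum_i \varphi_i = 0$, then for any $p \in M$ and any index $j$ with $\varphi_j$ bounded above, the inequality $\ln\cosh d_M(p,p_i) - \varphi_i \le \ln\cosh d_M(p,p_j) - \varphi_j$ defining membership in $W_i(\varphi)$ eventually forces $W_i(\varphi)$ to cover a set of $\mu$-measure tending to $\mu(M)$; equivalently, $\omega_j(\varphi) \to 0$ for some $j$. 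Combining this with the explicit shape of $E$, I would write $E(\varphi) = \int_0^1 \sum_i \omega_i(t\varphi)\,\varphi_i\,dt - \sum_i \varphi_i \nu_i$ and bound this from below using $\omega_i \ge 0$ and the compactness of $M$, exploiting $\nu_i > 0$ to get a positive lower-bound growth rate along every direction in $H$.

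Once coercivity and strict convexity on $\Phi \cap H$ are established, the direct method of the calculus of variations yields a unique minimizer $\varphi^* \in \mathrm{int}(\Phi)\cap H$, and $\nabla E(\varphi^*) = 0$ translates into $\omega_i(\varphi^*) = \nu_i$ for all $i$. Applying Theorem~\ref{thm:1} then shows that the induced map $T : W_i(\varphi^*) \mapsto p_i$ is measure-preserving from $\mu$ to $\nu$ and minimizes $\int_M \ln\cosh d_M(x,T(x))\,d\mu$ among all such transport maps. Uniqueness up to the additive shift follows from strict convexity of $E$ on $H$ and shift invariance along $\mathbf{1}$, completing the proof.
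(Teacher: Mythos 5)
Your overall strategy coincides with the paper's: realize $E$ as a line integral whose gradient is $(\omega_i(\varphi)-\nu_i)_i$ via Theorem \ref{thm:2}, locate an interior minimizer on the slice $\sum_i\varphi_i=0$, and conclude via Theorem \ref{thm:1}. However, two steps you treat as given are genuine gaps. First, you invoke ``the simply connected open set $\Phi$'' to get path-independence of $\int_0^{\varphi}\sum_i\omega_i\,d\varphi_i$, but simple connectedness of $\Phi$ is not free: $\Phi$ is cut out by the nonlinear conditions $\omega_i(\varphi)>0$ and could a priori be disconnected or topologically nontrivial. The paper devotes its entire first step to this, proving that $H=\{\rho\in\mathbb{R}^k_+ \mid \omega(\ln\rho_1,\dots,\ln\rho_k)>0\}$ is a nonempty convex open set — convexity via the Brunn--Minkowski inequality applied to the Minkowski sum $t\mathcal{P}_\zeta\oplus(1-t)\mathcal{P}_\xi$ of the dual $\Gamma$-convex polyhedra, nonemptiness via $\rho=(1,\dots,1)$ — and then transports this to $\Phi$ through the diffeomorphism $\rho\mapsto(\ln\rho_1,\dots,\ln\rho_k)$. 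Without that, $E$ is not known to be well defined. A related smaller point: strict convexity on $\Phi_0$ requires showing the null space of $\mathrm{Hess}(E)$ is \emph{exactly} $\mathbb{R}\cdot(1,\dots,1)$, which the paper proves by a connectivity argument on the cell-adjacency graph and which genuinely uses $\varphi\in\Phi$ (all cells nonempty); citing Theorem \ref{thm:2} as stated does not supply this.

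Second, your existence argument is incomplete precisely at the point you flag as ``the main obstacle.'' Coercivity of $E$ at infinity on $\Phi\cap H$ is never actually established: along a direction with $\varphi_j\to-\infty$ the term $-\varphi_j\nu_j$ grows, but the integral term must be controlled simultaneously, and your sketch (``bound this from below using $\omega_i\ge 0$ and compactness'') does not do so. The paper sidesteps coercivity entirely by showing $\Phi_0$ is \emph{bounded}: if $\varphi_i^{(n)}\to+\infty$ and $\varphi_j^{(n)}\to-\infty$, then for large $n$ one has $e^{-\varphi_i^{(n)}}\langle x_i,y\rangle_H>e^{-\varphi_j^{(n)}}\langle x_j,y\rangle_H$ for all $y\in\mathbb{H}^m$, so $U_j(\varphi^{(n)})=\varnothing$ and $\varphi^{(n)}\notin\Phi_0$; one then minimizes the continuous extension of $E$ over the compact set $\overline{\Phi_0}$. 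Your boundary exclusion also needs repair: at $\varphi^0\in\partial\Phi_0$ the gradient components $\omega_i(\varphi^0)-\nu_i$ are guaranteed negative only for indices in $J=\{i\mid\omega_i(\varphi^0)=0\}$, so an arbitrary inward direction need not decrease $E$. The paper's fix is to perturb only along a direction $\delta$ supported on $J$ (with $\delta_i=v_i$ for $i\in J$ and $\delta_i=0$ otherwise), check $\varphi^0+t\delta\in\Phi$ by monotonicity of the cells, and compute $\frac{d}{dt}E(\varphi^0+t\delta)\big|_{t=0}=-\sum_{i\in J}\nu_i v_i<0$, contradicting minimality. With these repairs your argument closes and matches the paper's route.
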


\begin{proof}
The proof of Theorem \ref{thm:3} proceeds in the following steps.
\begin{enumerate}
\item Prove the admissible height space
\begin{equation}
H = \{(\rho_1, \rho_2, \dots, \rho_k) \in \mathbb{R}^{k}_+ \mid \varphi = (\ln \rho_1, \ln \rho_2, \dots, \ln \rho_k), \omega_i(\varphi) > 0, \forall 1 \leq i \leq k\}
\end{equation}
is a non-empty convex open set, and there exists a diffeomorphism $ g: H \to \Phi $, where
\begin{equation}
\Phi = \{ \varphi \in \mathbb{R}^k \mid \omega_i(\varphi) > 0, \forall 1 \leq i \leq k \}.
\end{equation}
This shows that $ \Phi $ is a simply connected set.

\item Prove that the function 
\begin{equation}
E(\varphi) = \int_0^{\varphi} \sum_{i=1}^k \mu(W_i(\varphi)) \, d\varphi_i - \sum_{i=1}^k \varphi_i \nu_i,
\end{equation}
is well-defined and $ C^1 $-smooth on $ \Phi $.

\item Prove that $ E(\varphi) $ is convex on $ \Phi $, and strictly convex on 
\begin{equation}
\Phi_0 = \Phi \cap \left\{ \varphi \in \mathbb{R}^k \mid \sum_{1 \leq i \leq k} \varphi_i = 0 \right\}.
\end{equation}

\item By studying the restriction $ E|_{\Phi_0} $, prove that $ E(\varphi) $ has a minimum point on $ \Phi $. Thus, the map $ \omega: \Phi_0 \to \Psi $ is a diffeomorphism, where
\begin{equation}
\omega(\varphi) = (\omega_1(\varphi), \omega_2(\varphi), \dots, \omega_k(\varphi)), \, \forall \varphi \in \Phi_0,
\end{equation}
and
\begin{equation}
\Psi = \left\{ (\nu_1, \nu_2, \dots, \nu_k) \in \mathbb{R}^k \mid \min_{1 \leq i \leq k} \nu_i > 0, \sum_{i=1}^k \nu_i = \mu(M) \right\}.
\end{equation}
\end{enumerate}

Step 1: Since $ f $ is a positive continuous function and $ M $ is a compact manifold, $ f $ has a lower bound $ L > 0 $. Thus, $ f_H(x) = f \circ \pi_\Gamma(x) \geq L, \forall x \in \mathbb{H}^m $. 
Note that a convex set $ U \subset \mathbb{H}^m $ has a positive $ \sigma_H $-measure if and only if its interior $ \text{int}(U) $ is non-empty. 
Therefore, $ \omega_i(\varphi) > 0 $ is equivalent to $ U_i(\varphi) $ having a non-empty interior. 
This is also equivalent to the corresponding $ \Gamma $-convex polyhedron $ \mathcal{P}_\varphi $ having a non-empty interior on the face $ F_i(\varphi) $.

Let $ \zeta, \xi \in H $, and $ 0 < t < 1 $. We define $ \mathcal{P}_\zeta $ and $ \mathcal{P}_\xi $ as the sets of points $ \{ \zeta_i^{-1} \gamma x_i \mid 1 \leq i \leq k, \gamma \in \Gamma \} $ and $ \{ \xi_i^{-1} \gamma x_i \mid 1 \leq i \leq k, \gamma \in \Gamma \} $, respectively, which are the $ \Gamma $-convex polyhedrons obtained by applying the hyperbolic Legendre dual.
Thus, $ \partial \mathcal{P}_\zeta $ is the upper envelope of the family of functions $ \left\{ -\frac{\zeta_i}{\langle \gamma x_i, y \rangle_H} \right\}_{1 \leq i \leq k, \gamma \in \Gamma} $, and $ \partial \mathcal{P}_\xi $ is the upper envelope of the family of functions $ \left\{ -\frac{\xi_i}{\langle \gamma x_i, y \rangle_H} \right\}_{1 \leq i \leq k, \gamma \in \Gamma} $.
Therefore, the point set $ \{(t \zeta_i + (1 - t) \xi_i)^{-1} \gamma x_i \mid 1 \leq i \leq k, \gamma \in \Gamma \} $, after applying the hyperbolic Legendre dual, results in the $ \Gamma $-convex polyhedron which is the Minkowski sum of $ t \mathcal{P}_\zeta \oplus (1 - t) \mathcal{P}_\xi $.
Since $ \zeta, \xi \in H $, all the faces of $ \mathcal{P}_\zeta $ and $ \mathcal{P}_\xi $ have non-empty interiors. 
By the Brunn-Minkowski inequality (Theorem 7.1.1 \cite{schneider2013convex}), we conclude that all the faces of $ t \mathcal{P}_\zeta \oplus (1 - t) \mathcal{P}_\xi $ have non-empty interiors.
Therefore, $ t\zeta + (1 - t)\xi \in H $. This shows that $ H $ is a convex set. Furthermore, by definition, $ H $ is an open set.

Let $ \rho = (1, 1, \dots, 1) $, we claim that $ \rho \in H $. In fact, $ \varphi = (0, 0, \dots, 0) $, and for $ 1 \leq i \leq k $ we have
\begin{equation}
U_i(\varphi) = \{ y \in \mathbb{H}^m \mid \langle x_i, y \rangle_H \geq \langle \gamma x_j, y \rangle_H, \forall 1 \leq j \leq k, \forall \gamma \in \Gamma \}.
\end{equation}

Thus,
\begin{equation}
-1 = \langle x_i, x_i \rangle_H > \langle \gamma x_j, x_i \rangle_H, \forall 1 \leq j \leq k, \forall \gamma \in \Gamma \text{ s.t. } \gamma x_j \neq x_i.
\end{equation}

This shows that $ x_i $ is an interior point of $ U_i(\varphi) $, meaning that $ U_i(\varphi) $ has a non-empty interior. Therefore, $ H \neq \varnothing $.

Consider the map $ g : H \to \Phi $, defined by
\begin{equation}
g(\rho) = (\ln \rho_1, \ln \rho_2, \dots, \ln \rho_k), \, \forall \rho \in H.
\end{equation}

We can see that $ g $ is a diffeomorphism from $ H $ to $ \Phi $. Since convex sets in $ \mathbb{R}^k $ are simply connected, both $ H $ and $ \Phi $ are simply connected open sets. \\

Step 2: According to Theorem \ref{thm:2}, the differential form $ \eta = \sum_{i=1}^k (\omega_i(\varphi) - \nu_i) d\varphi_i $ is a closed form defined on the simply connected open set $ \Phi $. 
Therefore, it is exact, meaning the integral $ E(\varphi) = \int_0^\varphi \eta = \int_0^\varphi \sum_{i=1}^k \omega_i(\varphi) d\varphi_i - \sum_{i=1}^k \nu_i \varphi_i $ is well-defined as a $ C^1 $ smooth function, independent of the path chosen from 0 to $ \varphi $ in $ \Phi $.
Thus we have
\begin{align}
&E(\varphi + c(1, 1, \dots, 1)) = E(\varphi), \, \forall c \in \mathbb{R}, \\
&\text{grad } E(\varphi) = (\omega_1(\varphi) - \nu_1, \omega_2(\varphi) - \nu_2, \dots, \omega_k(\varphi) - \nu_k), \label{eq:grad-E} \\
&\text{Hess}(E(\varphi)) = \left[ \frac{\partial \omega_i(\varphi)}{\partial \varphi_j} \right].
\end{align}

Step 3: Notice that the condition $\omega_i(\varphi) > 0 $ leads to $ \frac{\partial \omega_i(\varphi)}{\partial \varphi_i} > 0. $ Then, according to Theorem \ref{thm:2}, we know that the Hessian matrix $ [h_{ij}] := \text{Hess}(E(\varphi)) $ is positive semi-definite and has a one-dimensional null space spanned by the vector $ (1, 1, \ldots, 1) $. Therefore, $ E(\varphi) $ is convex on $ \Phi $.

Assume $ v = (v_1, v_2, \dots, v_k) \in \mathbb{R}^k $ is a non-zero vector and that $ [h_{ij}] v = 0 $. 
Without loss of generality, let $ v_1 > 0 $ and $ |v_1| = \max_{1 \leq j \leq k} |v_j| $. 
Since $ h_{11} v_1 = - \sum_{j \neq 1} h_{1j} v_j $ and $ h_{11} = - \sum_{j \neq 1} h_{1j} $, we have $ \sum_{j \neq 1} h_{1j} (v_1 - v_j) = 0$.
From equation \ref{eq:kantorovich-d2ij}, we know that $ h_{1j} \leq 0, \forall j \neq 1$. We obtain that if $ h_{1j} \neq 0 $, then $ v_j = v_1 $. 
Therefore, the index set $ I = \{i \mid v_i = \max_{1 \leq j \leq k} |v_j|\} $ has the following property: if $ i_1 \in I $ and $ h_{i_1 i_2} \neq 0 $, then $ i_2 \in I $. 

Next, we claim that $ I = \{1, 2, \dots, k\} $. In fact, for any two indices $i \neq j$, we can find a sequence of indices $i_1 = i, i_2 \dots, i_s = j$ such that for each $1 \leq t \leq s - 1$, $U_{i_t}(\varphi)$ and $U_{i_{t+1}}(\varphi)$ intersect in an edge. Thus, $h_{i_t i_{t+1}} \neq 0$, for all $1 \leq t \leq s - 1$. 
Therefore, there exists a constant $c$ such that $v = c(1, 1, \dots, 1)$. This means that the null space of the Hessian matrix $\text{Hess}(E(\varphi))$ has dimension 1. This shows that $\text{Hess}(E(\varphi))$ is positive definite on $\Phi_0$, and thus $E(\varphi)$ is strictly convex on $\Phi_0$. \\

Step 4: We claim that $\Phi_0$ is bounded. Otherwise, there exists a sequence of vectors $\varphi^{(n)}$ in $\Phi_0$, and indices $1 \leq i, j \leq k$, such that $\lim_{n \to \infty} \varphi_i^{(n)} = \infty$ and $\lim_{n \to \infty} \varphi_j^{(n)} = -\infty$. Then, when $n$ is sufficiently large,
\begin{equation}
e^{-\varphi_i^{(n)}} \langle x_i, y \rangle_H > e^{-\varphi_j^{(n)}} \langle x_j, y \rangle_H, \, \forall y \in \mathbb{H}^m.
\end{equation}

This shows that $ U_j(\varphi^{(n)}) = \varnothing $, which contradicts the assumption that $ \varphi^{(n)} \in \Phi_0 $.

Now we extend $ E(\varphi) $ continuously to the closed set $ \overline{\Phi_0} $, and thus there exists a minimum point $ \varphi^0 \in \overline{\Phi_0} $.
We claim that $ \varphi^0 \in \Phi_0 $. Otherwise, if $ \varphi^0 \in \partial \Phi_0 $, the index set $ J = \{ i \mid \omega_i(\varphi^0) = 0 \} $ is non-empty.
By the convexity of the set $ \Phi_0 $, there exists a non-zero vector $ v \in \mathbb{R}^k $ such that $ \varphi^0 + t v \in \Phi_0 $ for sufficiently small $ t > 0 $.
Thus, $ \varphi^0 + t(v + c(1, 1, \dots, 1)) \in \Phi $ holds for sufficiently small $ t > 0 $. Therefore, by increasing $ c $, we can assume that all $ v_i > 0 $, and that $ \varphi^0 + t v \in \Phi $ holds for sufficiently small $ t > 0 $. 
Let the vector $ \delta \in \mathbb{R}^k $ satisfy: (1) $ \delta_i = v_i $, if $ \omega_i(\varphi^0) = 0 $; (2) $ \delta_i = 0 $, if $ \omega_i(\varphi^0) > 0 $.
By continuity, if $ \omega_i(\varphi^0) > 0 $, then $ \omega_i(\varphi^0 + t \delta) > 0 $ holds for sufficiently small $ t > 0 $. If $ \omega_i(\varphi^0) = 0 $, we claim that $ U_i(\varphi^0 + t v) \subset U_i(\varphi^0 + t \delta) $. 
In fact, for any $ y \in U_i(\varphi^0 + t v) $, and $ \forall 1 \leq j \leq k, \gamma \in \Gamma $, we have
\begin{equation}
\begin{split}
e^{-\varphi_i^0 - t \delta_i} \langle x_i, y \rangle_H &= e^{-\varphi_i^0 - t v_i} \langle x_i, y \rangle_H \\
&\geq e^{-\varphi_j^0 - t v_j} \langle \gamma x_j, y \rangle_H \\
&\geq e^{-\varphi_j^0 - t \delta_j} \langle \gamma x_j, y \rangle_H,
\end{split}
\end{equation}

Therefore, $ y \in U_i(\varphi^0 + t \delta) $. Thus, $ \varphi^0 + t \delta \in \Phi $. 
Since $ \varphi^0 $ is also a minimum point of $ E(\varphi) $ on the closed set $ \overline{\Phi} $, we have $ E(\varphi^0 + t \delta) \geq E(\varphi^0) $ for sufficiently small $ t > 0 $. 
This implies that
\begin{equation}
0 \leq \frac{d}{dt} \left. E\left( \varphi^0 + t \delta \right) \right|_{t=0} = \sum_{i=1}^k \left( \omega_i(\varphi^0) - \nu_i \right) \delta_i.
\end{equation}

On the other hand, from the construction of $ \delta $, we know that $\sum_{i=1}^k (\omega_i(\varphi^0) - \nu_i) \delta_i = - \sum_{i \in J} \nu_i v_i < 0$, which contradicts the previous equation. Therefore, $ \varphi^0 \in \Phi_0 $.

Let $ F(\varphi) = E(\varphi) + \sum_{i=1}^k \nu_i \varphi_i $, then $ \omega(\varphi) = (\omega_1(\varphi), \omega_2(\varphi), \dots, \omega_k(\varphi)) $ is the gradient map of $ F(\varphi) $ restricted to $ \Phi_0 $, \ie $ \omega(\varphi) = \text{grad } F(\varphi) = \text{grad } E(\varphi) + \nu $. 
On the one hand, because $ \text{Hess}(E(\varphi)) $ is symmetric and positive definite on $ \Phi_0 $, $ \omega(\varphi) $ is a local diffeomorphism from $ \Phi_0 $ to $ \Psi $. 
On the other hand, for any $ \nu \in \Psi $, since $ E(\varphi) = F(\varphi) - \sum_{i=1}^k \nu_i \varphi_i $ has a minimum point on $ \Phi_0 $, this shows that $ \omega(\varphi) $ is surjective. 
Therefore, $ \omega(\varphi) $ is a diffeomorphism from $ \Phi_0 $ to $ \Psi $.

In conclusion, we have shown that for any $ \nu \in \Psi $, $ E(\varphi) $ has a minimum value on $ \Phi $, and it is unique up to adding a constant vector $ (c, c, \dots, c) $. At the same time, such a minimum point satisfies the condition in equation \ref{eq:face-measure}.
By combining Theorem \ref{thm:1}, this completes the proof of Theorem \ref{thm:3}.
\end{proof}

In short, the above theorem states that there exists a height vector $ \varphi $, for which the radial projection of the corresponding convex polyhedron $ \mathcal{P}_\varphi $ gives the hyperbolic optimal transport map under the hyperbolic transportation cost function, as illustrated in Figure \ref{fig:method}. 

\begin{figure}[h]
\begin{center}
\fbox{\rule{0pt}{2in}
\includegraphics[width=0.8\linewidth]{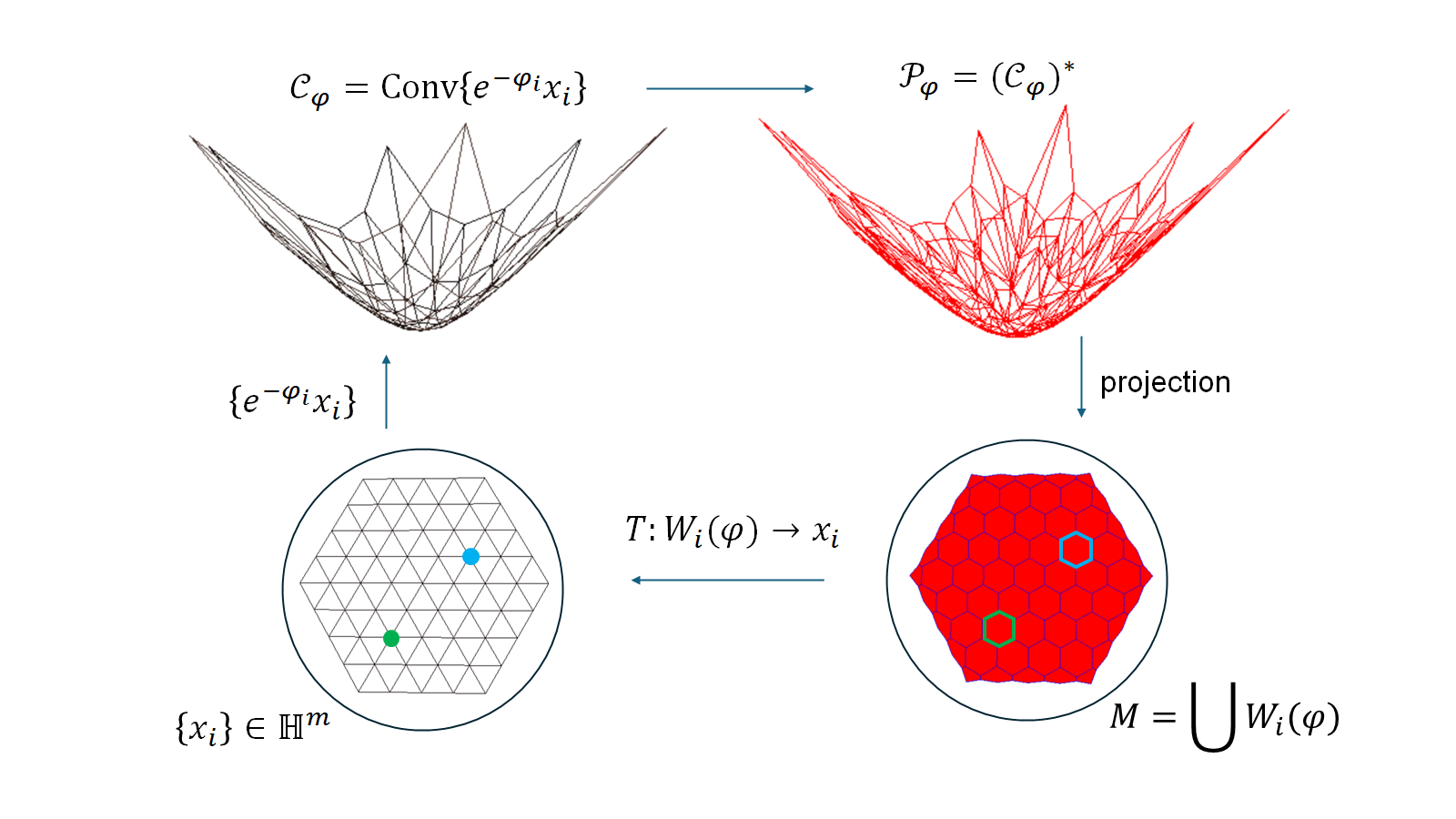}}
\end{center}
   \caption{Geometric variational method}
\label{fig:method}
\end{figure}

Next, we will prove the discrete case of Theorem 1.3 from \cite{bertrand2014prescription}, which is stated in the following theorem.

\begin{theorem}
\label{thm:4}
Let $M = \mathbb{H}^m / \Gamma$ be an $m$-dimensional compact hyperbolic manifold, where $\sigma_M$ is the Riemannian measure on $M$, and $\nu = \sum_{i=1}^k \nu_i \delta_{p_i}$ is a discrete measure with $\min_{1 \leq i \leq k} \nu_i > 0$, $\{ p_1, p_2, \dots, p_k \} \subset M$, and $\sigma_M(M) = \sum_{i=1}^k \nu_i$. 
Then, there exists a homeomorphism $\psi: \mathbb{H}^m \to \mathbb{R}^m_1$ that is invariant under the action of the group $\Gamma$, such that $\psi(\mathbb{H}^m)$ is the boundary of a convex polyhedron in $\mathbb{R}^m_1$ with a discrete Gauss curvature measure $\nu$, and this homeomorphism is unique up to homothety.
\end{theorem}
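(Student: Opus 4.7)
The plan is to derive Theorem \ref{thm:4} as a direct consequence of Theorem \ref{thm:3} by translating the semi-discrete optimal transport solution into a prescription of Gauss curvature measure on a Fuchsian convex polyhedron. First I would apply Theorem \ref{thm:3} with the source measure $\mu = \sigma_M$, which has continuous density $f \equiv 1$ with respect to $\sigma_M$, and with the given discrete target measure $\nu = \sum_{i=1}^k \nu_i \delta_{p_i}$; the balance condition $\sigma_M(M) = \sum_i \nu_i$ is exactly the mass-matching hypothesis of Theorem \ref{thm:4}. This produces a height vector $\varphi \in \mathbb{R}^k$, unique up to an additive constant vector $(c,c,\dots,c)$, such that $\omega_i(\varphi) = \sigma_M(W_i(\varphi)) = \nu_i$ for every $1 \leq i \leq k$.

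Next I would package this $\varphi$ into the convex polyhedron side of the hyperbolic Legendre duality. Let $\mathcal{C}_\varphi \subset \mathbb{R}^m_1$ be the positive hull of the orbit $\{e^{-\varphi_i}\gamma x_i \mid 1 \leq i \leq k,\; \gamma \in \Gamma\}$, which is a $\Gamma$-convex polyhedron (Proposition \ref{prop:polyhedron-dual}). Let $\rho$ be its radial function, well defined and $\Gamma$-invariant by Proposition \ref{prop:projection} and equation \ref{eq:radial-invariance}, and define
\begin{equation}
\psi(x) = \rho(x)\,x, \qquad \forall\, x \in \mathbb{H}^m.
\end{equation}
Proposition \ref{prop:projection} then gives that $\psi$ is a homeomorphism from $\mathbb{H}^m$ onto $\partial \mathcal{C}_\varphi$, and the identity $\rho(\gamma x)=\rho(x)$ yields the $\Gamma$-equivariance $\psi(\gamma x)=\gamma\psi(x)$. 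To verify that the Gauss curvature measure of $\mathcal{C}_\varphi$ equals $\nu$, I would invoke Proposition \ref{prop:isomorphic}, which identifies the cell $U_i(\varphi)$ in the upper-envelope decomposition with the subnormal set $\partial\rho(x_i)$, so that the Gauss map on the quotient sends $p_i$ to $\pi_\Gamma(\partial\rho(x_i)) = W_i(\varphi)$. Combined with Definition \ref{def:gauss-curvature}, this gives
\begin{equation}
\mu_{\mathcal{C}_\varphi}(\{p_i\}) = \sigma_{\mathbb{H}^m/\Gamma}(W_i(\varphi)) = \sigma_M(W_i(\varphi)) = \nu_i,
\end{equation}
and since the support of $\mu_{\mathcal{C}_\varphi}$ is the vertex set, $\mu_{\mathcal{C}_\varphi} = \nu$.

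Finally, uniqueness up to homothety follows by transferring the uniqueness of $\varphi$ modulo constants: replacing $\varphi$ by $\varphi + (c,\dots,c)$ multiplies every generator $e^{-\varphi_i}\gamma x_i$ by $e^{-c}$, producing the homothetic polyhedron $e^{-c}\mathcal{C}_\varphi$, and conversely any $\Gamma$-convex polyhedron realizing the same discrete Gauss curvature measure $\nu$ generates a height vector solving $\omega(\varphi')=\nu$, which by Theorem \ref{thm:3} must differ from $\varphi$ by a constant vector. The main conceptual obstacle is the bookkeeping in the identification of measures in the second step: one must verify that the pushforward of the subnormal cell $U_i(\varphi)$ under $\pi_\Gamma$ exactly equals $W_i(\varphi)$ without double-counting across fundamental domains, which is precisely what the injectivity result Proposition \ref{prop:cover-inj} and the zero boundary-measure identity in equation \ref{eq:measure-zero} provide. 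Once this identification is in place, Theorem \ref{thm:4} is a direct geometric restatement of Theorem \ref{thm:3} under Minkowski--space duality.
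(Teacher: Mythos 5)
Your proposal is correct and follows essentially the same route as the paper: apply Theorem \ref{thm:3} with $\mu = \sigma_M$ to obtain $\varphi$ with $\omega_i(\varphi)=\nu_i$, identify the Gauss curvature measure of $\mathcal{C}_\varphi$ at each vertex with the cell measure via Proposition \ref{prop:isomorphic} and Definition \ref{def:gauss-curvature}, take $\psi$ to be the inverse radial projection, and transfer uniqueness modulo constant vectors to uniqueness up to homothety. Your treatment of the $\Gamma$-equivariance of $\psi$ and of the no-double-counting issue via Proposition \ref{prop:cover-inj} is, if anything, slightly more explicit than the paper's.
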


\begin{proof}
To prove this theorem, we assume $ \mu = \sigma_M $, $ \nu = \sum_{i=1}^k \nu_i \delta_{p_i} $ with $ \min_{1 \leq i \leq k} \nu_i > 0 $, $ \{ p_1, p_2, \dots, p_k \} \subset M $, and $ \sum_{i=1}^k \nu_i = \sigma_M(M) $. 
Let $ \mathcal{C}_\varphi $ be the convex hull of the set $ \{ e^{-\varphi_i} \gamma x_i \mid 1 \leq i \leq k, \gamma \in \Gamma \} $ in $ \mathbb{R}^m_1 $. 
We claim that the Gauss curvature measure $ \mu_{\mathcal{C}} $ of the $ \Gamma $-convex polyhedron is discrete, and its support set is the set of points $ e^{-\varphi_i} \gamma x_i $ for all $ 1 \leq i \leq k $ and $ \gamma \in \Gamma $.
In fact, from definition \ref{def:gauss-curvature} and Proposition \ref{prop:isomorphic}, $\forall 1 \leq i \leq k, \forall \gamma \in \Gamma$,
\begin{equation}
\mu_{\mathcal{C}}(e^{-\varphi_i} \gamma x_i) = \sigma_H(\partial \rho(\gamma x_i)) = \sigma_H(U_{i,\gamma}(\varphi)) = \sigma_M(W_i(\varphi)) = \omega_i(\varphi).
\end{equation}

For $ z \in \mathcal{C}_\varphi $, if $ z $ is not a vertex of $ \mathcal{C}_\varphi $, then the subnormal $ \partial \rho(z) $ corresponds to a a vertex or geodesic edge of $ \mathcal{D}_\varphi $, and thus $ \mu_{\mathcal{C}}(z) = \sigma_H(\partial \rho(z)) = 0 $.

The above shows that the existence of a $ \Gamma $-convex polyhedron with a given discrete Gaussian curvature measure in Minkowski spacetime is equivalent to finding a cell decomposition of $ M $ with given cell measures. 
According to Theorem \ref{thm:3}, there exists $ \varphi = (\varphi_1, \varphi_2, \dots, \varphi_k) \in \mathbb{R}^k $ such that $ \omega_i(\varphi) = \nu_i, \forall 1 \leq i \leq k $. 
Then $ \mathcal{C}_\varphi $ is a $ \Gamma $-convex polyhedron with a Gauss curvature measure $ \nu $. 
Since such $ \varphi $ is unique up to adding a constant vector $ (c, c, \dots, c) $, $ \mathcal{C}_\varphi $ is unique up to homothety. 
In addition, the inverse of the radial projection $ p $ gives a homeomorphism $ \psi: \mathbb{H}^m \to \mathbb{R}^m_1 $. This completes the proof.
\end{proof}

\subsection{Hyperbolic Power Diagram}

In this section, we establish the theory of hyperbolic power diagrams and hyperbolic weighted Delaunay triangulations, in order to propose an efficient and stable numerical method for solving hyperbolic optimal transport maps. 
%By combining hyperbolic optimal transport maps with conformal mappings, we obtain an area-preserving parametrization algorithm for surfaces with genus greater than one to the hyperbolic plane.

Let $ \alpha $, $ \beta $, $ \gamma $ be the three inner angles of a hyperbolic triangle $ T $ on $ \mathbb{H}^2 $, and let $ a $, $ b $, $ c $ be the hyperbolic lengths of the sides opposite these angles. The hyperbolic law of cosines states that:
\begin{equation}
\label{eq:cos-law}
\cosh c = \cosh a \cosh b - \sinh a \sinh b \cos \gamma
\end{equation}

The hyperbolic area formula for triangle $ T $ is given by:
\begin{equation}
\label{eq:triangle-area}
\text{Area}(T) = \pi - \alpha - \beta - \gamma.
\end{equation}

Given $ p \in \mathbb{H}^m $ and $ r > 0 $, a hyperbolic geodesic circle is defined as $ c = \{ q \in \mathbb{H}^m \mid d_H(q, p) = r \} $, where $ p $ is the center of the circle $ c $, and $ r $ is its radius. 
Let $ q \in \mathbb{H}^m $ be a point outside the circle $ c $. We can draw two geodesics that are tangent to the circle $ c $. 
The distance from $ q $ to each tangent point is called the hyperbolic power distance from $ q $ to $ c $, denoted as $ \text{pow}(q, c) $. 
According to the hyperbolic law of cosines \ref{eq:cos-law}, the radius $ r $ of the circle $ c $, the hyperbolic distance $ d_H(q, p) $ from $ p $ to $ q $, and the hyperbolic power distance $ \text{pow}(q, c) $ satisfy the following relation:
\begin{equation}
\label{eq:power-distance}
\text{pow}(q, c) = \frac{\cosh d_H(q, p)}{\cosh r} = - \frac{\langle q,p \rangle_H}{\cosh r}.
\end{equation}

Let $ C = \{ c_i(p_i, r_i) \mid i \in I \} $ be a set of hyperbolic geodesic circles on $ \mathbb{H}^m $. The hyperbolic space $ \mathbb{H}^m $ can be divided into cells based on the hyperbolic power distance, leading to the construction of a hyperbolic power diagram.

\begin{definition}
Given a set of hyperbolic geodesic circles $ C = \{ c_i(p_i, r_i) \mid i \in I \} \subset \mathbb{H}^m $, the hyperbolic power diagram is a cell decomposition on the hyperbolic space $ \mathbb{H}^m $,
\begin{equation}
\mathbb{H}^m = \bigcup_{i \in I} U_i(C),
\end{equation}
where each hyperbolic power cell $U_i(C)$ is defined as
\begin{equation}
U_i(C) = \{ p \in \mathbb{H}^m \mid \text{pow}(p, c_i) \leq \text{pow}(p, c_j), \forall j \in I \}.
\end{equation}

For the geodesic circles $ c_i $ and $ c_j $, the Laguerre bisector is defined as:
\begin{equation}
LB(c_i, c_j) = \{ p \in \mathbb{H}^m \mid \text{pow}(p, c_i) = \text{pow}(p, c_j) \}.
\end{equation}
\end{definition}

\begin{proposition}
Given hyperbolic geodesic circles $ c_i $ and $ c_j $, the Laguerre bisector $ LB(c_i, c_j) $ is a geodesic on the hyperbolic space $ \mathbb{H}^m $, and it is perpendicular to the geodesic $ L_{ij} $ passing through $ p_i $ and $ p_j $.
\end{proposition}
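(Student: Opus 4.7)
The plan is to reduce the bisector condition to a linear equation in the ambient Minkowski space and then use the standard classification of totally geodesic hypersurfaces of $\mathbb{H}^m$. First I would use the power-distance formula \eqref{eq:power-distance} to rewrite the defining relation $\text{pow}(q,c_i)=\text{pow}(q,c_j)$ as
\begin{equation}
\frac{\langle q, p_i\rangle_H}{\cosh r_i} \;=\; \frac{\langle q, p_j\rangle_H}{\cosh r_j},
\end{equation}
and hence as $\langle q, v\rangle_H = 0$ with the ambient vector $v := p_i/\cosh r_i - p_j/\cosh r_j \in \mathbb{R}^{m+1}$. Thus $LB(c_i,c_j) = \mathbb{H}^m \cap v^{\perp}$, where $v^{\perp}$ is a Lorentz-hyperplane through the origin.

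Next I would verify that $v$ is space-like (so that the intersection is nonempty and is a codimension-one totally geodesic submanifold, i.e.\ a ``hyperbolic hyperplane,'' which in the case $m=2$ is precisely a geodesic). A direct computation using $\langle p_i,p_i\rangle_H = \langle p_j,p_j\rangle_H = -1$ and $-\langle p_i,p_j\rangle_H = \cosh d_H(p_i,p_j)$ gives
\begin{equation}
\langle v,v\rangle_H \;=\; \frac{2\cosh r_i \cosh r_j \cosh d_H(p_i,p_j) - \cosh^2 r_i - \cosh^2 r_j}{\cosh^2 r_i \cosh^2 r_j},
\end{equation}
which is positive whenever the data are nondegenerate (the edge case $p_i = p_j$ or fully nested circles is excluded from the statement since $L_{ij}$ itself requires $p_i \neq p_j$). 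The characterization of totally geodesic hypersurfaces in the hyperboloid model as $\mathbb{H}^m \cap w^{\perp}$ for space-like $w$ then finishes the first half of the claim.

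For perpendicularity, I would identify the tangent direction of $L_{ij}$ at an intersection point $q \in LB(c_i,c_j)\cap L_{ij}$ and show it is Lorentz-parallel to the normal $v$. The geodesic $L_{ij}$ is $\mathbb{H}^m \cap \mathrm{span}(p_i,p_j)$, so its tangent at $q$ is the unique (up to scale) vector $u = \alpha p_i + \beta p_j$ lying in $T_q\mathbb{H}^m$, i.e.\ satisfying $\alpha\langle p_i,q\rangle_H + \beta\langle p_j,q\rangle_H = 0$. Using the bisector condition $\langle q,p_i\rangle_H/\cosh r_i = \langle q,p_j\rangle_H/\cosh r_j$, this forces $\alpha \cosh r_i + \beta \cosh r_j = 0$, so after scaling
\begin{equation}
u \;=\; \cosh r_j\, p_i - \cosh r_i\, p_j \;=\; (\cosh r_i \cosh r_j)\, v.
\end{equation}
Every tangent vector $w$ to $LB(c_i,c_j)$ at $q$ lies in $T_q\mathbb{H}^m$ and satisfies $\langle w,v\rangle_H = 0$, hence $\langle u,w\rangle_H = 0$, which is exactly the statement that $L_{ij}$ is perpendicular to $LB(c_i,c_j)$ at $q$ in the induced Riemannian metric.

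The main obstacle I anticipate is less a computational one than a conceptual one: making precise what ``geodesic'' means when $m>1$ (the natural reading is the totally geodesic hyperplane $\mathbb{H}^m\cap v^{\perp}$, which is a line only in $\mathbb{H}^2$), and handling the degenerate cases where $v$ fails to be space-like. I would address this by stating the classification of totally geodesic submanifolds of $\mathbb{H}^m$ as a lemma or remark, and noting that the Laguerre bisector is a proper hyperplane exactly when $v$ is space-like, which is the setting of interest for the power-diagram construction.
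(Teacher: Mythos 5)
Your proposal is correct, and the first half coincides with the paper's argument: both reduce the bisector condition via the power-distance formula to the linear equation $\langle q, p_i/\cosh r_i - p_j/\cosh r_j\rangle_H = 0$, so that $LB(c_i,c_j)$ is the trace on $\mathbb{H}^m$ of a hyperplane through the origin. You go further than the paper in two useful ways: you check that the normal vector $v$ is space-like (so the intersection is a genuine nonempty totally geodesic hypersurface), and you flag that ``geodesic'' should be read as ``totally geodesic hyperplane'' for $m>2$; the paper silently asserts the geodesic claim from the linearity of the condition and never addresses degeneracy.

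For perpendicularity your route is genuinely different and, frankly, cleaner. The paper takes the intersection point $p_{ij}$ of $LB(c_i,c_j)$ with $L_{ij}$, derives the proportion $\langle p,p_i\rangle_H/\langle p_{ij},p_i\rangle_H = \langle p,p_j\rangle_H/\langle p_{ij},p_j\rangle_H$ for $p\in LB(c_i,c_j)$, rewrites it as an equality of ratios of geodesic distances, and concludes perpendicularity from that. The passage from ratios of Lorentzian inner products to ratios of distances is not justified (the inner products give ratios of $\cosh$ of distances, not of distances), and the final inference from equal distance ratios to orthogonality is left implicit. Your argument instead identifies the tangent direction of $L_{ij}$ at the intersection point as $u=\cosh r_j\,p_i-\cosh r_i\,p_j$, observes that it is a scalar multiple of the hyperplane normal $v$, and concludes $\langle u,w\rangle_H=0$ for every tangent vector $w$ of the bisector, which is exactly orthogonality in the induced metric. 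This is a standard hyperboloid-model computation, it avoids the paper's dubious step, and it works uniformly in all dimensions. The only thing to keep in mind is that the space-likeness of $v$ does require a nondegeneracy hypothesis (it fails when one power cell swallows the other, i.e.\ when $2\cosh r_i\cosh r_j\cosh d_H(p_i,p_j)\le \cosh^2 r_i+\cosh^2 r_j$), which you correctly isolate and which the paper never mentions.
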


\begin{proof}
From equation \ref{eq:power-distance}, we have
\begin{equation}
p \in LB(c_i, c_j) \Leftrightarrow \text{pow}(p, c_i) = \text{pow}(p, c_j) \Leftrightarrow \frac{\langle p,p_i \rangle_H}{\cosh r_i} = \frac{\langle p,p_j \rangle_H}{\cosh r_j}.
\end{equation}

This implies that the Laguerre bisector $ LB(c_i, c_j) $ is a geodesic on $ \mathbb{H}^m $.

Let $ p_{ij} $ be the intersection point of the Laguerre bisector $ LB(c_i, c_j) $ and the geodesic $ L_{ij} $ passing through $ p_i $ and $ p_j $, then
\begin{equation}
\frac{\langle p_{ij},p_i \rangle_H}{\cosh r_i} = \frac{\langle p_{ij},p_j \rangle_H}{\cosh r_j}.
\end{equation}

Thus, for $ p \in LB(c_i, c_j) $, we have:
\begin{equation}
\frac{\langle p, p_i \rangle_H}{\langle p_{ij}, p_i \rangle_H} = \frac{\langle p, p_j \rangle_H}{\langle p_{ij}, p_j \rangle_H} \quad \Rightarrow \quad \frac{d_H(p, p_i)}{d_H(p_{ij}, p_i)} = \frac{d_H(p, p_j)}{d_H(p_{ij}, p_j)}.
\end{equation}

Thus, the geodesic connecting $ p $ and $ p_{ij} $ is perpendicular to $ L_{ij} $, and so $ LB(c_i, c_j) $ is perpendicular to $ L_{ij} $.
\end{proof}

The above proposition shows that the $ \sigma_H $-measure of each Laguerre bisector is zero. For convenience, for a set of geodesic circles $ \{(p_i, r_i)\} $, we use $ \text{pow}(p, p_i) $ to denote $ \text{pow}(p, c_i) $.

\begin{definition}
Given a set of hyperbolic geodesic circles $ C = \{(p_i, r_i) \mid i \in I \} $, a triangulation with vertices at the centers of the geodesic circles and dual to the hyperbolic power diagram is called the hyperbolic weighted Delaunay triangulation. 
Points $ p_i $ and $ p_j $ are connected in the hyperbolic weighted Delaunay triangulation if and only if the cells $ U_i(C) $ and $ U_j(C) $ are adjacent in the hyperbolic power diagram.
\end{definition}

Next, we explore the differential properties of the cell areas in the power diagram on the two-dimensional hyperbolic plane $ \mathbb{H}^2 $. 
Let $ \triangle p_i p_j p_k $ be a hyperbolic weighted triangle on $ \mathbb{H}^2 $, where $ p_i, p_j, p_k $ are the vertices and $ r_i, r_j, r_k $ are the corresponding radii of the geodesic circles of the vertices. 
Let $ o \in \mathbb{H}^2 $ be the hyperbolic power center of the triangle, meaning that the hyperbolic power distances to the three vertices are equal:
\begin{equation}
\label{eq:power-center}
\text{pow}(o, p_i) = \text{pow}(o, p_j) = \text{pow}(o, p_k) = \cosh R_{ijk},
\end{equation}
where $ R_{ijk} $ is called the hyperbolic power radius of the triangle.

We can then draw geodesics perpendicular to the three sides of the triangle through the hyperbolic power center, with the base of the perpendiculars being $ q_i, q_j, q_k $ respectively, and the geodesic distances from the hyperbolic power center to the base of the perpendiculars being $ d_i, d_j, d_k $.

\begin{figure}[h]
\begin{center}
\fbox{\rule{0pt}{2in}
\includegraphics[width=0.8\linewidth]{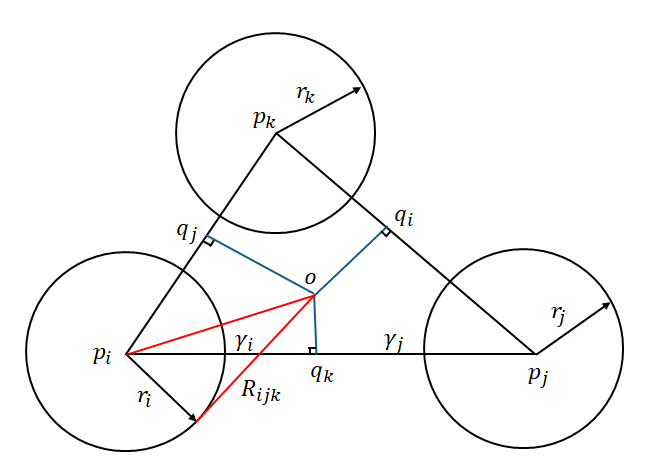}}
\end{center}
   \caption{Hyperbolic weighted triangle}
\label{fig:triangle}
\end{figure}

\begin{proposition}
Let $ \triangle p_i p_j p_k $ be a hyperbolic weighted triangle, as shown in Figure \ref{fig:triangle}. 
Then, the hyperbolic power center $ o $ and the hyperbolic power radius $ R_{ijk} $ are given by the following system of linear equations:
\begin{equation}
\label{eq:linear-system}
\begin{split}
o^\top J o &= -1, \\
(p_i, p_j, p_k)^\top J o &= - \cosh R_{ijk} (\cosh r_i, \cosh r_j, \cosh r_k)^\top
\end{split}
\end{equation}
where $J = diag(1,1,-1)$.
\end{proposition}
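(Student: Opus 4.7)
The plan is to prove this proposition by straightforward unfolding of the two defining conditions: membership of $o$ in the hyperboloid $\mathbb{H}^2$, and equality of the hyperbolic power distances from $o$ to each of the three weighted vertices. Each condition translates directly into one of the two equations in the statement via the Lorentzian inner product formula $\langle x, y \rangle_H = x^\top J y$ with $J = \mathrm{diag}(1,1,-1)$, so no new geometric input beyond the previously established identities is required.

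First I would invoke the power distance formula in equation \ref{eq:power-distance} to rewrite each equality in equation \ref{eq:power-center}. Specifically, for each vertex $v \in \{p_i, p_j, p_k\}$ with associated geodesic circle of radius $r_v$, the identity $\mathrm{pow}(o, v) = -\langle o, v \rangle_H / \cosh r_v = \cosh R_{ijk}$ rearranges to
\begin{equation*}
\langle v, o \rangle_H \;=\; -\cosh R_{ijk}\,\cosh r_v.
\end{equation*}
Writing $\langle v, o\rangle_H = v^\top J o$ and collecting the three scalar identities for $v = p_i, p_j, p_k$ into a single matrix equation yields exactly the second line of the claimed system:
\begin{equation*}
(p_i, p_j, p_k)^\top J o \;=\; -\cosh R_{ijk}\,(\cosh r_i, \cosh r_j, \cosh r_k)^\top.
\end{equation*}

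Next, since $o \in \mathbb{H}^2 \subset \mathbb{R}^2_1$, the defining condition of the hyperboloid model (equation \ref{eq:hyperboloid}) gives $\langle o, o \rangle_H = -1$, which is precisely the first line $o^\top J o = -1$. This completes the derivation of the system. The two steps together show that $o$ and $R_{ijk}$, if they exist in the sense described before the proposition, necessarily satisfy the stated equations.

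I do not expect a genuine obstacle in this proof: both equations are immediate rewrites of definitions already established earlier in the section. The only subtle points are purely notational, namely verifying the sign conventions (the minus sign originates from $\langle o, v \rangle_H < 0$ since both $o$ and $v$ lie in the future cone, and matches the minus in equation \ref{eq:power-distance}) and confirming that $J = \mathrm{diag}(1,1,-1)$ is the correct matrix of the Lorentzian form on $\mathbb{R}^2_1 = \mathbb{R}^3$. The proposition as stated asserts only that the pair $(o, R_{ijk})$ satisfies the system, not that solutions to the system are unique; hence no solvability or nondegeneracy argument is needed here.
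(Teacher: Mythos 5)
Your proof is correct and follows essentially the same route as the paper's: the first equation comes from $o \in \mathbb{H}^2$, and the second from unwinding the power-center condition \ref{eq:power-center} via the power-distance formula \ref{eq:power-distance} and writing $\langle v, o\rangle_H = v^\top J o$. No gaps; your remarks on sign conventions and on the proposition asserting only necessity match what the paper implicitly does.
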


\begin{proof}
Since $ o \in \mathbb{H}^2 $, we have $ o^\top J o = \langle o, o \rangle_H = -1 $.

From equation \ref{eq:power-center}, we have
\begin{equation}
\cosh R_{ijk} = \text{pow}(o, p_i) = \frac{\cosh d_H(o, p_i)}{\cosh r_i} = - \frac{p_i^\top J o}{\cosh r_i}.
\end{equation}

Similarly,
\begin{equation}
p_j^\top J o = - \cosh R_{ijk} \cosh r_j, \quad p_k^\top J o = - \cosh R_{ijk} \cosh r_k.
\end{equation}
\end{proof}

\begin{proposition}
Let $ \triangle p_i p_j p_k $ be a hyperbolic weighted triangle, as shown in Figure \ref{fig:triangle}.
Then, the following partial derivatives hold:
\begin{equation}
\label{eq:dgamma-dh}
\frac{d\gamma_i}{dh_j} = \frac{d\gamma_j}{dh_i} = - \frac{1}{\tanh \gamma_i + \tanh \gamma_j},
\end{equation}
where $ h_i = \ln \cosh r_i $, $ h_j = \ln \cosh r_j $, $ \gamma_i = d_H(p_i, q_k) $, $ \gamma_j = d_H(p_j, q_k) $, and $ \gamma_{ij} = \gamma_i + \gamma_j $.
\end{proposition}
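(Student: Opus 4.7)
The plan is to encode the point $q_k$ by two implicit relations in the unknowns $\gamma_i$ and $\gamma_j$ and then perform implicit differentiation with respect to $h_j$. The first relation comes from the fact that $q_k$ is the foot of the perpendicular from the power center $o$ onto the geodesic $L_{ij}$ through $p_i$ and $p_j$. Since $o$ lies on the Laguerre bisector $LB(c_i, c_j)$ (indeed, by equation \ref{eq:power-center}, $\mathrm{pow}(o, p_i) = \mathrm{pow}(o, p_j)$) and since the previous proposition shows that $LB(c_i, c_j) \perp L_{ij}$, the perpendicular from $o$ to $L_{ij}$ meets $L_{ij}$ exactly at the point $LB(c_i, c_j) \cap L_{ij}$. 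Therefore $q_k \in LB(c_i, c_j)$, and equation \ref{eq:power-distance} yields the balance relation
\begin{equation*}
\frac{\cosh \gamma_i}{\cosh r_i} \;=\; \frac{\cosh \gamma_j}{\cosh r_j}, \qquad \text{i.e.,} \qquad \ln \cosh \gamma_i - h_i \;=\; \ln \cosh \gamma_j - h_j.
\end{equation*}

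The second relation is that $q_k$ lies on the geodesic segment from $p_i$ to $p_j$, so $\gamma_i + \gamma_j = d_H(p_i, p_j)$. Under variation of the weights with the vertices $p_i, p_j$ held fixed, this right-hand side is constant, so $\gamma_{ij} := \gamma_i + \gamma_j$ is independent of $h_j$.

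Next I would differentiate both relations with respect to $h_j$ at fixed $h_i$. The length constraint gives $\partial_{h_j}\gamma_i + \partial_{h_j}\gamma_j = 0$, while the balance relation gives $\tanh \gamma_i \, \partial_{h_j}\gamma_i - \tanh \gamma_j \, \partial_{h_j}\gamma_j + 1 = 0$. Eliminating $\partial_{h_j}\gamma_j = -\partial_{h_j}\gamma_i$ from the second equation yields
\begin{equation*}
(\tanh \gamma_i + \tanh \gamma_j)\, \partial_{h_j}\gamma_i \;=\; -1,
\end{equation*}
which is the claimed formula for $d\gamma_i/dh_j$. The identity $d\gamma_j/dh_i = d\gamma_i/dh_j$ then follows immediately by interchanging the roles of $i$ and $j$ throughout.

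I do not expect a genuine obstacle here: once the two defining relations for $q_k$ are written down, the proof is a one-line implicit differentiation. The only conceptual point requiring care is the identification of $q_k$ with the intersection $LB(c_i, c_j) \cap L_{ij}$, but this is an immediate consequence of the perpendicularity of the Laguerre bisector to $L_{ij}$ established in the previous proposition, together with the observation that the power center $o$ automatically sits on $LB(c_i, c_j)$.
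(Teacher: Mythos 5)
Your proof is correct and follows essentially the same route as the paper's: both start from the Laguerre-bisector relation $\cosh \gamma_i \, e^{-h_i} = \cosh \gamma_j \, e^{-h_j}$ together with the constancy of $\gamma_i + \gamma_j$, and conclude by a short implicit differentiation (you differentiate the logarithmic form in $h_j$, the paper differentiates the exponential form in $h_i$; these are equivalent by symmetry). Your explicit justification that $q_k$ lies on $LB(c_i,c_j)$, via the perpendicularity of the bisector to $L_{ij}$ and the fact that $o \in LB(c_i,c_j)$, is a detail the paper's proof takes for granted.
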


\begin{proof}
Since $ q_k \in LB(p_i, p_j) $, we have
\begin{equation}
\cosh \gamma_i e^{-h_i} = \cosh \gamma_j e^{-h_j}.
\end{equation}

By differentiating both sides of the above equation with respect to $ h_i $, we get
\begin{equation}
\sinh \gamma_i \frac{d\gamma_i}{dh_i} e^{-h_i} - \cosh \gamma_i e^{-h_i} = \sinh \gamma_j \frac{d\gamma_j}{dh_i} e^{-h_j}.
\end{equation}

Note that the equation $ \gamma_{ij} = \gamma_i + \gamma_j $ implies that
$\frac{d\gamma_i}{dh_i} + \frac{d\gamma_j}{dh_i} = 0$. Therefore, we have
\begin{equation}
\begin{split}
\frac{d\gamma_j}{dh_i} &= - \frac{\cosh \gamma_i e^{-h_i}}{\sinh \gamma_i e^{-h_i} + \sinh \gamma_j e^{-h_j}} \\
&= - \frac{\cosh \gamma_i e^{-h_i}}{\sinh \gamma_i e^{-h_i} + \tanh \gamma_j \cosh \gamma_j e^{-h_j}} \\ 
&= - \frac{\cosh \gamma_i e^{-h_i}}{\sinh \gamma_i e^{-h_i} + \tanh \gamma_j \cosh \gamma_i e^{-h_i}} \\ 
&= - \frac{1}{\tanh \gamma_i + \tanh \gamma_j}.
\end{split}
\end{equation}
\end{proof}

To compute the partial derivatives of the area of each cell $ U_i(C) $, we need to calculate the area of a hyperbolic quadrilateral. 
Now consider the upper half-plane model, $ \mathbb{U} = \{ z = x + iy \in \mathbb{C} \mid y > 0 \} $, whose hyperbolic metric is $ ds = \frac{|dz|}{y} $. 
For $ a > 1 $, the hyperbolic distance between the points $ i $ and $ ia $ is given by
\begin{equation*}
d_U(i, ia) = \int_1^a \frac{dy}{y} = \ln a.
\end{equation*}

A Saccheri quadrilateral is a hyperbolic quadrilateral in $ \mathbb{H}^2 $ that has two sides of equal hyperbolic length and is perpendicular to the third side. We refer to the third side as the base of the quadrilateral. Below, we provide the formula for the area of a Saccheri quadrilateral.

\begin{figure}
\centering
\begin{minipage}{.5\textwidth}
  \centering
  \captionsetup{width=.9\linewidth}
  \fbox{\rule{0pt}{2in}
  \includegraphics[width=.8\linewidth]{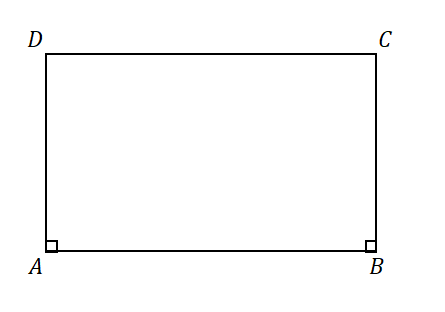}}
  \caption{Saccheri Quadrilateral ABCD}
  \label{fig:quad-abcd}
\end{minipage}%
\begin{minipage}{.5\textwidth}
  \centering
  \captionsetup{width=.9\linewidth}
  \fbox{\rule{0pt}{2in}
  \includegraphics[width=.8\linewidth]{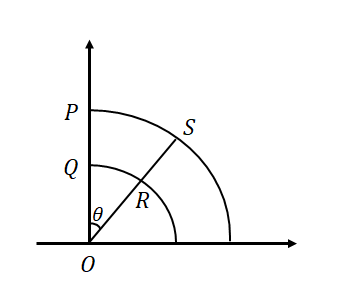}}
  \caption{Saccheri Quadrilateral PQRS}
  \label{fig:quad-pqrs}
\end{minipage}
\end{figure}

\begin{proposition}
\label{prop:quad-area}
Let $ ABCD $ be a Saccheri quadrilateral, where the base is $ AB $, as shown in Figure \ref{fig:quad-abcd}. 
Let the hyperbolic lengths of sides $ AB $ and $ BC $ be $ a $ and $ b $, respectively. Then, the area of the quadrilateral $ \overline{ABCD} $ is given by
\begin{equation}
\text{Area}(ABCD) = a \sinh b.
\end{equation}
\end{proposition}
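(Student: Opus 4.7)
The plan is to use Fermi coordinates along the base geodesic $AB$, which convert the Saccheri quadrilateral into an axis-aligned rectangle and reduce the area computation to a routine double integral.

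First, I would introduce Fermi coordinates $(t,u)$ adapted to the base: let $t\in[0,a]$ be arclength along the geodesic containing $AB$ measured from $A$, and let $u\ge 0$ denote the signed perpendicular hyperbolic distance from that base geodesic. A standard fact in constant curvature $-1$ geometry (a direct consequence of solving the Jacobi equation along the base) is that the hyperbolic metric in these coordinates takes the form
\begin{equation*}
ds^2 = du^2 + \cosh^2(u)\, dt^2,
\end{equation*}
so the area element is $\cosh(u)\, dt\, du$. Within the paper's hyperboloid framework this can be verified by the explicit parametrization $\mathbf{x}(t,u) = (\sinh t\cosh u,\ \sinh u,\ \cosh t\cosh u)$ of $\mathbb{H}^2$, which satisfies $\langle\mathbf{x},\mathbf{x}\rangle_H=-1$ and whose induced metric is exactly the one displayed above.

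Second, I would identify the Saccheri quadrilateral in these coordinates. The base condition $|AB|=a$ places $A=(0,0)$ and $B=(a,0)$. The sides $AD$ and $BC$ are by definition the geodesics perpendicular to the base at $A$ and $B$, which correspond to the coordinate lines $\{t=0\}$ and $\{t=a\}$; along these lines the metric reduces to $du^2$, so the condition $|BC|=b$ (and the equal-length condition $|AD|=b$) places $D=(0,b)$ and $C=(a,b)$. The closed region $ABCD$ is therefore the coordinate rectangle $[0,a]\times[0,b]$, and the top side $DC$, a hypercycle at perpendicular distance $b$ from the base, does not enter the area calculation.

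Third, integrate directly:
\begin{equation*}
\text{Area}(ABCD) \;=\; \int_0^a\!\!\int_0^b \cosh(u)\, du\, dt \;=\; a\bigl[\sinh u\bigr]_0^b \;=\; a\sinh b.
\end{equation*}
The only potential obstacle is the justification of the Fermi-coordinate metric; an alternative, more self-contained route is to apply hyperbolic Gauss--Bonnet to $ABCD$, which yields $\text{Area}(ABCD)=\pi - 2\theta$ where $\theta$ is the equal apex angle at $C,D$, and then express $\theta$ in terms of $a,b$ via the hyperbolic law of cosines \eqref{eq:cos-law} applied to the right triangles formed by a diagonal, but this route is computationally heavier and the Fermi-coordinate argument is the cleanest.
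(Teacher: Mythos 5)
Your proposal is correct and is essentially the paper's argument carried out in a different model: the paper maps the quadrilateral isometrically into the upper half-plane and integrates the area element in polar coordinates $(r,\theta)$ about the foot of the base geodesic, and these are exactly your Fermi coordinates under the substitution $t=\ln r$, $\sinh u=\tan\theta$ (note the paper's $b=\ln(\tan\theta+\sec\theta)$ is precisely $\sinh b=\tan\theta$). Both proofs reduce to evaluating $\int_0^a\int_0^b\cosh u\,du\,dt=a\sinh b$ over a coordinate rectangle, so no further comparison is needed.
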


\begin{proof}
For any Saccheri quadrilateral in $ \mathbb{H}^2 $, it is isometric to some Saccheri quadrilateral in $ \mathbb{U} $. Therefore, we prove this proposition in the upper half-plane model.

Let $ P = (0, e^a) $, $ Q = (0, 1) \in \mathbb{C} $. Then we have $d_U(P, Q) = \ln e^a = a$.
This shows that there exists a transformation $ \gamma \in \text{Isom}(\mathbb{U}) $ such that $ \gamma $ maps the geodesic segment $ AB $ to the geodesic segment $ PQ $, where $ PQ $ is the vertical line segment from $ P $ to $ Q $. In addition, $ \gamma(A) = P $ and $ \gamma(B) = Q $. 
Let $ R $ lie on the unit circle centered at $ O $, satisfying $ \text{Re}(R) > 0 $ and $ d_U(Q, R) = b $. 
Let $ S $ lie on the circle centered at $ O $ with radius $ e^a $, satisfying $ \text{Re}(S) > 0 $ and $ d_U(P, S) = b $. 
Thus, the points $ O $, $ R $, and $ S $ are collinear, as shown in Figure \ref{fig:quad-pqrs}.
Since the two opposite sides $ AD $ and $ BC $ are both perpendicular to $ AB $, $ \gamma $ maps $ AD $ to $ PS $ and $ BC $ to $ QR $. 
Therefore, $ \gamma $ isometrically maps the quadrilateral $ ABCD $ to the quadrilateral $ PQRS $. Hence, $\text{Area}(ABCD) = \text{Area}(PQRS)$.

Let $ \theta = \angle QOR $, then we have
\begin{equation*}
\int_0^\theta \frac{1}{\cos x} \, dx = b.
\end{equation*}

Therefore, $ b = \ln(\tan \theta + \sec \theta) $. Thus,
\begin{equation}
\begin{split}
\text{Area}(ABCD) &= \text{Area}(PQRS) = \int_0^\theta \int_1^{e^a} \frac{1}{r^2 \cos^2 x} r \, dr \, d\theta \\
&= a \tan \theta = a \sinh b.
\end{split}
\end{equation}
\end{proof}

\begin{figure}[h]
\begin{center}
\fbox{\rule{0pt}{2in}
\includegraphics[width=0.8\linewidth]{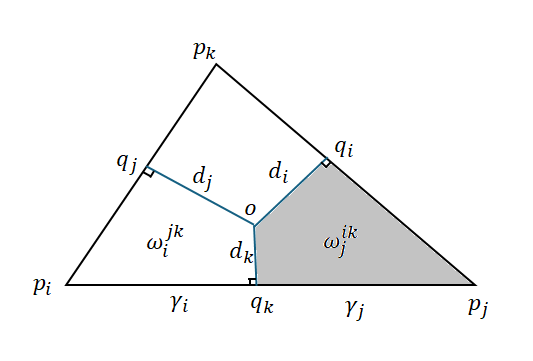}}
\end{center}
   \caption{Computing cell area in a hyperbolic weighted triangle}
\label{fig:triangle-2}
\end{figure}

\begin{proposition}
Let $ w_j^{ik} := \sigma_{\mathbb{H}^2}(\triangle p_ip_jp_k \cap U_j(C)) $, as shown in Figure \ref{fig:triangle-2}.
Then, the partial derivative of $ w_j^{ik} $ with respect to $ h_i = \ln \cosh r_i $ is given by:
\begin{equation}
\label{eq:cell-grad1}
\frac{\partial}{\partial h_i} w_j^{ik} = - \frac{\sinh d_k}{\tanh \gamma_i + \tanh \gamma_j}.
\end{equation}
\end{proposition}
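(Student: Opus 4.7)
The plan is to adapt the gradient-flow argument already used in the proof of Theorem \ref{thm:2} and its two-dimensional corollary to the restricted setting of a single hyperbolic weighted triangle, thereby expressing $\partial w_j^{ik}/\partial h_i$ as an arc-length integral along the Laguerre bisector edge $q_k o$.

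First I would identify the boundary of the intersection $U_j(C) \cap \triangle p_ip_jp_k$. It is a geodesic quadrilateral with vertices $p_j, q_k, o, q_i$, whose four sides are $p_j q_k \subset p_ip_j$, the bisector segment $q_k o \subset LB(c_i,c_j)$ (perpendicular to $p_ip_j$ at $q_k$, of hyperbolic length $d_k$), the bisector segment $o q_i \subset LB(c_j,c_k)$ (perpendicular to $p_jp_k$ at $q_i$, of hyperbolic length $d_i$), and $q_i p_j \subset p_jp_k$. Since $LB(c_j,c_k)$ depends only on $h_j$ and $h_k$, both $q_i$ and the supporting geodesic of the edge $o q_i$ stay fixed as $h_i$ varies; the only piece of boundary that moves is the edge $q_k o$, while $o$ slides along the fixed line $LB(c_j,c_k)$.

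Next, set $\lambda(x) := \ln\cosh d_H(p_i,x) - \ln\cosh d_H(p_j,x)$, so that the bisector $LB(c_i,c_j)$ is exactly the level set $\{\lambda = h_i - h_j\}$. Increasing $h_i$ by $\delta h_i$ shifts this level set along the gradient flow of $\lambda$, sweeping out an infinitesimal strip contained in the interior of $U_j(C) \cap \triangle p_ip_jp_k$ that is lost from the cell. Repeating the strip estimate that led to equation \ref{eq:omega-grad-ij}, the $\sigma_{\mathbb{H}^2}$-measure of this strip equals
\begin{equation*}
\delta h_i \int_{q_k}^{o} \frac{1}{|\nabla\lambda(x)|}\, d\sigma_H^1(x) + o(\delta h_i).
\end{equation*}
Using the explicit formula for $\nabla_x^H c_H$ given just before the two-dimensional corollary, together with the defining identity \ref{eq:edge-property} along $LB(c_i,c_j)$, I would verify (exactly as in the derivation of \ref{eq:omega-grad2d-ij}) that $|\nabla\lambda(x)|^{-1} = \cosh d_H(q_k,x)/(\tanh\gamma_i + \tanh\gamma_j)$ for every $x$ on the segment $q_k o$.

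Finally, parametrizing $q_k o$ by arc length $y \in [0,d_k]$ measured from $q_k$, the integral above evaluates to
\begin{equation*}
\int_0^{d_k} \frac{\cosh y}{\tanh\gamma_i+\tanh\gamma_j}\, dy = \frac{\sinh d_k}{\tanh\gamma_i+\tanh\gamma_j},
\end{equation*}
from which $\partial w_j^{ik}/\partial h_i = -\sinh d_k/(\tanh\gamma_i+\tanh\gamma_j)$ follows, with the sign reflecting that the strip is removed from $U_j(C)$ as $h_i$ increases. The main technical point is the correct bookkeeping of the integration domain: unlike the corollary to Theorem \ref{thm:2}, where the auxiliary point $q_{i,js}$ lies in the interior of the bisector edge and splits it into two pieces of lengths $d_{ks}$ and $d_{ls}$, here $q_k$ is an endpoint of $q_k o$, so the corresponding contribution collapses to a single term $\sinh d_k = \sinh 0 + \sinh d_k$. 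Smooth dependence of $o$ on $h_i$, furnished by the linear system \ref{eq:linear-system}, justifies the $o(\delta h_i)$ remainder throughout.
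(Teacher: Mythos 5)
Your proof is correct, but it follows a genuinely different route from the paper's. The paper's own argument is purely synthetic and very short: when $h_i$ increases by $\delta h_i$, the foot $q_k$ of the bisector $LB(c_i,c_j)$ slides along the geodesic $p_ip_j$ by $\delta\gamma_j = -\delta h_i/(\tanh\gamma_i+\tanh\gamma_j)$ (equation \ref{eq:dgamma-dh}), and the region stripped from $U_j(C)\cap\triangle p_ip_jp_k$ is, up to a higher-order infinitesimal triangle near the power center $o$, a Saccheri quadrilateral with base $\delta\gamma_j$ on $p_ip_j$ and sides of length $d_k$, whose area is $\delta\gamma_j\sinh d_k$ by Proposition \ref{prop:quad-area}. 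You instead transplant the coarea/gradient-flow strip estimate from the proof of Theorem \ref{thm:2}, integrating $|\nabla\lambda(x)|^{-1}=\cosh d_H(q_k,x)/(\tanh\gamma_i+\tanh\gamma_j)$ along the bisector segment $q_k o$, exactly as in the derivation of equation \ref{eq:omega-grad2d-ij}; the identification of the power cells with the cost cells (via $\ln\operatorname{pow}(p,c_i)=\ln\cosh d_H(p,p_i)-h_i$) makes this transplantation legitimate, and the hyperbolic Pythagorean relation $\cosh d_H(p_i,x)=\cosh\gamma_i\cosh d_H(q_k,x)$ gives the stated gradient-norm identity. Each approach buys something: yours bypasses Proposition \ref{prop:quad-area} and equation \ref{eq:dgamma-dh} entirely and makes transparent why the result is one summand of \ref{eq:omega-grad2d-ij} (the integration runs over $[0,d_k]$ rather than $[-d_{ks},d_{ls}]$, since $q_k$ is an endpoint of the edge rather than an interior point); the paper's version is more elementary and is the reason the two preceding auxiliary propositions were stated. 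Your treatment of the motion of $o$ (absorbed into the $o(\delta h_i)$ remainder, justified by the smooth dependence furnished by the linear system \ref{eq:linear-system}) corresponds precisely to the paper's ``higher-order infinitesimal hyperbolic triangle,'' so nothing is missing.
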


\begin{proof}
When $ h_i $ changes to $ h_i + \delta h_i $ and $ \gamma_j $ shrinks to $ \gamma_j - \delta \gamma_j $, the power center changes from $ o $ to $ o_1 $. %, as shown in Figure 5.5. 
The region of change for $ U_j(C) $ can be represented by a Saccheri quadrilateral of base length $\delta \gamma_j$ and side length $d_k$, and a higher-order infinitesimal hyperbolic triangle. 
Thus, based on Proposition \ref{prop:quad-area} and equation \ref{eq:dgamma-dh}, we have
\begin{equation*}
\delta \gamma_j \sinh d_k = -\sinh d_k \frac{1}{\tanh \gamma_i + \tanh \gamma_j} \delta h_i.
\end{equation*}

Therefore, we obtain
\begin{equation}
\frac{\partial}{\partial h_i} w_j^{ik} = - \frac{\sinh d_k}{\tanh \gamma_i + \tanh \gamma_j}.
\end{equation}
\end{proof}

\begin{figure}[h]
\begin{center}
\fbox{\rule{0pt}{2in}
\includegraphics[width=0.8\linewidth]{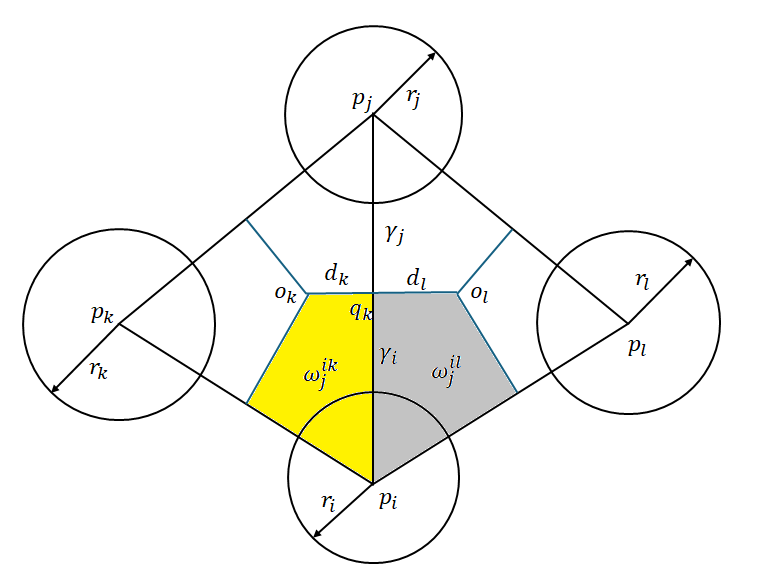}}
\end{center}
   \caption{Computing the hyperbolic power cell area}
\label{fig:power-diagram}
\end{figure}

Based on the above proposition, we can now prove the following theorem.
\begin{theorem}
\label{thm:5}
The partial derivatives of the hyperbolic power cell area are given by the following:
\begin{equation}
\begin{split}
\frac{\partial w_i}{\partial h_i} &= -\sum_{j \neq i} \frac{\partial w_j}{\partial h_i}, \\
\frac{\partial w_i}{\partial h_j} &= \frac{\partial w_j}{\partial h_i} = -\frac{\sinh d_k + \sinh d_l}{\tanh \gamma_i + \tanh \gamma_j}.
\end{split}
\end{equation}
\end{theorem}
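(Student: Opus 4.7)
The plan is to decompose each cell $U_j(C)$ into its intersections with the hyperbolic weighted Delaunay triangles containing $p_j$ as a vertex, then apply the single-triangle derivative formula \ref{eq:cell-grad1} from the preceding proposition to each piece. For the diagonal entries, I would then exploit that the total area $\sum_i w_i$ is an invariant of the domain and so is independent of the height vector.

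First, I would identify the geometry. The edge of the power diagram shared by $U_i(C)$ and $U_j(C)$ is a segment of the Laguerre bisector $LB(p_i,p_j)$, which by duality with the hyperbolic weighted Delaunay triangulation is shared by exactly the two triangles $\triangle p_ip_jp_k$ and $\triangle p_ip_jp_l$. Since $LB(p_i,p_j)$ is perpendicular to the geodesic $p_ip_j$ and meets it at a foot $q$, the distances $\gamma_i=d_H(p_i,q)$ and $\gamma_j=d_H(p_j,q)$ depend only on $h_i$ and $h_j$; hence they coincide as quantities read off from either of the two adjacent triangles. The numbers $d_k$ and $d_l$ are the lengths of the two sub-segments into which $q$ splits the shared Laguerre bisector segment, one lying in $\triangle p_ip_jp_k$ and the other in $\triangle p_ip_jp_l$.

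Second, I would write $w_j=\sum_T\sigma_{\mathbb{H}^2}(T\cap U_j(C))$, where $T$ ranges over weighted Delaunay triangles with $p_j$ as a vertex. A triangle $T$ having $p_j$ as a vertex but not $p_i$ contributes zero to $\partial w_j/\partial h_i$: its intersection with $U_j(C)$ is locally bounded only by Laguerre bisectors $LB(p_j,p_m)$ for vertices $p_m\neq p_i$ of $T$, none of which depend on $h_i$. Hence only the two triangles $\triangle p_ip_jp_k$ and $\triangle p_ip_jp_l$ contribute to $\partial w_j/\partial h_i$, and equation \ref{eq:cell-grad1} gives their contributions as $-\sinh d_k/(\tanh\gamma_i+\tanh\gamma_j)$ and $-\sinh d_l/(\tanh\gamma_i+\tanh\gamma_j)$ respectively. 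Summing produces the claimed expression, and the symmetry $\partial w_i/\partial h_j=\partial w_j/\partial h_i$ is manifest because $d_k,d_l,\gamma_i,\gamma_j$ are symmetric in the roles of $i$ and $j$.

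Third, for the diagonal derivative I would differentiate the identity $\sum_i w_i=\text{const}$ (the total hyperbolic area of the underlying domain does not depend on the weights) with respect to $h_i$, obtaining $\sum_j\partial w_j/\partial h_i=0$, which rearranges to $\partial w_i/\partial h_i=-\sum_{j\neq i}\partial w_j/\partial h_i$. The main obstacle is really just the careful bookkeeping in the second step: one must verify both that $\gamma_i,\gamma_j$ computed in the two adjacent triangles are literally the same quantity, and that triangles not containing $p_i$ as a vertex genuinely contribute nothing to $\partial w_j/\partial h_i$. Once these two points are in hand, the theorem reduces cleanly to the triangle-local lemma already proved.
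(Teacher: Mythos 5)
Your proposal is correct and follows essentially the same route as the paper: the paper likewise writes $\partial w_i/\partial h_j$ as the sum of the two single-triangle contributions $\partial w_j^{ik}/\partial h_i + \partial w_j^{il}/\partial h_i$ from equation \ref{eq:cell-grad1} and obtains the diagonal entry from the conservation of the total area $\sum_i w_i = \sigma_{\mathbb{H}^2}(\Omega)$. Your additional bookkeeping (that $\gamma_i,\gamma_j$ agree across the two adjacent triangles and that triangles not containing $p_i$ contribute nothing) is exactly what the paper's terser proof implicitly relies on.
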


\begin{proof}
Let $ \Omega $ be a compact convex region on $ \mathbb{H}^2 $, and $ C = \{(p_i, r_i) | p_i \in \Omega, r_i > 0, 1 \leq i \leq k\} $ be a family of hyperbolic geodesic circles. 
The hyperbolic power diagram formed by $ C $ is symmetric, as shown in Figure \ref{fig:power-diagram}. 
Let $ w_i = \sigma_{\mathbb{H}^2}(U_i(C) \cap \Omega) $ and $ h_i = \ln \cosh r_i, \forall 1 \leq i \leq k $. 

From equation \ref{eq:cell-grad1}, we have
\begin{equation}
\begin{split}
\frac{\partial w_i}{\partial h_j} = \frac{\partial w_j^{ik}}{\partial h_i}  + \frac{\partial w_j^{il}}{\partial h_i} &= - \frac{\sinh d_k}{\tanh \gamma_i + \tanh \gamma_j} - \frac{\sinh d_l}{\tanh \gamma_i + \tanh \gamma_j} \\
&= - \frac{\sinh d_k + \sinh d_l}{\tanh \gamma_i + \tanh \gamma_j}.
\end{split}
\end{equation}

Note that $ \sum_{i=1}^{k} \omega_i = \sum_{i=1}^{k} \sigma_{\mathbb{H}^2}(U_i(C) \cap \Omega) = \sigma_{\mathbb{H}^2}(\Omega) $. Thus, we have
\begin{equation}
\frac{\partial w_i}{\partial h_i} = -\sum_{j \neq i} \frac{\partial w_j}{\partial h_i}.
\end{equation}
\end{proof}

\subsection{Computational Algorithm}
In this section, based on the geometric variational principle established in Theorem \ref{thm:3}, we propose a computational algorithm to compute the semi-discrete optimal transport map on a compact hyperbolic surface. 
The energy $ E(\varphi) $ is strictly convex on the admissible height space $ \Phi_0 $, so we can optimize and solve it using Newton's method.

Let $ M = \mathbb{H}^2 / \Gamma $ be a compact hyperbolic surface, $ \sigma_M $ be the Riemannian measure on $ M $, and $ \nu = \sum_{i=1}^{k} \nu_i \delta_{p_i} $ be a discrete measure that satisfies $ \min_{1 \leq i \leq k} \nu_i > 0 $, $ \{ p_1, p_2, \dots, p_k \} \subset M $, and $ \sigma_M(M) = \sum_{i=1}^{k} \nu_i $. 
By selecting a point $ p \in M $, we can compute a Dirichlet region $ D(p) $ on $ \mathbb{H}^2 $ such that $ \pi_{\Gamma}((0, 0, 1)) = p $, so $ D(p) $ is a fundamental domain for the group action of $ \Gamma $ on $ \mathbb{H}^2 $. 
Let $ x_1, x_2, \dots, x_k \in \overline{D(p)} $ such that $ \pi_{\Gamma}(x_i) = p_i, \forall 1 \leq i \leq k $. 
Then, for any $ \varphi = (\varphi_1, \varphi_2, \dots, \varphi_k) \in \mathbb{R}^k $, we define the following notation:

\begin{enumerate}
\item Radial vector $ \rho = (\rho_1, \rho_2, \dots, \rho_k) $, $ \rho_i = e^{-\varphi_i}, \forall 1 \leq i \leq k $, and geodesic radius vector $ r = (r_1, r_2, \dots, r_k) $, $ \varphi_i = \ln \cosh r_i, \forall 1 \leq i \leq k $;

\item Convex hull $ \mathcal{C}_{\varphi} = \text{Conv}\{\rho_i \gamma x_i | 1 \leq i \leq k, \gamma \in \Gamma\} $;

\item Upper envelope $ \mathcal{U}_{\varphi} = \text{Env}\{\pi_{i,\gamma}(y) = \frac{-1}{\rho_i \langle \gamma x_i, y \rangle_H}  | 1 \leq i \leq k, \gamma \in \Gamma \} $;

\item Hyperbolic weighted Delaunay triangulation $ \mathcal{T}_{\varphi} = \{\gamma x_i | 1 \leq i \leq k, \gamma \in \Gamma\} $;

\item Subnormal cell decomposition $ \mathcal{S}_{\varphi} = \bigcup_{i=1}^k \bigcup_{\gamma \in \Gamma} \gamma \partial \rho(x_i) $;

\item Hyperbolic power diagram $ \mathcal{D}_{\varphi} = \bigcup_{i=1}^k \bigcup_{\gamma \in \Gamma} \gamma U_i(\varphi) $, where $ U_i(\varphi) = \{ y \in \mathbb{H}^m \mid \rho_i \langle x_i, y \rangle_H \geq \rho_j \langle \gamma x_j, y \rangle_H, \forall 1 \leq j \leq k, \forall \gamma \in \Gamma \} $, and the corresponding hyperbolic geodesic circle set is $ \{(\gamma x_i, r_i) \mid 1 \leq i \leq k, \gamma \in \Gamma \} $;

\item Surface cell decomposition $ \mathcal{W}_{\varphi} = \bigcup_{i=1}^k W_i(\varphi) $, where $ W_i(\varphi) = \pi_{\Gamma}(U_i(\varphi)), \forall 1 \leq j \leq k $;

\item Cell measure vector $ \omega(\varphi) = (\omega_1(\varphi), \omega_2(\varphi), \dots, \omega_k(\varphi)) $, where $ \omega_i(\varphi) = \sigma_{\mathbb{H}^2}(U_i(\varphi)) = \sigma_M(W_i(\varphi)), \forall 1 \leq j \leq k $.
\end{enumerate}

\subsubsection{Hyperbolic Power Diagram Algorithm}
We now present an algorithm to compute the hyperbolic power diagram $ \mathcal{D} $ and the hyperbolic weighted Delaunay triangulation $ \mathcal{T} $.

Given a family of geodesic circles $ \{(x_i, r_i) \mid x_i \in \mathbb{H}^2, r_i > 0, i \in I\} $, we first use the Lawson edge-flip algorithm \cite{lawson1977software} to compute the convex hull $ \mathcal{C} = \text{Conv} \{ \rho_i x_i \mid \rho_i = \cosh^{-1} r_i, i \in I \} $ in $ \mathbb{R}^3 $. 

In the second step, we use the Legendre dual algorithm to compute the dual mesh $ \mathcal{U} $ of $ \mathcal{C} $, which is the upper envelope of a family of hyperplanes. 
Each vertex $ \rho_i x_i \in \mathcal{C} $ corresponds to a hyperplane $ \pi_i(y) = -(\rho_i \langle x_i, y \rangle_H)^{-1} $ in $ \mathcal{U} $.
Each face $ [\rho_i x_i, \rho_j x_j, \rho_k x_k] \in \mathcal{C} $ corresponds to a vertex $ -(\rho_i \langle x_i, y \rangle_H)^{-1} y $ in $ \mathcal{U} $, where $ y $ is the inward unit normal vector, and it satisfies the following system of linear equations:
\begin{equation}
\label{eq:dual-normal}
\langle y, y \rangle_H = -1, \quad \rho_i \langle x_i, y \rangle_H = \rho_j \langle x_j, y \rangle_H = \rho_k \langle x_k, y \rangle_H.
\end{equation}

Two vertices $ v_1, v_2 \in \mathcal{U} $ are connected by an edge if and only if their dual faces in $ \mathcal{C} $ intersect in an edge.

The third step involves computing the subnormal cell decomposition $ \mathcal{S} $ by radially projecting $ \mathcal{U} $ onto $ \mathbb{H}^2 $. 
According to Proposition \ref{prop:isomorphic}, the hyperbolic power diagram $ \mathcal{D} $ and the subnormal cell decomposition $ \mathcal{S} $ are isomorphic. Therefore, we obtain the hyperbolic power diagram $ \mathcal{D} $. 
Finally, by Proposition \ref{prop:isomorphic-TC}, we compute the hyperbolic weighted Delaunay triangulation $ \mathcal{T} $ by radially projecting $ \mathcal{C} $ onto $ \mathbb{H}^2 $.

\begin{algorithm}
\caption{Hyperbolic Power Diagram}\label{alg:hpd}
\begin{algorithmic}[1]
\Require A set of geodesic circles $ \{ (x_i, r_i) | x_i \in \mathbb{H}^2, r_i > 0, i \in I \} $.
\Ensure Hyperbolic power diagram $ \mathcal{D} $, hyperbolic weighted Delaunay triangulation $ \mathcal{T} $.
\State Compute the radial length set $ \{ \rho_i = \cosh^{-1} r_i | i \in I \} $.
\State Apply the Lawson edge-flip algorithm \cite{lawson1977software} to compute the convex hull $ \mathcal{C} = \text{Conv} \{ \rho_i x_i | i \in I \} $ in $ \mathbb{R}^3 $.
\State Apply the equation \ref{eq:dual-normal} to compute the normal vectors of all faces on $ \mathcal{C} $.
\State Apply the Legendre dual algorithm to compute the upper envelope $ \mathcal{U} $.
\State Compute the hyperbolic power diagram $ \mathcal{D} $ by radial projection.
\State Compute the hyperbolic weighted Delaunay triangulation $ \mathcal{T} $ by radial projection.
\State \Return hyperbolic power diagram $ \mathcal{D} $ and hyperbolic weighted Delaunay triangulation $ \mathcal{T} $.
\end{algorithmic}
\end{algorithm}

\subsubsection{Semi-Discrete Hyperbolic Optimal Transport Map Algorithm}
In this section, we propose an algorithm for the semi-discrete hyperbolic optimal transport map $ T : (M, \sigma_M) \to (M, \nu) $ on compact hyperbolic surfaces. 
Furthermore, this algorithm can be extended to numerical algorithms for hyperbolic optimal transport maps on compact hyperbolic manifolds in higher dimensions.

We first compute a fundamental domain $ D $ on $ \mathbb{H}^2 $ under the action of the group $ \Gamma $ and the generators of $ \Gamma $. 
Let $ p \in M $, we compute a Dirichlet region $ D(p) $ on $ \mathbb{H}^2 $ such that $ \pi_{\Gamma}((0, 0, 1)) = p $. 
We consider a discrete triangulated surface $ M = (V, E, F) $ , and let $ l: E \to \mathbb{R} $ be the edge length function discretized from the Riemannian metric on $ M $. 
For a face $ [v_0, v_1, v_2] \in M $, we parameterize the coordinates of the three vertices $ v_0, v_1, v_2 $ using the edge lengths. Let $ \tau(v_0) = (0, 0, 1) $, and then we calculate the embeddings of the other two vertices in the hyperbolic plane using the following formulas:
\begin{equation}
\begin{split}
\tau(v_1) &= (\sinh l_{01}, 0, \cosh l_{01}), \\
\tau(v_2) &= (\sinh l_{02} \cos \theta_0^{12}, \sinh l_{02} \sin \theta_0^{12}, \cosh l_{02}).
\end{split}
\end{equation}
where $ l_{01} $, $ l_{02} $ are the lengths of the edges $ [v_0v_1] $, $ [v_0v_2] $, respectively, and $ \theta_0^{12} $ is the interior angle at the vertex $ v_0 $. 

If two vertices $ v_i, v_j $ in the face $ [v_i, v_j, v_k] \in M $ have already been embedded, we compute the two intersection points of the two hyperbolic geodesics $ (\tau(v_i), l_{ik}) $ and $ (\tau(v_j), l_{jk}) $ to obtain $ \tau(v_k) $, where $ \tau(v_k) $ should satisfy the orientation condition $ (\tau(v_j) - \tau(v_i)) \otimes (\tau(v_k) - \tau(v_i)) > 0 $, where $ \otimes $ denotes the Lorentzian cross product (p.60 \cite{ratcliffe1994foundations}). 
Therefore, we embed all faces of $ M $ in the appropriate order, thereby obtaining a fundamental domain $ D $ under the action of the group $ \Gamma $ on $ \mathbb{H}^2 $.

Let $ \{ a_1, b_1, \dots, a_g, b_g \} $ be the standard generators of the fundamental group of $ M $, where $ g $ is the genus of $ M $. 
Then, the fundamental domain $ D $ has $ 4g $ geodesic edges: $ \tau(a_1), \tau(b_1), \tau(a_1^{-1}), \tau(b_1^{-1}), \dots, \tau(a_g), \tau(b_g), \tau(a_g^{-1}), \tau(b_g^{-1}) $. 
These edges induce $ 2g $ rigid motions $ \{ \alpha_1, \beta_1, \dots, \alpha_g, \beta_g \} $, where each $ \alpha_i, \beta_i $ maps $ \tau(a_i), \tau(b_i) $ to $ \tau(a_i^{-1}), \tau(b_i^{-1}) $. These $ 2g $ transformations form the set of generators of $ \Gamma $.

Note that the unit disk $ \mathbb{D} = \{ z \in \mathbb{C} : |z| < 1 \} $, equipped with the metric $ ds = \frac{2|dz|}{1 - |z|^2} $, is the Poincaré disk model of the hyperbolic plane. Moreover, it is isometric to $ \mathbb{H}^2 $ through the following stereographic projection:
\begin{equation}
\zeta(z) = \left( \frac{2x}{1 - |z|^2}, \frac{2y}{1 - |z|^2}, \frac{1 + |z|^2}{1 - |z|^2} \right), \quad \forall z = x + iy \in \mathbb{D}.
\end{equation}

We apply the algorithm from \cite{jin2008discrete} to compute the generators of the group $ \Gamma $ in the Poincaré disk model, and then obtain the generators of $ \Gamma $ in $ \mathbb{H}^2 $ through stereographic projection.

Before proceeding to the next step, it is worth mentioning that we do not need to use all the vertices $ \{ \gamma x_i \mid 1 \leq i \leq k, \gamma \in \Gamma \} $. 
According to the universal covering theory, we have $ \mathbb{H}^2 = \bigcup_{\gamma \in \Gamma} \gamma D $, so by transforming the fundamental domain $ D $, we can obtain a finite domain $ D_0 $ on $ \mathbb{H}^2 $, such that $\overline{D} \subset D_0 $, and $ D_0 = \bigcup_{\gamma \in \Gamma_0} \gamma D $, where $ \Gamma_0 $ is a finite subset of $ \Gamma $. Thus, $ E_i^j \subset \Gamma_0, \forall 1 \leq i, j \leq k $. 
Therefore, we only need to compute the hyperbolic power diagram $ \mathcal{D}_\varphi $ and the hyperbolic weighted Delaunay triangulation $ \mathcal{T}_\varphi $ on the finite set of geodesic circles $ \{ (\gamma x_i, r_i) \mid 1 \leq i \leq k, \gamma \in \Gamma_0 \} $.

Next, we apply Newton's method to optimize the energy $ E(\varphi) $. Given the initial vector $ \varphi = (0, 0, \ldots, 0) $, we apply Algorithm \ref{alg:hpd} to compute the hyperbolic power diagram $ \mathcal{D}_\varphi $ and the hyperbolic weighted Delaunay triangulation $ \mathcal{T}_\varphi $ on the geodesic circle set $ \{ (\gamma x_i, r_i) \mid \varphi_i = \ln \cosh r_i, 1 \leq i \leq k, \gamma \in \Gamma_0 \} $. 
We use equation \ref{eq:grad-E} to compute the gradient $ \nabla E $, and equations \ref{eq:omega-grad2d-ij} and \ref{eq:omega-grad2d-i} to compute the Hessian matrix, and solve the linear equation $ H(E) \cdot h = \nabla E $, subject to the constraint $ \sum_{i=1}^k h_i = 0 $. 
Then we update $ \varphi $ as follows:
\begin{equation}
\varphi \leftarrow \varphi + \lambda h.
\end{equation}

The step size parameter $ \lambda $ needs to be chosen as a suitable positive number such that all cells of the hyperbolic power diagram $ \mathcal{D}_{\varphi + \lambda h} $ are non-degenerate.
In the numerical experiments, we first set $ \lambda = 1 $ and compute $ \mathcal{D}_{\varphi + \lambda h} $. 
If any cells of $ \mathcal{D}_{\varphi + \lambda h} $ are degenerate, we halve $ \lambda $, \ie $ \lambda \leftarrow \frac{1}{2} \lambda $, and recompute $ \mathcal{D}_{\varphi + \lambda h} $ until all cells are non-degenerate. 
We repeat this process and stop the iteration when $ \| \omega(\varphi) - \nu \| < \epsilon $, where $ \epsilon > 0 $ is the given error threshold.

Finally, through the covering map $ \pi_\Gamma $, we obtain the surface cell decomposition $ M = \bigcup_{i=1}^k W_i $, where $ W_i = \pi_\Gamma(U_i), \forall U_i \in \mathcal{D}_\varphi $. 
Thus, $ T: W_i \mapsto p_i $ gives the semi-discrete hyperbolic optimal transport map from $ (M, \sigma_M) $ to $ (M, \nu) $.

This algorithm is similar to the damped Newton algorithm proposed by Kitagawa et al. \cite{kitagawa2019convergence}. According to Theorem 4.1 \cite{kitagawa2019convergence}, if the areas of all the cells in the hyperbolic power diagram are greater than or equal to a positive constant in each iteration, then the Kantorovich functional is $ C^{2, \alpha} $, where $ \alpha $ depends on $ \delta $ and other constants. This guarantees the convergence of Newton's method.

\begin{algorithm}
\caption{Semi-discrete Hyperbolic Optimal Transport Map}\label{alg:hotm}
\begin{algorithmic}[1]
\Require A compact hyperbolic surface $ (M, \sigma_M) $, with the target measure $ \nu = \sum_{i=1}^k \nu_i \delta_{p_i} $ satisfying $ \sum_{i=1}^k \nu_i = \sigma_M(M) $, initial step size $\lambda_0$, and the error threshold $ \epsilon > 0 $.
\Ensure The semi-discrete hyperbolic optimal transport map $ T : (M, \sigma_M) \to (M, \nu) $.
\State Compute a fundamental domain $ D $ of the group $ \Gamma $ acting on $ \mathbb{H}^2 $, and the generators of $ \Gamma $.
\State Compute $ x_i \in \overline{D} $, such that $ \pi_\Gamma(x_i) = p_i, \forall 1 \leq i \leq k $.
\State Initialize $ \varphi = (0, 0, \dots, 0) $.
\Repeat
	\State Apply Algorithm \ref{alg:hpd} to compute the hyperbolic power diagram $ \mathcal{D}_\varphi $ and the hyperbolic weighted Delaunay triangulation $ \mathcal{T}_\varphi $ for the set of geodesic circles $ \{ (\gamma x_i, r_i) \mid \varphi_i = \ln \cosh r_i, 1 \leq i \leq k, \gamma \in \Gamma \} $.
	\State Apply equation \ref{eq:triangle-area} to calculate the cell area vector $ \omega(\varphi) = (\omega_1(\varphi), \omega_2(\varphi), \dots, \omega_k(\varphi)) $.
	\State Apply equation \ref{eq:grad-E} to compute the gradient $ \nabla E $.
	\State Apply equations \ref{eq:omega-grad2d-ij} and \ref{eq:omega-grad2d-i} to compute the Hessian matrix $ H(E) $.
	\State Solve the linear system $ H(E) \cdot h = \nabla E $, subject to the constraint $ h_1 + h_2 + \dots + h_k = 0 $.
	\State $\lambda \gets \lambda_0$
	\While{$\exists U_i \in \mathcal{D}_{\varphi + \lambda h}$ that is non-degenerate}
		\State Apply Algorithm \ref{alg:hpd} to compute the hyperbolic power diagram $ \mathcal{D}_{\varphi + \lambda h} $
		%\If{$ \forall U_i \in \mathcal{D}_{\varphi + \lambda h} $ are non-degenerate} 
		%	\State \textbf{break}
		%\EndIf
		\State $\lambda \gets \frac{1}{2} \lambda$
	\EndWhile
	\State $\varphi \gets \varphi + \lambda h$
\Until{$ \| \omega(\varphi) - \nu \| < \epsilon $}
\State \Return semi-discrete hyperbolic optimal transport map $ T : W_i \mapsto p_i $.
\end{algorithmic}
\end{algorithm}

\subsubsection{Hyperbolic Area-Preserving Parametrization Algorithm}

In this section, we propose an algorithm to compute an area-preserving parametrization from a compact surface with genus greater than one to the hyperbolic plane, as described in Algorithm \ref{alg:hap}.  
Let $(\Sigma, d_{\Sigma})$ be a compact metric surface with genus $g > 1$. According to the Uniformization theorem (Theorem 4.4.1 \cite{jost2006compact}), there exists a conformal map $\varphi$ that maps the surface $\Sigma$ conformally onto a compact hyperbolic surface $M$. 
The metric $d_{\Sigma}$ on surface $\Sigma$ is conformal to the hyperbolic metric $d_{M}$ on surface $M$, and the conformal factor of $\varphi$ provides the surface area element measure on $M$.  
From the Gauss-Bonnet theorem (p. 274 \cite{do2016differential}), it is known that the total area of $M$ is $-2\pi\chi(M) = 2\pi(2g - 2)$. To compute an area-preserving parametrization from $\Sigma$ to the hyperbolic plane $\mathbb{H}^2$, we scale the surface $\Sigma$ such that its total area equals $2\pi(2g - 2)$.

Now, let $\Sigma = (V, E, F)$ be a triangular mesh surface, where the vertex set is $V = \{v_1, v_2, \dots, v_k\}$. The conformal map $\varphi : \Sigma \to M$ and the hyperbolic metric $d_M$ on $M$ can be computed using the discrete surface Ricci flow algorithm \cite{jin2008discrete}. Then, we assign an area element to each vertex $v_i$ as follows:
\begin{equation}
\label{eq:point-area-elem}
\nu_i = \frac{1}{3} \sum_{[v_i, v_j, v_k] \in \Sigma} \text{area}([v_i, v_j, v_k]),
\end{equation}
where $[v_i, v_j, v_k]$ is a face of $\Sigma$ that contains the vertex $v_i$.

This gives the discrete surface area element measure on $M$:
\begin{equation}
\label{eq:surface-area-elem}
\nu = \sum_{i=1}^{k} \nu_i \delta_{p_i}, \quad p_i = \varphi(v_i), \quad \forall 1 \leq i \leq k.
\end{equation}

Next, we apply Algorithm \ref{alg:hotm} to compute the semi-discrete hyperbolic optimal transport map $T : (M, d_M) \to (M, \nu)$. 
This gives a cell decomposition of surface $M$, $M = \bigcup_{i=1}^k W_i$, such that the area of each cell is equal to the area element of the corresponding vertex on $\Sigma$. 
By computing the center $q_i$ of each cell $W_i$ and $x_i \in \mathbb{H}^2$, and ensuring that $\pi_\Gamma(x_i) = q_i$, we obtain an area-preserving parametrization from $\Sigma$ to $\mathbb{H}^2$, $T : v_i \mapsto x_i, \forall 1 \leq i \leq k$.  
Furthermore, by applying spherical projection, we can obtain an area-preserving parametrization from $\Sigma$ to the Poincaré disk model $\mathbb{D}$.  In this way, we can compute an area-preserving parametrization from any compact surface of genus greater than 1 to any conformal model on the hyperbolic plane.

\begin{algorithm}
\caption{Hyperbolic Area-Preserving Parametrization}\label{alg:hap}
\begin{algorithmic}[1]
\Require Triangular mesh surface $\Sigma$, vertex set $\{v_1, v_2, \dots, v_k\}$, genus $g > 1$.
\Ensure Hyperbolic area-preserving parametrization.
\State Scale the surface $\Sigma$ such that its total area equals $2\pi(2g - 2)$.
\State Apply the discrete surface Ricci flow algorithm \cite{jin2008discrete} to compute the conformal map $\varphi : \Sigma \to M$ and the hyperbolic metric $d_M$ on $M$.
\State Use equations \ref{eq:point-area-elem} and \ref{eq:surface-area-elem} to compute the discrete measure $\nu = \sum_{i=1}^{k} \nu_i \delta_{p_i}$.
\State Apply Algorithm \ref{alg:hotm} to compute the semi-discrete hyperbolic optimal transport map $T : (M, d_M) \to (M, \nu)$.
\State Compute the center $q_i$ of each cell $W_i$ and $x_i \in \mathbb{H}^2$, such that $\pi_\Gamma(x_i) = q_i$.
\State \Return Hyperbolic area-preserving parametrization $T : v_i \mapsto x_i$.
\end{algorithmic}
\end{algorithm}

\section{Experiments}
\label{experiments} 

\subsection{Experiment Details}
In this section, we present the experiments details and results for evaluation of our method.
We implemented our algorithms in C++ code and all our experiments are conducted on a laptop with an Intel i5 CPU at 2.4 GHz with 16GB RAM. Our method do not require the use of specialized hardware such as GPUs. 
For optimization using Newton's method, we use an initial step size of 0.5 and error threshold of $1e^{-6}$.

\subsection{Results and Discussion}
\subsubsection{Synthetic Data}
We first evaluate our method on some simple synthetic data for easy visualization and comparison. 
The synthetic dataset consists of 61 evenly spaced points on the Poincaré disk as shown in Figure \ref{fig:toy-data}.
We take the hyperbolic surface $M$ to the the convex hull region formed by the set of points $\{ p_i \}$ and let $\Sigma = (V, E, F)$ be a triangular mesh surface of $M$.
We define the source measure $\mu$ to be $\mu(E) = \text{area}_h(E)$ for $E \subseteq M$, where $\text{area}_h(\cdot)$ represents the hyperbolic area function.
The target measure is defined as $\nu = \sum_{i} \nu_i \delta_{p_i}$, where 
\[
\nu_i = \frac{1}{3} \sum_{[p_i, p_j, p_k] \in \Sigma} \text{area}_e([p_i, p_j, p_k]),
\]
$[p_i, p_j, p_k]$ is a face of $\Sigma$ that contains $p_i$, and $\text{area}_e(\cdot)$ is the euclidean area function.

\begin{figure}[h]
\begin{center}
\fbox{\rule{0pt}{2in}
\includegraphics[width=0.8\linewidth]{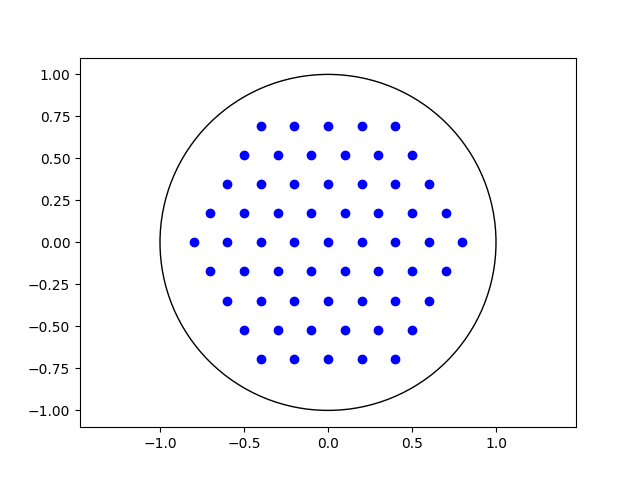}}
\end{center}
   \caption{Synthetic dataset}
\label{fig:toy-data}
\end{figure}

We compute the hyperbolic optimal transport map on this synthetic dataset and obtain the cell decomposition $M = \bigcup_{i=1}^k W_i$ using algorithm \ref{alg:hotm}. 
In order to visualize the effect of computing the OT map in hyperbolic space, we also compute the Euclidean OT map and cell decomposition for the same dataset and compare the results, shown in Figure \ref{fig:hyperbolic-ot} and \ref{fig:euclidean-ot}. 
The cell decomposition is marked by the blue edges and the centroids of each cell are marked in green.
The hyperbolic OT map is then given by the map from each cell  $W_i$  to its corresponding point $p_i$.

\begin{figure}[h!]
\centering
\begin{minipage}{.5\textwidth}
  \centering
  \captionsetup{width=.9\linewidth}
  \fbox{\rule{0pt}{2in}
  \includegraphics[width=.8\linewidth]{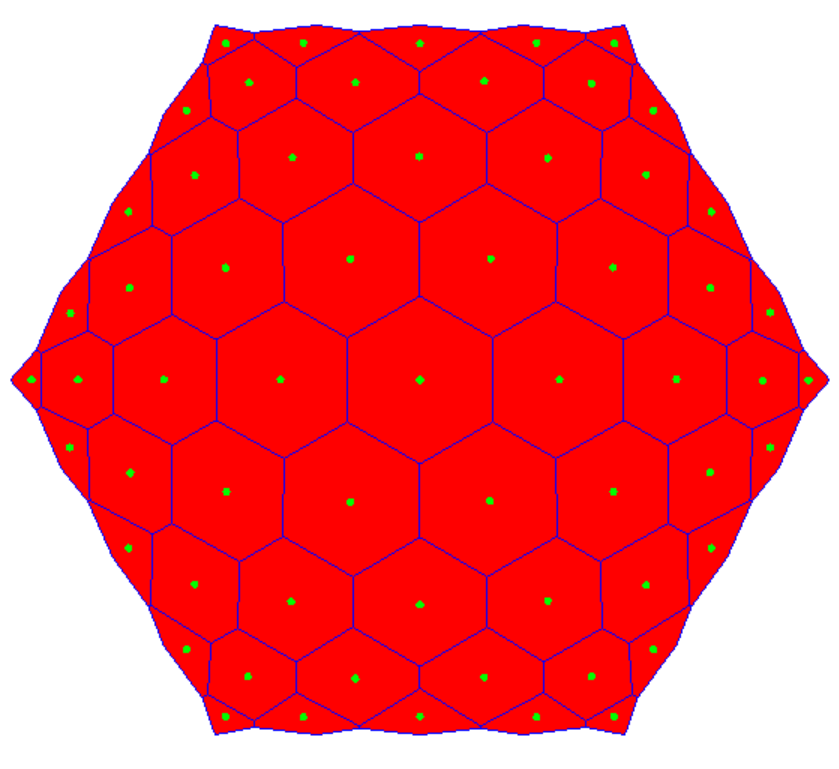}}
  \caption{Hyperbolic OT}
  \label{fig:hyperbolic-ot}
\end{minipage}%
\begin{minipage}{.5\textwidth}
  \centering
  \captionsetup{width=.9\linewidth}
  \fbox{\rule{0pt}{2in}
  \includegraphics[width=.8\linewidth]{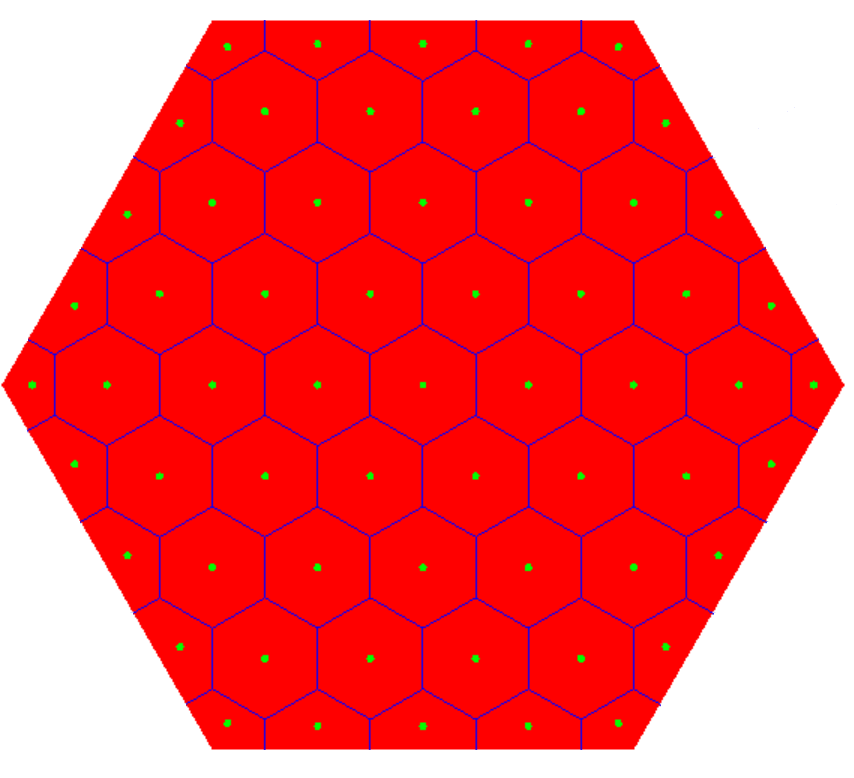}}
  \caption{Euclidean OT}
  \label{fig:euclidean-ot}
\end{minipage}
\end{figure}

The result from the hyperbolic OT map shows a cell decomposition where cells near the origin are larger than those farther away. This effect arises due to the scaling of the hyperbolic metric on the Poincaré disk, which varies with the distance from the origin. Despite their apparent differences in size, each of the 6-sided cells maintains an equal area.
Note that the seemingly non-convex boundary depicted in the figure is an artifact of the metric on the Poincaré disk, where geodesics are circular arcs perpendicular to the boundary of the disk. The boundary sides are in fact hyperbolic geodesics.
In contrast, the cell decomposition resulting from the Euclidean OT map exhibits uniformity throughout the domain, with the exception of the boundary, consistent with the expected behavior according to the definition of the target measure.

To see the effect of a different target measure, we perform another experiment on the synthetic dataset by setting the target measure to use the hyperbolic area instead.
Similarly, we compute the hyperbolic OT map on the dataset and obtain the cell decomposition on the source domain. The results are shown in Figure \ref{fig:diff-targets} where the result using the Euclidean area target measure is shown on the left while the result using hyperbolic area target measure is shown on the right. As before, the centroids of each cell are marked in green.

\begin{figure}[h!]
\begin{center}
\fbox{\rule{0pt}{2in}
\includegraphics[width=0.8\linewidth]{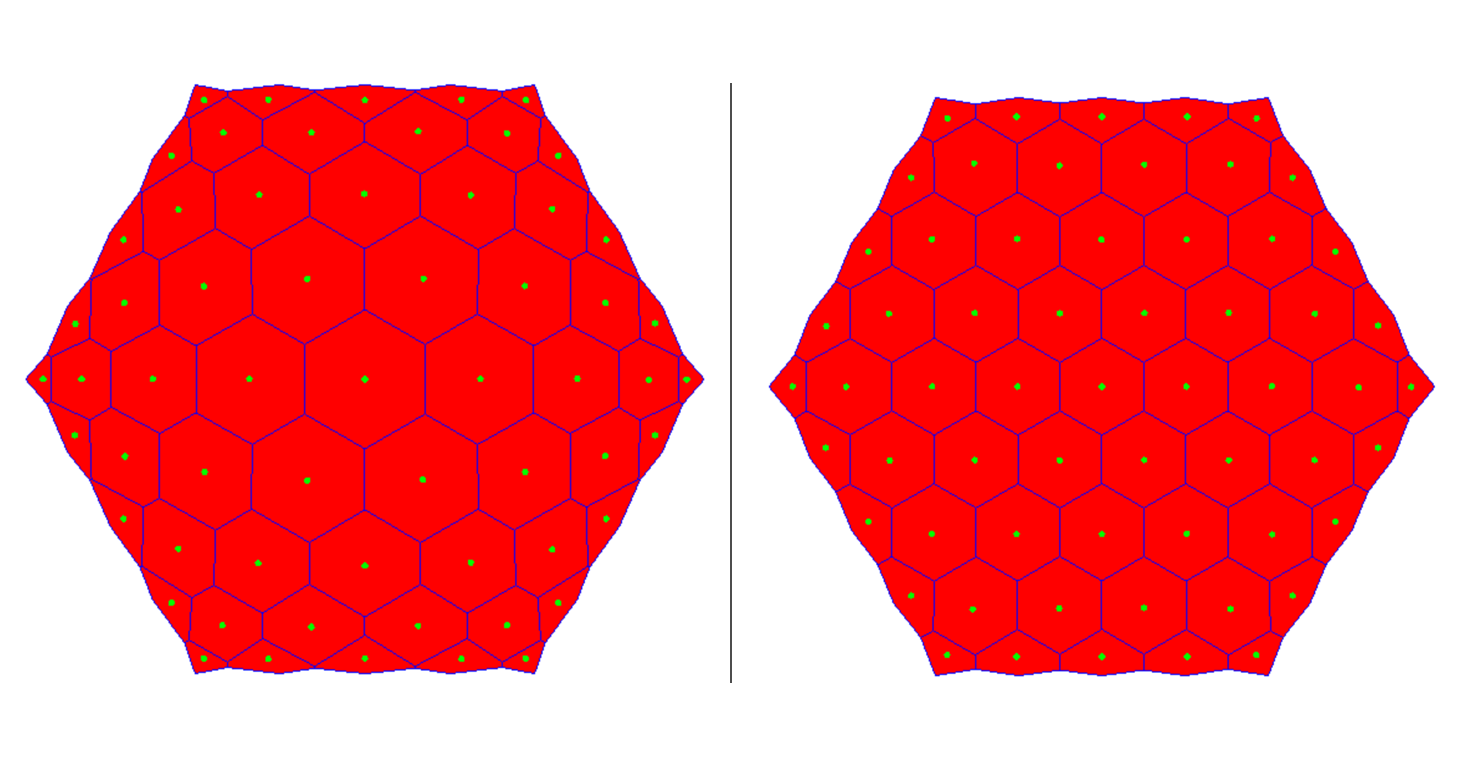}}
\end{center}
   \caption{Comparing different target measures}
\label{fig:diff-targets}
\end{figure}

The results demonstrate that the cells appear more uniform in size, which contrasts with the outcome obtained using the hyperbolic OT map with the Euclidean area function. 
However, this pattern is consistent with the behavior observed in the Euclidean OT case when the Euclidean area function is used. 
This aligns with the expected theoretical results.

\subsubsection{Multi-genus Surfaces}
We also evaluate our method on 3D models to demonstrate the computation of OT maps on multi-genus Riemann surfaces, following the procedure outlined in Algorithm \ref{alg:hap}.
To prepare the data for OT map computation, we first calculate the hyperbolic metric on the surface using the discrete surface Ricci flow algorithm \cite{jin2008discrete}. The process is illustrated in Figure \ref{fig:ricci-flow} using the figure eight model, which is a genus 2 surface. 
We start with a triangular mesh of a compact Riemann surface (top left). 
Next, we apply discrete Ricci flow to compute the hyperbolic metric on the surface, which is conformal to the unit disk (top right). 
We then cut along a set of fundamental group generators of the surface and embed the surface onto the unit disk (bottom left). 
Subsequently, we compute the generators of the Fuchsian group to determine the Möbius transformations that map each side of the boundary to its inverse. Using these transformations, we apply them to the fundamental domain mesh and replicate the mesh to cover the universal covering space (bottom right).

\begin{figure}[h!]
\begin{center}
\fbox{\rule{0pt}{2in}
\includegraphics[width=0.8\linewidth]{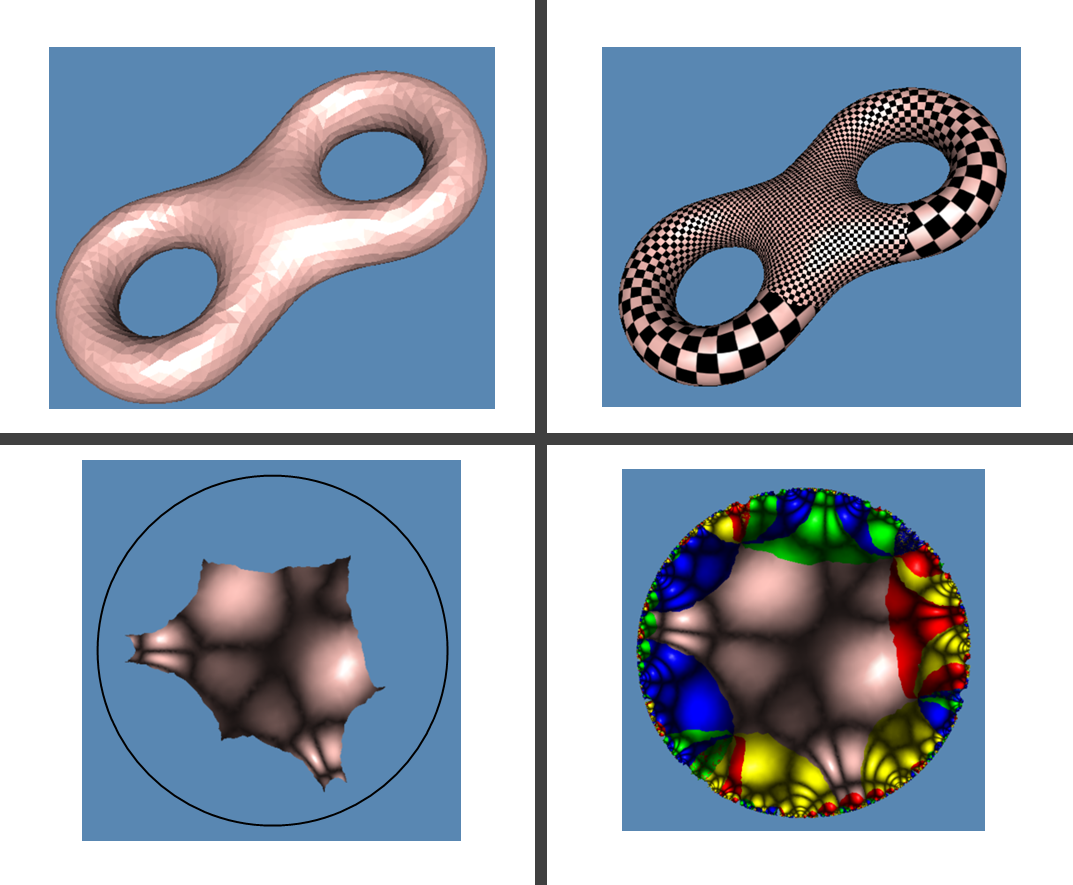}}
\end{center}
   \caption{Discrete surface ricci flow}
\label{fig:ricci-flow}
\end{figure}

Based on the output above, we now apply Algorithm \ref{alg:hotm} to compute the OT map on the surface and obtain the final cell decomposition on the embedded surface and on the universal covering space. 
The source and target measures are defined similarly as in the first synthetic data experiment.
The results are shown in Figure \ref{fig:ucs-cells}, where we have the cell decomposition on the fundamental domain on the left and on the universal covering space on the right. 

\begin{figure}[h!]
\begin{center}
\fbox{\rule{0pt}{2in}
\includegraphics[width=0.8\linewidth]{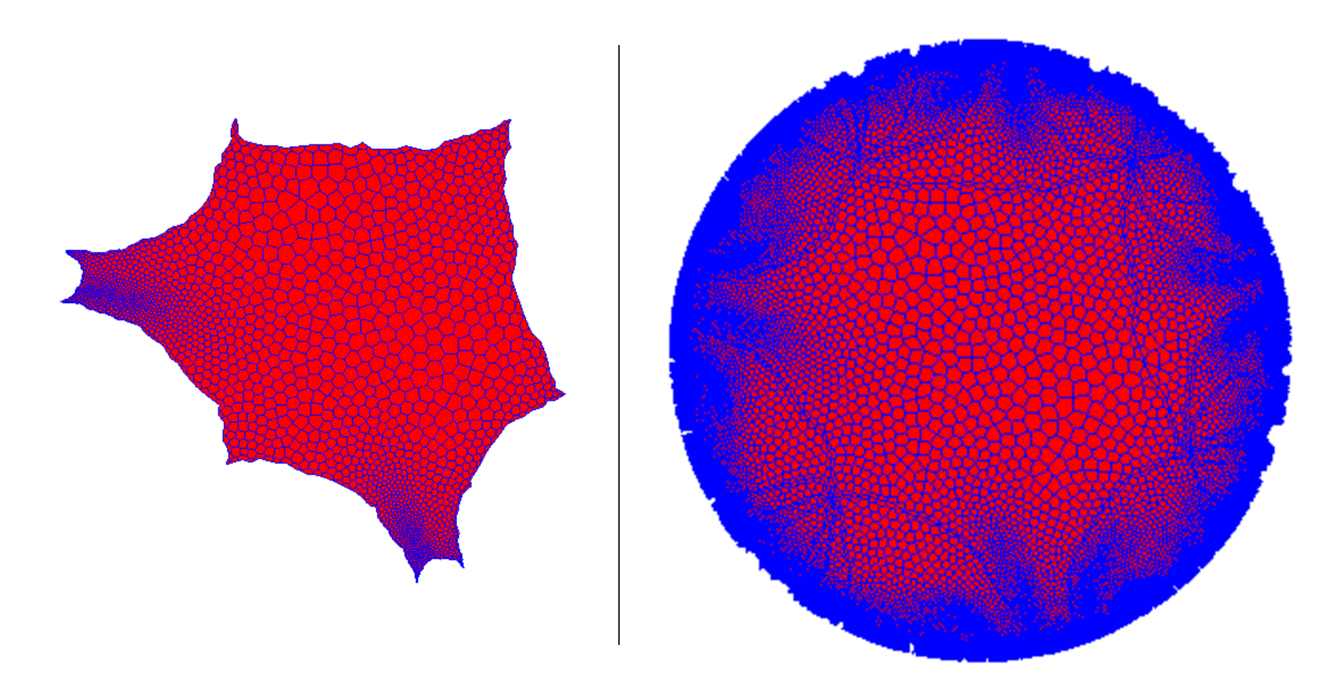}}
\end{center}
   \caption{Cell decomposition on the fundamental domain and the universal covering space}
\label{fig:ucs-cells}
\end{figure}

We then examine the convergence of the error for Newton's method applied to the figure eight model, as illustrated in Figure \ref{fig:error-graph}. In this figure, we plot both the logarithm of the total squared error and the maximum relative error as functions of the iteration number. 
The graph reveals that as the number of iterations increases, the errors decrease at an exponential rate. 
The graph demonstrates that, with increasing iterations, both errors decrease exponentially. Specifically, both the total squared error and the maximum relative error exhibit rapid reductions, highlighting the efficiency of Newton's method in achieving fast convergence. 
This behavior aligns with the theoretical results presented in \cite{kitagawa2019convergence}.
This behavior underscores the ability of the proposed method to quickly refine the solution with each iteration, making it highly efficient for solving the hyperbolic OT problem.

\begin{figure}[h!]
\begin{center}
\fbox{\rule{0pt}{2in}
\includegraphics[width=0.8\linewidth]{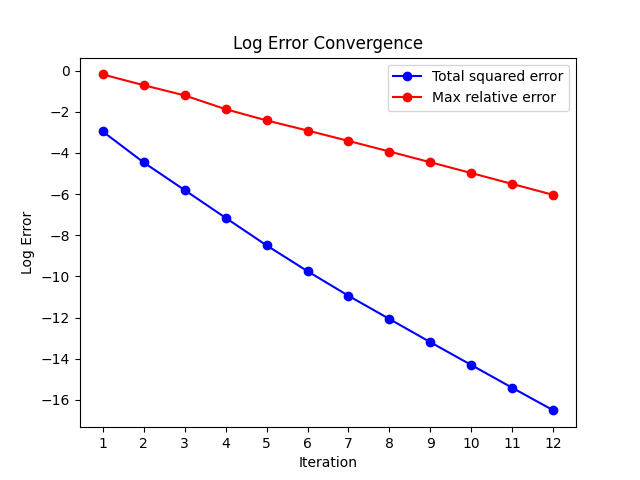}}
\end{center}
   \caption{Log Error Convergence}
\label{fig:error-graph}
\end{figure}

We also compute the hyperbolic OT map on additional surface models, including the amphora model, which is another genus-2 surface with a larger number of vertices, as well as a genus-3 surface to demonstrate the method on models with a higher genus. 
The results are presented in Figure \ref{fig:amphora-genus3}, where the left image shows the original model, the middle image shows the embedded surface on the Poincaré disk, and the right image shows the cell decomposition obtained after computing the hyperbolic OT map.

\begin{figure}[h!]
\begin{center}
\fbox{\rule{0pt}{2in}
\includegraphics[width=0.8\linewidth]{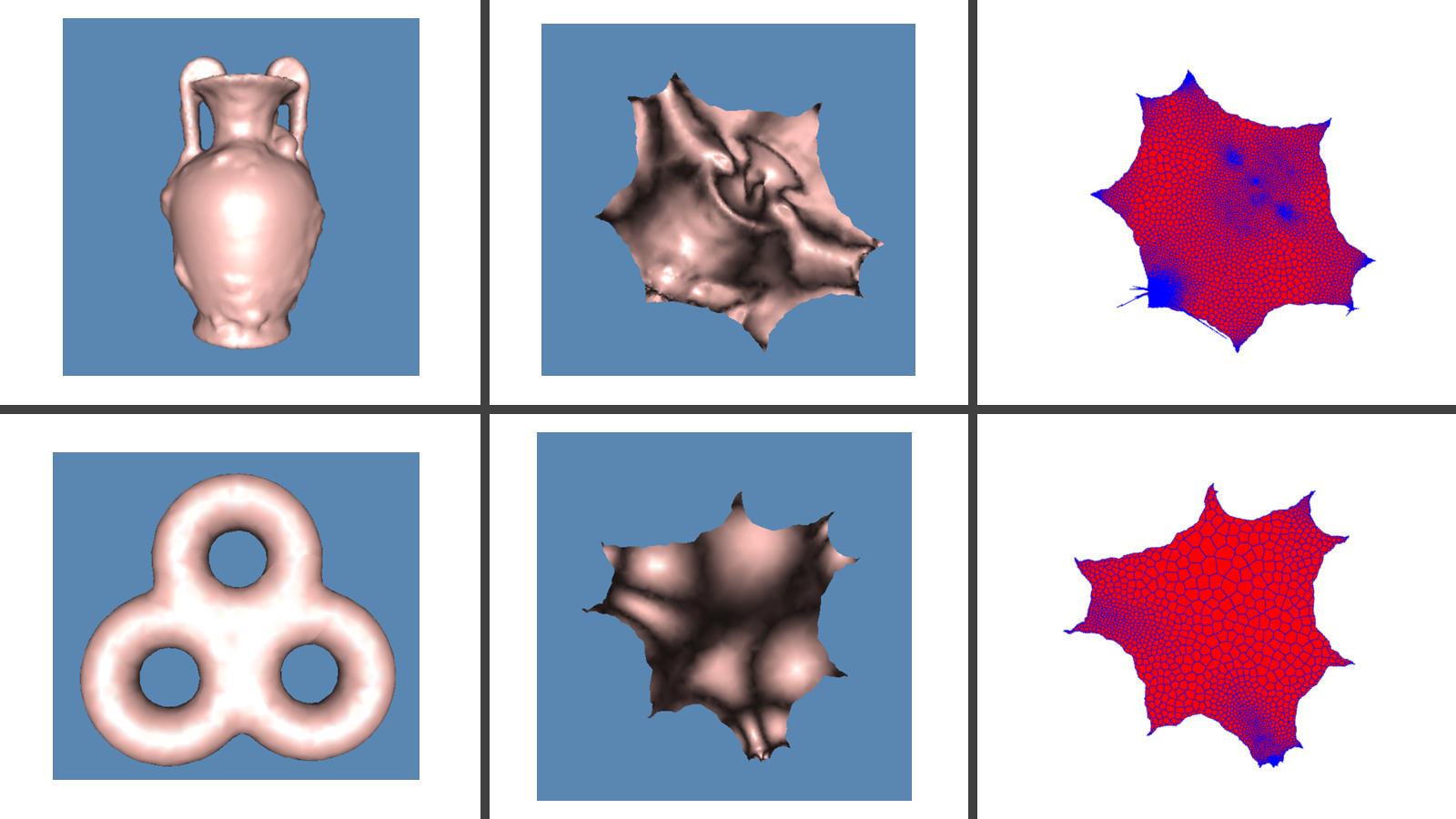}}
\end{center}
   \caption{Results on amphora and genus 3 models}
\label{fig:amphora-genus3}
\end{figure}

In order to investigate the computational efficiency of our proposed method, we evaluate the average execution time per iteration for the method on the three different models. 
We also evaluate the execution times for the Euclidean OT method based on \cite{gu2013variational} on the same data for comparison.
The results are shown in Table \ref{table:1}.

\begin{table}[h!]
\centering
\caption{Execution times per iteration}
\begin{tabular}{c c c c c} 
 \hline
 \multirow{2}{*}{Model}  & \multirow{2}{*}{Genus}  & \multirow{2}{*}{Num vertices} & \multicolumn{2}{c}{Time per iteration (ms)}  \\ [0.5ex] 
 \cline{4-5}
 &&& Hyperbolic OT & Euclidean OT \\
 \hline\hline
 Eight & 2 & 2213 & 19.6 & 155 \\ 
 Amphora & 2 & 10313 & 103 & 1370 \\
 Genus 3 & 3 & 1931 & 16.6 & 146 \\ [1ex] 
 \hline
\end{tabular}
\label{table:1}
\end{table}

The table presents the execution times for both methods across models with different numbers of vertices, all of which are in the order of milliseconds. 
This indicates that our proposed method is highly efficient, even when applied to larger and more complex models. 
The result also reveals a roughly linear relationship between the execution time and the number of vertices for hyperbolic OT. 
Interestingly, the execution times for Euclidean OT are comparable to those of hyperbolic OT, suggesting that both methods exhibit similar scalability when applied to these 3D models. 
This shows that our method for hyperbolic OT does not incur additional computational cost when compared to Euclidean methods.

\section{Conclusion}
\label{conclusion}
In this paper, we presented the semi-discrete hyperbolic optimal transport problem and proposed a method for computing the optimal transport map in hyperbolic space based on the geometric variational principle.
We also implemented a numerical algorithm to compute the hyperbolic OT map and demonstrated its efficacy through experiments on synthetic toy data and 3D mesh data.
Our method is able to compute the OT map without sacrificing precision using entropic regularization and does not require the use of GPUs unlike neural network-based approaches.

We believe our research findings in this paper is potentially useful in many practical applications such as modeling hierarchical data or computing OT maps on multi-genus Riemann surfaces.

Future work on the hyperbolic OT problem can be further explored in the following areas:

\begin{itemize}
\item Implement the algorithm for higher dimensions so that it can be integrated with machine learning models for other applications.

\item Improving the robustness of the numerical algorithm so that it is less sensitive to the input data or the results from the discrete Ricci flow algorithm.

\item Investigate other transport cost functions on general Riemannian manifolds that are compatible with the generalized Legendre duality theory and the geometric variational principle for the OT problem.

\end{itemize}

%%  The bibliography
\bibliographystyle{unsrtnat}
\bibliography{references}

\begin{thebibliography}{52}
\providecommand{\natexlab}[1]{#1}
\providecommand{\url}[1]{\texttt{#1}}
\expandafter\ifx\csname urlstyle\endcsname\relax
  \providecommand{\doi}[1]{doi: #1}\else
  \providecommand{\doi}{doi: \begingroup \urlstyle{rm}\Url}\fi

\bibitem[Galichon(2018)]{galichon2018optimal}
Alfred Galichon.
\newblock \emph{Optimal transport methods in economics}.
\newblock Princeton University Press, 2018.

\bibitem[Peyr{\'e} et~al.(2019)Peyr{\'e}, Cuturi,
  et~al.]{peyre2019computational}
Gabriel Peyr{\'e}, Marco Cuturi, et~al.
\newblock Computational optimal transport: With applications to data science.
\newblock \emph{Foundations and Trends{\textregistered} in Machine Learning},
  11\penalty0 (5-6):\penalty0 355--607, 2019.

\bibitem[Montesuma et~al.(2024)Montesuma, Mboula, and
  Souloumiac]{montesuma2024recent}
Eduardo~Fernandes Montesuma, Fred Maurice~Ngole Mboula, and Antoine Souloumiac.
\newblock Recent advances in optimal transport for machine learning.
\newblock \emph{IEEE Transactions on Pattern Analysis and Machine
  Intelligence}, 2024.

\bibitem[Bonneel and Digne(2023)]{bonneel2023survey}
Nicolas Bonneel and Julie Digne.
\newblock A survey of optimal transport for computer graphics and computer
  vision.
\newblock In \emph{Computer Graphics Forum}, volume~42, pages 439--460. Wiley
  Online Library, 2023.

\bibitem[Chen et~al.(2021)Chen, Georgiou, and Pavon]{chen2021optimal}
Yongxin Chen, Tryphon~T Georgiou, and Michele Pavon.
\newblock Optimal transport in systems and control.
\newblock \emph{Annual Review of Control, Robotics, and Autonomous Systems},
  4\penalty0 (1):\penalty0 89--113, 2021.

\bibitem[Gromov(1987)]{gromov1987hyperbolic}
Mikhael Gromov.
\newblock Hyperbolic groups.
\newblock In \emph{Essays in group theory}, pages 75--263. Springer, 1987.

\bibitem[Jost(2006)]{jost2006compact}
Jurgen Jost.
\newblock \emph{Compact riemann surfaces}.
\newblock Springer, 2006.

\bibitem[Zeng(2022)]{zeng2022geometric}
Peng Zeng.
\newblock \emph{Geometric variational principles and numerical methods for
  optimal transport}.
\newblock PhD thesis, Tsinghua University, 2022.

\bibitem[Gu et~al.(2016)Gu, Luo, Sun, and Yau]{gu2013variational}
Xianfeng Gu, Feng Luo, Jian Sun, and Shing-Tung Yau.
\newblock Variational principles for minkowski type problems, discrete optimal
  transport, and discrete monge-ampere equations.
\newblock \emph{Asian Journal of Mathematics}, 2016.

\bibitem[Cui et~al.(2019)Cui, Qi, Wen, Lei, Li, Zhang, and
  Gu]{cui2019spherical}
Li~Cui, Xin Qi, Chengfeng Wen, Na~Lei, Xinyuan Li, Min Zhang, and Xianfeng Gu.
\newblock Spherical optimal transportation.
\newblock \emph{Computer-Aided Design}, 115:\penalty0 181--193, 2019.

\bibitem[Alexandrov(2005)]{alexandrov2005convex}
Alexandr~D Alexandrov.
\newblock \emph{Convex polyhedra}, volume 109.
\newblock Springer, 2005.

\bibitem[Santambrogio(2015)]{santambrogio2015optimal}
Filippo Santambrogio.
\newblock \emph{Optimal transport for applied mathematicians}, volume~87.
\newblock Springer, 2015.

\bibitem[Kantorovich(1948)]{kantorovich1948on}
L.~V. Kantorovich.
\newblock On a problem of monge.
\newblock \emph{Uspekhi Mat. Nauk}, 3:\penalty0 225–226, 1948.

\bibitem[Brenier(1991)]{brenier1991polar}
Yann Brenier.
\newblock Polar factorization and monotone rearrangement of vector-valued
  functions.
\newblock \emph{Communications on pure and applied mathematics}, 44\penalty0
  (4):\penalty0 375--417, 1991.

\bibitem[Kantorovich(2006)]{kantorovich2006problem}
LK~Kantorovich.
\newblock On a problem of monge.
\newblock \emph{Journal of Mathematical Sciences}, 133\penalty0 (4), 2006.

\bibitem[Benamou and Brenier(2000)]{benamou2000computational}
Jean-David Benamou and Yann Brenier.
\newblock A computational fluid mechanics solution to the monge-kantorovich
  mass transfer problem.
\newblock \emph{Numerische Mathematik}, 84\penalty0 (3):\penalty0 375--393,
  2000.

\bibitem[Arjovsky et~al.(2017)Arjovsky, Chintala, and
  Bottou]{arjovsky2017wasserstein}
Martin Arjovsky, Soumith Chintala, and L{\'e}on Bottou.
\newblock Wasserstein generative adversarial networks.
\newblock In \emph{International conference on machine learning}, pages
  214--223. PMLR, 2017.

\bibitem[Xie et~al.(2020)Xie, Wang, Wang, and Zha]{xie2020fast}
Yujia Xie, Xiangfeng Wang, Ruijia Wang, and Hongyuan Zha.
\newblock A fast proximal point method for computing exact wasserstein
  distance.
\newblock In \emph{Uncertainty in artificial intelligence}, pages 433--453.
  PMLR, 2020.

\bibitem[Cuturi(2013)]{cuturi2013sinkhorn}
Marco Cuturi.
\newblock Sinkhorn distances: Lightspeed computation of optimal transport.
\newblock \emph{Advances in neural information processing systems}, 26, 2013.

\bibitem[Pooladian et~al.(2023)Pooladian, Divol, and
  Niles-Weed]{pooladian2023minimax}
Aram-Alexandre Pooladian, Vincent Divol, and Jonathan Niles-Weed.
\newblock Minimax estimation of discontinuous optimal transport maps: The
  semi-discrete case.
\newblock In \emph{International Conference on Machine Learning}, pages
  28128--28150. PMLR, 2023.

\bibitem[Rabin et~al.(2011)Rabin, Peyr{\'e}, Delon, and
  Bernot]{rabin2011wasserstein}
Julien Rabin, Gabriel Peyr{\'e}, Julie Delon, and Marc Bernot.
\newblock Wasserstein barycenter and its application to texture mixing.
\newblock In \emph{International conference on scale space and variational
  methods in computer vision}, pages 435--446. Springer, 2011.

\bibitem[Bonneel et~al.(2015)Bonneel, Rabin, Peyr{\'e}, and
  Pfister]{bonneel2015sliced}
Nicolas Bonneel, Julien Rabin, Gabriel Peyr{\'e}, and Hanspeter Pfister.
\newblock Sliced and radon wasserstein barycenters of measures.
\newblock \emph{Journal of Mathematical Imaging and Vision}, 51:\penalty0
  22--45, 2015.

\bibitem[Liutkus et~al.(2019)Liutkus, Simsekli, Majewski, Durmus, and
  St{\"o}ter]{liutkus2019sliced}
Antoine Liutkus, Umut Simsekli, Szymon Majewski, Alain Durmus, and
  Fabian-Robert St{\"o}ter.
\newblock Sliced-wasserstein flows: Nonparametric generative modeling via
  optimal transport and diffusions.
\newblock In \emph{International Conference on machine learning}, pages
  4104--4113. PMLR, 2019.

\bibitem[Makkuva et~al.(2020)Makkuva, Taghvaei, Oh, and
  Lee]{makkuva2020optimal}
Ashok Makkuva, Amirhossein Taghvaei, Sewoong Oh, and Jason Lee.
\newblock Optimal transport mapping via input convex neural networks.
\newblock In \emph{International Conference on Machine Learning}, pages
  6672--6681. PMLR, 2020.

\bibitem[Amos et~al.(2017)Amos, Xu, and Kolter]{amos2017input}
Brandon Amos, Lei Xu, and J~Zico Kolter.
\newblock Input convex neural networks.
\newblock In \emph{International Conference on Machine Learning}, pages
  146--155. PMLR, 2017.

\bibitem[Korotin et~al.(2021{\natexlab{a}})Korotin, Egiazarian, Asadulaev,
  Safin, and Burnaev]{korotin2021wasserstein}
Alexander Korotin, Vage Egiazarian, Arip Asadulaev, Alexander Safin, and Evgeny
  Burnaev.
\newblock Wasserstein-2 generative networks.
\newblock In \emph{International Conference on Learning Representations},
  2021{\natexlab{a}}.
\newblock URL \url{https://openreview.net/forum?id=bEoxzW_EXsa}.

\bibitem[Rout et~al.(2021)Rout, Korotin, and Burnaev]{rout2021generative}
Litu Rout, Alexander Korotin, and Evgeny Burnaev.
\newblock Generative modeling with optimal transport maps.
\newblock In \emph{International Conference on Learning Representations}, 2021.

\bibitem[Gushchin et~al.(2023)Gushchin, Kolesov, Korotin, Vetrov, and
  Burnaev]{gushchin2023entropic}
Nikita Gushchin, Alexander Kolesov, Alexander Korotin, Dmitry~P Vetrov, and
  Evgeny Burnaev.
\newblock Entropic neural optimal transport via diffusion processes.
\newblock \emph{Advances in Neural Information Processing Systems},
  36:\penalty0 75517--75544, 2023.

\bibitem[Korotin et~al.(2021{\natexlab{b}})Korotin, Li, Genevay, Solomon,
  Filippov, and Burnaev]{korotin2021neural}
Alexander Korotin, Lingxiao Li, Aude Genevay, Justin~M Solomon, Alexander
  Filippov, and Evgeny Burnaev.
\newblock Do neural optimal transport solvers work? a continuous wasserstein-2
  benchmark.
\newblock \emph{Advances in neural information processing systems},
  34:\penalty0 14593--14605, 2021{\natexlab{b}}.

\bibitem[An et~al.(2019)An, Guo, Lei, Luo, Yau, and Gu]{an2019ae}
Dongsheng An, Yang Guo, Na~Lei, Zhongxuan Luo, Shing-Tung Yau, and Xianfeng Gu.
\newblock Ae-ot: A new generative model based on extended semi-discrete optimal
  transport.
\newblock \emph{International Conference on Learning Representations}, 2019.

\bibitem[An et~al.(2020)An, Guo, Zhang, Qi, Lei, and Gu]{an2020ae}
Dongsheng An, Yang Guo, Min Zhang, Xin Qi, Na~Lei, and Xianfang Gu.
\newblock Ae-ot-gan: Training gans from data specific latent distribution.
\newblock In \emph{European Conference on Computer Vision}, pages 548--564.
  Springer, 2020.

\bibitem[Hamfeldt and Turnquist(2022)]{hamfeldt2022convergence}
Brittany~Froese Hamfeldt and Axel~GR Turnquist.
\newblock A convergence framework for optimal transport on the sphere.
\newblock \emph{Numerische Mathematik}, 151\penalty0 (3):\penalty0 627--657,
  2022.

\bibitem[Quellmalz et~al.(2023)Quellmalz, Beinert, and
  Steidl]{quellmalz2023sliced}
Michael Quellmalz, Robert Beinert, and Gabriele Steidl.
\newblock Sliced optimal transport on the sphere.
\newblock \emph{Inverse Problems}, 39\penalty0 (10):\penalty0 105005, 2023.

\bibitem[Nickel and Kiela(2017)]{nickel2017poincare}
Maximillian Nickel and Douwe Kiela.
\newblock Poincar{\'e} embeddings for learning hierarchical representations.
\newblock \emph{Advances in neural information processing systems}, 30, 2017.

\bibitem[Balazevic et~al.(2019)Balazevic, Allen, and
  Hospedales]{balazevic2019multi}
Ivana Balazevic, Carl Allen, and Timothy Hospedales.
\newblock Multi-relational poincar{\'e} graph embeddings.
\newblock \emph{Advances in neural information processing systems}, 32, 2019.

\bibitem[Alvarez-Melis et~al.(2020)Alvarez-Melis, Mroueh, and
  Jaakkola]{alvarez2020unsupervised}
David Alvarez-Melis, Youssef Mroueh, and Tommi Jaakkola.
\newblock Unsupervised hierarchy matching with optimal transport over
  hyperbolic spaces.
\newblock In \emph{International Conference on Artificial Intelligence and
  Statistics}, pages 1606--1617. PMLR, 2020.

\bibitem[Ganea et~al.(2018)Ganea, B{\'e}cigneul, and
  Hofmann]{ganea2018hyperbolic}
Octavian Ganea, Gary B{\'e}cigneul, and Thomas Hofmann.
\newblock Hyperbolic neural networks.
\newblock \emph{Advances in neural information processing systems}, 31, 2018.

\bibitem[Khrulkov et~al.(2020)Khrulkov, Mirvakhabova, Ustinova, Oseledets, and
  Lempitsky]{khrulkov2020hyperbolic}
Valentin Khrulkov, Leyla Mirvakhabova, Evgeniya Ustinova, Ivan Oseledets, and
  Victor Lempitsky.
\newblock Hyperbolic image embeddings.
\newblock In \emph{Proceedings of the IEEE/CVF conference on computer vision
  and pattern recognition}, pages 6418--6428, 2020.

\bibitem[Ermolov et~al.(2022)Ermolov, Mirvakhabova, Khrulkov, Sebe, and
  Oseledets]{ermolov2022hyperbolic}
Aleksandr Ermolov, Leyla Mirvakhabova, Valentin Khrulkov, Nicu Sebe, and Ivan
  Oseledets.
\newblock Hyperbolic vision transformers: Combining improvements in metric
  learning.
\newblock In \emph{Proceedings of the IEEE/CVF conference on computer vision
  and pattern recognition}, pages 7409--7419, 2022.

\bibitem[Desai et~al.(2023)Desai, Nickel, Rajpurohit, Johnson, and
  Vedantam]{desai2023hyperbolic}
Karan Desai, Maximilian Nickel, Tanmay Rajpurohit, Justin Johnson, and
  Shanmukha~Ramakrishna Vedantam.
\newblock Hyperbolic image-text representations.
\newblock In \emph{International Conference on Machine Learning}, pages
  7694--7731. PMLR, 2023.

\bibitem[Chami et~al.(2020)Chami, Wolf, Juan, Sala, Ravi, and
  R{\'e}]{chami2020low}
Ines Chami, Adva Wolf, Da-Cheng Juan, Frederic Sala, Sujith Ravi, and
  Christopher R{\'e}.
\newblock Low-dimensional hyperbolic knowledge graph embeddings.
\newblock In \emph{Proceedings of the 58th Annual Meeting of the Association
  for Computational Linguistics}, pages 6901--6914. Association for
  Computational Linguistics, 2020.

\bibitem[Benedetti and Petronio(1992)]{benedetti1992lectures}
Riccardo Benedetti and Carlo Petronio.
\newblock \emph{Lectures on hyperbolic geometry}.
\newblock Springer Science \& Business Media, 1992.

\bibitem[Fillastre(2013)]{fillastre2013fuchsian}
Fran{\c{c}}ois Fillastre.
\newblock Fuchsian convex bodies: basics of brunn--minkowski theory.
\newblock \emph{Geometric and Functional Analysis}, 23\penalty0 (1):\penalty0
  295--333, 2013.

\bibitem[Bertrand(2014)]{bertrand2014prescription}
J{\'e}r{\^o}me Bertrand.
\newblock Prescription of gauss curvature on compact hyperbolic orbifolds.
\newblock \emph{Discrete and Continuous Dynamical Systems-Series A}, 2014.

\bibitem[Schneider(2013)]{schneider2013convex}
Rolf Schneider.
\newblock \emph{Convex bodies: the Brunn--Minkowski theory}, volume 151.
\newblock Cambridge university press, 2013.

\bibitem[Lee(2018)]{lee2018introduction}
John~M Lee.
\newblock \emph{Introduction to Riemannian manifolds}, volume~2.
\newblock Springer, 2018.

\bibitem[Boumal(2023)]{boumal2023introduction}
Nicolas Boumal.
\newblock \emph{An introduction to optimization on smooth manifolds}.
\newblock Cambridge University Press, 2023.

\bibitem[Lawson(1977)]{lawson1977software}
Charles~L Lawson.
\newblock Software for c1 interpolation.
\newblock In \emph{Symp. on Mathematical Software}, number JPL-PUBL-77-30,
  1977.

\bibitem[Ratcliffe et~al.(1994)Ratcliffe, Axler, and
  Ribet]{ratcliffe1994foundations}
John~G Ratcliffe, Sheldon Axler, and Kenneth~A Ribet.
\newblock \emph{Foundations of hyperbolic manifolds}, volume 149.
\newblock Springer, 1994.

\bibitem[Jin et~al.(2008)Jin, Kim, Luo, and Gu]{jin2008discrete}
Miao Jin, Junho Kim, Feng Luo, and Xianfeng Gu.
\newblock Discrete surface ricci flow.
\newblock \emph{IEEE Transactions on Visualization and Computer Graphics},
  14\penalty0 (5):\penalty0 1030--1043, 2008.

\bibitem[Kitagawa et~al.(2019)Kitagawa, M{\'e}rigot, and
  Thibert]{kitagawa2019convergence}
Jun Kitagawa, Quentin M{\'e}rigot, and Boris Thibert.
\newblock Convergence of a newton algorithm for semi-discrete optimal
  transport.
\newblock \emph{Journal of the European Mathematical Society}, 21\penalty0
  (9):\penalty0 2603--2651, 2019.

\bibitem[Do~Carmo(2016)]{do2016differential}
Manfredo~P Do~Carmo.
\newblock \emph{Differential geometry of curves and surfaces: revised and
  updated second edition}.
\newblock Courier Dover Publications, 2016.

\end{thebibliography}

%\begin{thebibliography}{9}
%%  Use \bibitem{r1} or \bibitem[Surname(2010)]{r1} (for authoryear case)
%
%\bibitem{}
%
%\end{thebibliography}

\end{document}